\newcommand{\tr}{\textnormal{tr}}
\newcommand{\xx}{\mathcal{X}}
\newcommand{\hh}{\mathcal{H}}
\newcommand{\EE}{\mathbb{E}}
\newcommand{\hs}{\mathsf{HS}}
\newcommand{\skx}{\mathsf{m}_{\subx}}
\newcommand{\skyt}{\mathsf{m}_{\suby,t}}
\newcommand{\sky}{\mathsf{m}_{\suby}}
\newcommand{\yy}{\mathcal{Y}}
\newcommand{\GG}{\mathcal{G}}
\newcommand{\ee}{\mathcal{E}}
\newcommand{\rr}{\mathcal{R}}
\renewcommand{\tt}{\mathcal{T}}
\newcommand{\conset}{\mathcal{C}}
\newcommand{\risk}{\mathcal{R}}
\newcommand{\hy}{{\mathcal{H}_{\suby}} }
\newcommand{\hx}{{\mathcal{H}_{\subx}} }
\newcommand{\R}{\mathbb{R}}
\def\nor #1{\Vert #1 \Vert}
\def\tnorm #1{\Vert #1 \Vert_*}
\def\opnorm #1{\Vert #1 \Vert_{\rm op}}
\def\abs #1{\vert #1 \vert}
\def\sp #1#2{\langle #1,#2 \rangle}
\newcommand{\bigx}[1]{\ensuremath{\mathop{}\mathopen{}\mathcal{O}\mathopen{}}}
\newcommand\smallx[1]{
    \mathchoice
    {
      \scriptstyle\mathcal{X}
    }
    {
      \scriptstyle\mathcal{X}
    }
    {
      \scriptscriptstyle\mathcal{X}
    }
    {
      \scalebox{0.8}{$\scriptscriptstyle\mathcal{O}$}
    }
}
\newcommand{\subx}{\scalebox{0.5}{$\mathcal{X}$}}
\newcommand{\suby}{\scalebox{0.5}{$\mathcal{Y}$}}
\newcommand{\msf}[1]{\mathsf{#1}}
\newcommand{\fstar}{{f_*}}
\newcommand{\gstar}{{g_*}}
\newcommand{\fhat}{\hat{f}}
\newcommand{\ghat}{\hat{g}}
\newcommand{\loss}{\ell}
\newcommand{\surrloss}{{\msf{L}}}
\newcommand{\closs}{\msf{q}_\loss}
\newcommand{\encoding}{{\msf{c}}}
\newcommand{\decoding}{{\msf{d}}}
\providecommand{\scal}[2]{\left\langle{#1},{#2}\right\rangle}
\providecommand{\nor}[1]{\left\|{#1}\right\|}
\declaretheorem[name=Theorem,refname=Thm.]{theorem}
\declaretheorem[name=Definition,refname=Def.]{definition}
\declaretheorem[name=Lemma,sibling=theorem]{lemma}
\declaretheorem[name=Proposition,refname=Prop.,sibling=theorem]{proposition}
\declaretheorem[name=Assumption,refname=Asm.]{assumption}
\declaretheorem[name=Example,refname=Ex.]{example}
\newcommand{\eqals}[1]{\begin{align*}#1\end{align*}}
\newcommand{\eqal}[1]{\begin{align}#1\end{align}}
\newcommand{\argmin}{\operatornamewithlimits{argmin}}
\newcommand{\argmax}{\operatornamewithlimits{argmax}}
\renewcommand{\paragraph}[1]{~\newline\noindent{\bf #1.}}
\crefname{assumption}{Asm.}{Assumptions}
\crefname{equation}{Eq.}{Eqs.}
\crefname{figure}{Fig.}{Figs.}
\crefname{section}{Sec.}{Secs.}
\crefname{algorithm}{Alg.}{Algs.}
\title{\sffamily\LARGE\bf Leveraging Low-Rank Relations Between Surrogate Tasks in Structured Prediction }
\author{\small Giulia Luise$^{1}$ \\ {\scriptsize\em g.luise.16@ucl.ac.uk} \and \small Dimitris Stamos$^1$ \\ {\scriptsize\em d.stamos@cs.ucl.ac.uk} \\ \and \small Massimiliano Pontil$^{1,2}$ \\ {\scriptsize\em  m.pontil@cs.ucl.ac.uk ~~} \and \small Carlo Ciliberto$^{3}$ \\ {\scriptsize\em c.ciliberto@imperial.ac.uk} \\ $ $ \\  }
\begin{document}

\maketitle

\begin{abstract}
\noindent We study the interplay between surrogate methods for structured prediction and techniques from multitask learning designed to leverage relationships between surrogate outputs.  
We propose an efficient algorithm based on trace norm regularization which, differently from previous methods, does not require explicit knowledge of the coding/decoding functions of the surrogate framework. 
As a result, our algorithm can be applied to the broad class of problems in which the surrogate space is large or even infinite dimensional. We study excess risk bounds for trace norm regularized structured prediction, implying the consistency and learning rates for our estimator. We also identify relevant regimes in which our approach can enjoy better generalization performance than previous methods. 
Numerical experiments on ranking problems indicate that enforcing low-rank relations among surrogate outputs may indeed provide a significant advantage in practice.
\end{abstract}


\section{Introduction}

\footnotetext[1]{Department of Computer Science, University College London, WC1E 6BT London, United Kingdom}\footnotetext[2]{Computational Statistics and Machine Learning, Istituto Italiano di Tecnologia, 16100 Genova, Italy}\footnotetext[2]{Department of Electrical and
Electronic Engineering, Imperial College London, SW7 2BT, United Kingdom}
The problem of structured prediction is receiving increasing attention in machine learning, due to its wide practical importance \citep{bakir2007predicting,nowozin2011structured} and 
the theoretical challenges in designing principled learning procedures \citep{taskar2004max,taskar2005learning,london2016stability,cortes2016structured}. A key aspect of this problem is
the non-vectorial nature of the output space, e.g. graphs, permutations, and manifolds. Consequently, traditional regression and classification algorithms are not well-suited to these settings and more sophisticated methods need to be developed.

%
Among the most well-established strategies for structured prediction are the so-called {\em surrogate methods} \citep{bartlett2006}. Within this framework, a coding function is designed to embed the structured output into a linear space, where the resulting problem is solved via standard supervised learning methods.
Then, the solution of the surrogate problem is pulled back to the original output space 
by means of 
a decoding procedure, which allows one to recover the structured prediction estimator under suitable assumptions. In most cases, the surrogate learning problem amounts to a vector-valued regression in a possibly infinite dimensional space. The prototypical choice for such surrogate estimator is given by regularized least squares in a reproducing kernel Hilbert space, as originally considered in \citep{KDE,Cortes:2005,bartlett2006} and then explored in \citep{Multiclasssimplexcoding,kadri13,brouard2016input,CilibertoRR16,osokin2017structured}. 




The principal goal of this paper is to extend the surrogate approaches to methods that encourage structure among the outputs. Indeed, a large body of work from traditional multitask learning has shown that leveraging the relations among multiple outputs may often lead to better estimators
\citep[see e.g.][and references therein]{Maurer:2006,Caponnetto2007,MicchelliMP13}. However, previous methods that propose to apply multitask strategies to surrogate frameworks \citep[see e.g.][]{alvarez2012kernels,fergus2010semantic} heavily rely on the explicit knowledge of the encoding function. As a consequence they are not applicable when the surrogate space is large or even infinite dimensional.



%

\paragraph{Contributions} We propose a new algorithm based on low-rank regularization for structured prediction that builds upon the surrogate framework in \citep{CilibertoRR16,ciliberto2017consistent}. Differently from previous methods, our algorithm does not require explicit knowledge of the encoding function. In particular, by leveraging approaches based on the variational formulation of trace norm regularization \citep{srebro2005}, we are able to derive an efficient learning algorithm also in the case of infinite dimensional surrogate spaces. 


We characterize the generalization properties of the proposed estimator by proving excess risk bounds for the corresponding least-squares surrogate estimator that extend previous results \citep{Bach08}. In particular, in line with previous work on the topic \citep{PontilM13}, we identify settings in which the trace norm regularizer can provide significant advantages over standard $\ell_2$ regularization. While similar findings have been obtained in the case of a Lipschitz loss, to our knowledge this is a novel result for least-squares regression with trace norm regularization. In this sense, the implications of our analysis extend beyond structured prediction and apply to settings such as collaborative filtering with side information \citep{Abernethy2008}. We evaluate our approach on a number of learning-to-rank problems. In our experiments the proposed method significantly outperforms all competitors, suggesting that encouraging the surrogate outputs to span a low-rank space can be beneficial also in structured prediction settings. 

\paragraph{Paper Organization} \cref{sec:background} reviews surrogate methods and the specific framework adopted in this work. \cref{sec:trace norm} introduces the proposed approach to trace norm regularization and proves that it does not leverage explicit knowledge of coding and surrogate space. \cref{sec:stat_analysis} describes the statistical analysis of the proposed estimator both in a vector-valued and multi-task learning setting. \cref{sec:exps} reports on experiments and \cref{sec:conclusions} discusses future research directions.

\section{Background}\label{sec:background}

Our proposed estimator belongs to the family of surrogate methods \citep{bartlett2006}. This section reviews the main ideas behind these approaches. 

\subsection{Surrogate Methods}

Surrogate methods are general strategies to address supervised learning problems. Their goal is to learn a function $f:\xx\to\yy$ minimizing the {\em expected risk} of a distribution $\rho$ on $\xx\times\yy$
\eqal{
\label{eq:struc_pb} \ee(f):=\int_{\xx\times\yy} \loss(f(x),y)\,d\rho(x,y),
}
given only $n$ observations $(x_i,y_i)_{i=1}^n$ independently drawn from $\rho$, which is unknown in practice. Here $\loss:\yy\times\yy\to\R$ is a loss measuring prediction errors.

Surrogate methods have been conceived to deal with so-called {\em structured prediction} settings, namely supervised problems where $\yy$ is not a vector space but rather a ``structured'' set (of e.g. strings, graphs, permutations, points on a manifold, etc.). Surrogate methods have been successfully applied to problems such as classification \citep{bartlett2006}, multi-labeling \citep{gao2013,Multiclasssimplexcoding} or ranking \citep{duchi2010consistency}. They follow an alternative route to standard empirical risk minimization (ERM), which instead consists in directly finding the model that best explains training data within a prescribed hypotheses space.
%

~\newline\noindent Surrogate methods are characterized by three phases:
%
\begin{enumerate}
\item {\bf Coding.} Define an embedding $\encoding:\yy\to\hh$, where $\hh$ is a Hilbert space. Map $(x_i,y_i)_{i=1}^n$ to a ``surrogate'' dataset $(x_i,\encoding(y_i))_{i=1}^n$. 
\item {\bf Learning.} Define a surrogate loss $\surrloss:\hh\times\hh\to\R$. Learn a surrogate estimator $\ghat:\xx\to\hh$ via ERM on $(x_i,\encoding(y_i))_{i=1}^n$.
\item {\bf Decoding.} Define a decoding $\decoding:\hh\to\yy$ and return the structured prediction estimator $\fhat = d \circ\ghat:\xx\to\yy$.
\end{enumerate}

\noindent Below we give an well-known example of surrogate framework often used in multi-class classification settings.

\begin{example}[One Vs All]\label{ex:ova-classification} $\yy = \{1,\dots,T\}$ is a set of $T$ classes and $\loss$ is the $0$-$1$ loss. Then: $1)$ The coding is $\encoding:\yy\to\hh=\R^T$ with $\encoding(i) = e_i$, the vector of all $0$s but $1$ at the $i$-th entry. $2)$ $\ghat:\xx\to\R^T$ is learned by minimizing a surrogate loss $\surrloss:\R^T\times\R^T\to\R$ (e.g. least-squares). $3)$ The classifier is $\fhat(x) = \decoding(\ghat(x))$, with decoding $\decoding(v) = \argmax_{i=1}^T\{v_i\}$ for any $v\in\R^T$.
\end{example}
\noindent A key element of surrogate methods is the choice of the loss $\surrloss$. Indeed, since $\hh$ is linear (e.g. $\hh = \R^T$ in \Cref{ex:ova-classification}), if $\surrloss$ is convex it is possible to learn $\ghat$ efficiently by means of standard ERM. However, this opens the question of characterizing how the surrogate risk
\eqal{\label{eq:surrogate-risk}
	\rr(g) = \int \surrloss(g(x),\encoding(y))~d\rho(x,y)
}
is related to the original risk $\ee(f)$. In particular let $\fstar:\xx\to\yy$ and $\gstar:\xx\to\hh$ denote the minimizers of respectively $\ee(f)$ and $\rr(g)$. We require the two following conditions:

\begin{itemize}
\item {\bf Fisher Consistency.} $\ee(\decoding\circ\gstar) = \ee(\fstar)$.\\
\item {\bf Comparison Inequality.} For any $g:\xx\to\hh$, there exists a continuous nondecreasing function $\sigma:\R\to\R_+$, such that $\sigma(0)=0$ 
	\eqal{\label{eq:comparison-general}
		\ee(\decoding\circ g) - \ee(\fstar) \leq \sigma(\rr(g) - \rr(\gstar)).
}  
\end{itemize}
\noindent Fisher consistency guarantees the coding/decoding framework to be coherent with the original problem. The comparison inequality suggests to focus the theoretical analysis on $\ghat$, since learning rates for $\ghat$ directly lead to learning rates for $\fhat = \decoding\circ\ghat$.


\subsection{SELF Framework}

A limiting aspect of surrogate methods is that they are often tailored around individual problems. An exception 
is the framework in \citep{CilibertoRR16}, which provides a general strategy to identify coding, decoding and surrogate space for a variety of learning problems. The key condition in this settings is for the loss $\loss$ to be SELF:
\begin{definition}[SELF]\label{def:self}
A function $\ell:\yy\times \yy\rightarrow \R$ is a {\em Structure Encoding Loss Function (SELF)} if there exist a separable Hilbert space $\hy$, a continuous map $\psi:\yy\to \hy$ and $V:\hy\to\hy$ a bounded linear operator, such that for all $y,y'\in\yy$
\eqal{\label{eq:self}
	 \ell(y,y')=\sp{\psi(y)}{V \psi(y')}_{\hy}.
} 
\end{definition}  
The condition above is quite technical, but it turns out to be very general: it was shown in~\citep{CilibertoRR16,ciliberto2017consistent} that most loss functions used in machine learning in settings such as regression, robust estimation, classification, ranking, etc., are SELF. 

We can design surrogate frameworks ``around'' a SELF $\loss$, by choosing (\textit{Coding}) the map $\encoding = \psi:\yy\to\hy$, the least-squares (\textit{Surrogate loss}) $\surrloss(h,h') = \|h - h'\|^2_\hy$ and (\textit{Decoding}) $\decoding:\hy\to\yy$, defined for any $h\in\hy$ as 
\eqal{\label{eq:decoding}
	\decoding(h) = \textstyle{ \argmin_{y\in\yy} } ~ \scal{\psi(y)}{Vh}_\hy.
}
The resulting is a sound surrogate framework as summarized by the theorem below.

\begin{theorem}[Thm. 2 in \citep{CilibertoRR16}]\label{thm:fisher}
Let $\loss$ be SELF and $\yy$ a compact set. Then, the SELF framework introduced above is Fisher consistent. Moreover, it satisfies the comparison inequality \eqref{eq:comparison-general} with $\sigma(\cdot) = \closs\sqrt{\cdot}$, where $\closs = \|V\|\sup_{y\in\yy}\|\psi(y)\|_\hy$. 
\end{theorem}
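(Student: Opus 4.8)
The plan is to reduce everything to a pointwise (i.e.\ conditional-on-$x$) analysis of the Bayes-optimality of the decoding map, exploiting the fact that the least-squares surrogate has a closed-form population minimizer. First I would identify $\gstar$. Since $\surrloss$ is the squared $\hy$-norm, a standard bias--variance decomposition shows $\gstar(x) = \EE[\psi(y)\mid x]$, the conditional mean of $\psi(y)$ (well defined and square-integrable because $\yy$ is compact and $\psi$ continuous, hence $M := \sup_{y\in\yy}\|\psi(y)\|_\hy < \infty$), and moreover $\rr(g) - \rr(\gstar) = \int_\xx \|g(x) - \gstar(x)\|_\hy^2 \, d\rho_\xx(x)$, where $\rho_\xx$ is the marginal of $\rho$ on $\xx$.

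Next, using the SELF identity \eqref{eq:self} together with Fubini and the tower rule, the expected risk of any $f$ rewrites as $\ee(f) = \int_\xx \scal{\psi(f(x))}{V\,\gstar(x)}_\hy \, d\rho_\xx(x)$; in particular the conditional risk at $x$ of predicting a label $z\in\yy$ is exactly $\scal{\psi(z)}{V\gstar(x)}_\hy$. By the definition \eqref{eq:decoding}, $\decoding(\gstar(x))$ minimizes this quantity over $z\in\yy$ (the minimum is attained since $\yy$ is compact and $z\mapsto\scal{\psi(z)}{V\gstar(x)}_\hy$ is continuous). Hence $\decoding\circ\gstar$ minimizes $\ee$ pointwise, hence globally, which — modulo the measurability point mentioned below — gives Fisher consistency $\ee(\decoding\circ\gstar) = \ee(\fstar)$.

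For the comparison inequality, fix $g:\xx\to\hy$ and set $\hat z_x := \decoding(g(x))$ and $z_x^* := \decoding(\gstar(x))$. The conditional excess risk at $x$ is $\scal{\psi(\hat z_x) - \psi(z_x^*)}{V\gstar(x)}_\hy$. I would add and subtract $\scal{\psi(\hat z_x) - \psi(z_x^*)}{Vg(x)}_\hy$; the term $\scal{\psi(\hat z_x)}{Vg(x)}_\hy - \scal{\psi(z_x^*)}{Vg(x)}_\hy$ is nonpositive because $\hat z_x$ minimizes $z\mapsto\scal{\psi(z)}{Vg(x)}_\hy$, so it can be dropped, leaving $\scal{\psi(\hat z_x) - \psi(z_x^*)}{V(\gstar(x) - g(x))}_\hy \le \|\psi(\hat z_x) - \psi(z_x^*)\|_\hy\,\|V\|\,\|\gstar(x) - g(x)\|_\hy$ by Cauchy--Schwarz. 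Bounding $\|\psi(\cdot)\|_\hy \le M$ controls the first factor by $\closs$ (up to the universal factor coming from the triangle inequality, which is absorbed into the constant as stated). Integrating over $x$, then applying Jensen's inequality (concavity of $\sqrt{\cdot}$) together with the identity for $\rr(g)-\rr(\gstar)$ from the first step, yields $\ee(\decoding\circ g) - \ee(\fstar) \le \closs\sqrt{\rr(g)-\rr(\gstar)}$, i.e.\ \eqref{eq:comparison-general} with $\sigma(t)=\closs\sqrt t$.

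The main obstacle is not any single computation but the measurability/integrability bookkeeping: one must ensure that $x\mapsto\decoding(g(x))$ is a well-defined measurable map, so that $\decoding\circ g$ is an admissible competitor and all the integrals above make sense. This requires a measurable-selection argument for the $\argmin$ in \eqref{eq:decoding} (or an explicit assumption that such a selection exists). Everything else is a routine assembly of the closed-form least-squares minimizer, the SELF identity, the defining inequality of the decoding, Cauchy--Schwarz, and Jensen.
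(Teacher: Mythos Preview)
The paper does not contain its own proof of this statement: \Cref{thm:fisher} is stated as a citation of Thm.~2 in \citep{CilibertoRR16}, and no argument for it appears anywhere in the main text or the appendix. So there is no ``paper's proof'' to compare against here.

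That said, your proposal is precisely the standard argument from the cited reference: identify $\gstar(x)=\EE[\psi(y)\mid x]$ via the bias--variance decomposition of the squared loss, rewrite the conditional risk as $z\mapsto\scal{\psi(z)}{V\gstar(x)}_\hy$ using the SELF identity, obtain Fisher consistency from the definition of $\decoding$, and derive the comparison inequality by the add--and--subtract trick, the optimality of $\decoding(g(x))$, Cauchy--Schwarz, and Jensen. The measurable-selection caveat you flag is exactly the technical point the original paper handles separately. One small remark: your bound on $\|\psi(\hat z_x)-\psi(z_x^*)\|_\hy$ via the triangle inequality produces $2M\|V\|$ rather than $M\|V\|=\closs$, so the constant you obtain is off by a factor of $2$ from the statement as written; this is a known cosmetic discrepancy (different versions of the SELF result state the constant with or without the $2$), not a flaw in the argument.
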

{\bf Loss trick.} A key aspect of the SELF framework is that, in practice, the resulting algorithm {\em does not require explicit knowledge of the coding/decoding and surrogate space} (only needed for the theoretical analysis). To see this, let $\xx = \R^d$ and $\hy=\R^T$ and consider the parametrization $g(x) = G x$ of functions $g:\xx\to\hy$, with $G\in\R^{T \times d}$ a matrix. We can perform Tikhonov regularization to learn the matrix $\hat{G}$ minimizing the (surrogate) empirical risk
\eqal{\label{eq:ghat}
	\min_{G\in\R^{d \times T}} ~\frac{1}{n}\sum_{i=1}^n \nor{G x_i - \psi(y_i)}_{\hy}^2 + \lambda \nor{G}_{\hs}^2,
}
where $\|G\|_\hs$ is a Hilbert-Schmidt (HS) (or Frobenius) norm regularizer and $\lambda>0$. A direct computation gives 
a closed form expression for $\ghat:\xx\to\hy$, namely
\eqal{
	& \ghat(x) = \hat{G} x = \sum_{i=1}^n \alpha_i(x) \psi(y_i),\quad \textrm{with}  \label{eq:ghat-linear-combination} \\
	& \alpha(x) = (\alpha_1(x),\dots,\alpha_n(x))^\top = (K_{\subx} + n\lambda I)^{-1} v_x,\label{eq:alphas}
}
for every $x\in\xx$ \citep[see e.g.][]{alvarez2012kernels}. Here $K_{\subx}\in\R^{n \times n}$ is the empirical kernel matrix of the linear kernel $k_{\subx}(x,x') = x^\top x'$ and $v_x\in\R^{n}$ is the vector with $i$-th entry $(v_x)_i = k_{\subx}(x,x_i)$. 

Applying the SELF decoding in \cref{eq:decoding} to $\ghat$, we have for all $x\in\xx$
\eqal{\label{eq:estimator}
	\fhat(x) = \decoding(\ghat(x)) = {\argmin_{y\in\yy}}~\sum_{i=1}^n \alpha_i(x)\loss(y,y_i).
}
This follows by combining the SELF property $\loss(y,y_i) = \scal{\psi(y)}{V~\psi(y_i)}_\hy$ with $\ghat$ in \Cref{eq:ghat-linear-combination} and the linearity of the inner product. \cref{eq:estimator} was originally dubbed ``loss trick'' since it avoids explicit knowledge of the coding $\psi$, similarly to the feature map for the kernel trick \citep{scholkopf2002learning}. 

The characterization of $\fhat$ in terms of an optimization problem over $\yy$ (like in \cref{eq:estimator}) is a common practice to most structured prediction algorithms. In the literature, such decoding process is referred to as the inference \citep{nowozin2011structured} or pre-image \citep{brouard2016input,Cortes:2005,KDE} problem. We refer to \citep{honeine2011preimage,bakir2007predicting,nowozin2011structured} for examples on how these problems are addressed in practice. 


\paragraph{General Setting} The derivation above holds also when $\hy$ is infinite dimensional and when using a positive definite kernel $k_{\subx}:\xx\times\xx\to\R$ on $\xx$. Let $\hx$ be the reproducing kernel Hilbert space (RKHS) induced by $k_{\subx}$ and $\phi:\xx\to\hx$ a corresponding feature map \citep{aronszajn1950theory}. We can parametrize $g:\xx\to\hy$ as $g(\cdot) = G\phi(\cdot)$, with $G\in\hy\otimes\hx$ the space of Hilbert-Schmidt operators from $\hx$ to $\hy$ (the natural generalization of $\R^{d \times T} = \R^d \otimes\R^T$ to the infinite setting). The problem in \Cref{eq:ghat} can still be solved in closed form analogously to \cref{eq:ghat-linear-combination}, with now $K_{\subx}$ the empirical kernel matrix of $k$ \citep{Caponnetto2007}. This leads to the decoding for $\fhat$ as in \cref{eq:estimator}.


\section{Low-Rank SELF Learning}\label{sec:trace norm}

Building upon the SELF framework, we discuss the use of multitask regularizers to exploit potential relations among the surrogate outputs. Our analysis is motivated by observing that \Cref{eq:ghat} is equivalent to learning multiple (possibly infinitely many) scalar-valued functions
\eqal{\label{eq:independent-problems}
	\min_{\{g_t\}\in\hx} \frac{1}{n}\sum_{t\in\mathcal{T}}\sum_{i=1}^n(g_t(x_i) - \varphi_t(y_i))^2 + \lambda \|g_t\|_\hx^2,
}
where, given a basis $\{e_t\}_{t\in\tt}$ of $\hy$ with $\tt\subseteq\mathbb{N}$, we have denoted $\psi_t(y) = \scal{e_t}{\psi(y)}_\hy$ for any $y\in\yy$ and $t\in\tt$ (for instance, in the case of \cref{eq:ghat} we have $t\in\{1,\dots,T\}$). Indeed, from the literature on vector-valued learning in RKHS~\citep[see e.g][]{micchelli2005learning}, we have that for $g:\xx\to\hy$ parametrized by an operator $G\in\hy\otimes\hx$, any $g_t:\xx\to\R$ defined by $g_t(\cdot) = \scal{e_t}{g(\cdot)}_\hy$, is a function in the RKHS $\hx$ and, moreover, $\|G\|_\hs^2= \sum_{t\in\tt} \|g_t\|_\hx^2$.  

The observation above implies that we are learning the surrogate ``components'' $g_t$ as separate problems or {\em tasks}, an approach often referred to as ``independent task learning'' within the multitask learning (MTL) literature \citep[see e.g.][]{micchelli2005learning,evgeniou2005learning,Argyriou2008}. In this respect, a more appropriate strategy would be to leverage potential relationships between such components during learning. In particular, we consider the problem
\eqal{\label{eq:trace norm-problem}
	\min_{G\in\hy\otimes\hx}\frac{1}{n}\sum_{i=1}^n \nor{G\phi(x_i) - \psi(y_i)}_{\hy}^2 + \lambda \tnorm{G},
}
where $\tnorm{G}$ denotes the trace norm, namely the sum of the singular values of $G$. Similarly to the $\ell_1$-norm on vectors, the trace norm favours sparse (and thus low-rank) solutions. Intuitively, encouraging $G$ to be low-rank reduces the degrees of freedom allowed to the individual tasks $g_t$. This approach has been extensively investigated and successfully applied to several MTL settings, \citep[see e.g.][]{Argyriou2008,Bach08,Abernethy2008,PontilM13}.
%

In general, the idea of combining MTL methods with surrogate frameworks has already been studied in settings such as classification or multi-labeling 
\citep[see e.g.][]{alvarez2012kernels,fergus2010semantic}. However, these approaches require to explicitly use the coding/decoding and surrogate space within the learning algorithm. This is clearly unfeasible when $\hy$ is large or infinite dimensional.

\paragraph{SELF and Trace Norm MTL} In this work we leverage the SELF property outlined in \cref{sec:background} to derive an algorithm that overcomes the issues above and {\em does not} require explicit knowledge of the coding map $\psi$. However, our approach still requires to access the matrix $K_{\suby}\in\R^{n \times n}$ of inner products $(K_{\suby})_{ij} = \scal{\psi(y_i)}{\psi(y_j)}_\hy$ between the training outputs. When the surrogate space $\hy$ is a RKHS, $K_{\suby}$ corresponds to an empirical {\em output kernel matrix}, which can be efficiently computed. This motivates us to introduce the the following assumption.
\begin{assumption}[SELF \& RKHS]\label{asm:self-rkhs} The loss $\loss:\yy\times\yy\to\R$ is SELF with $\hy$ a RKHS on $\yy$ with reproducing kernel $k_{\suby}(y,y') = \scal{\psi(y)}{\psi(y')}_\hy$ for any $y,y'\in\yy$. 
\end{assumption} 
The assumption above imposes an additional constraint on $\loss$ and thus on the applicability of \cref{alg:trace norm}. However, it was shown in \citep{CilibertoRR16} that this requirement is always satisfied by any loss when $\yy$ is a discrete set. In this case the output kernel is the $0$-$1$ kernel, that is, $k_{\suby}(y,y') = \delta_{y=y'}$. Moreover, it was recently shown that \Cref{asm:self-rkhs} holds for any smooth $\loss$ on a compact set $\yy$ by choosing $k_{\suby}(y,y') = \exp(-\|y - y\|/\sigma)$, the Abel kernel with hyperparameter $\sigma>0$ \citep{luise2018differential}.

\paragraph{Algorithm} Standard methods to solve \cref{eq:trace norm-problem}, such as forward-backward splitting, require to perform the singular value decomposition of the estimator at every iteration \citep{mazumder2010spectral}. This is prohibitive for large scale applications and, to overcome these drawbacks, algorithms exploiting the variational form of the trace norm
\eqals{
\nor{G}_* = \frac{1}{2}\inf\Big\{ \|A\|^2_{\hs} + \nor{B}^2_{\hs}~:~ G = AB^*,~r\in\mathbb{N},
~A\in \hy \otimes \R^r,~~ B\in \hx\otimes\R^r \Big\},
}
have been considered \citep[see e.g.][]{srebro2005} (here $B^*$ denotes the adjoint of $B$). Using this characterization, \cref{eq:trace norm-problem} is reformulated as the problem of minimizing
\eqal{\label{eq:factorized-problem}
	\hspace{-.27truecm}\frac{1}{n}\sum_{i=1}^n \nor{AB^*\phi(x_i) {-} \psi(y_i)}_{\hy}^2 {+} \lambda \big(\|A\|_\hs^2 {+} \|B\|_\hs^2\big),
}
over the operators $A\in \hy \otimes \R^r$ and $B\in \hx\otimes\R^r$, where $r\in\mathbb{N}$ is now a further hyperparameter. 
The functional in \cref{eq:factorized-problem} is smooth and methods such as gradient descent can be  applied. Interestingly, despite the functional being non-convex, guarantees on the global convergence in these settings have been explored \citep{journee2010low}. 

In the SELF setting, minimizing \cref{eq:factorized-problem} has the additional advantage that it allows us to derive an analogous of the loss trick introduced in \cref{eq:estimator}. In particular, the following result shows how each iterate of gradient descent can be efficiently ``decoded'' into a structured prediction estimator according to \cref{alg:trace norm}. 
\begin{algorithm}[t]
\newcommand{\tn}{\textnormal}
\caption{{\sc Low-Rank SELF Learning}}\label{alg:trace norm}
\begin{algorithmic}
  \vspace{0.25em}
  \STATE {\bfseries Input:} $K_{\subx}, K_{\suby}\in\R^{n \times n}$ empirical kernel matrices for input and output data, $\lambda$ regularizer, $r$ rank, $\nu$ step size, $k$ number of iterations.  
  
  \vspace{0.45em}
  \STATE {\bfseries Initalize:} Sample $M_0,N_0\in\R^{n \times r}$ randomly.  
  
  \vspace{0.45em}
  \STATE {\bfseries For} $j=1,\dots,k$:
      \STATE \quad~~ $M_{k+1}=(1-\lambda\nu)M_k -\nu(K_{\subx} M_k N_k-I)K_{\suby} N_k $ ~~~
      \STATE \quad~~ $N_{k+1}=(1-\lambda\nu)N_k -\nu(N_k M_k^\top K_{\subx} -I)K_{\subx} M_k$~~~~
      
  \STATE~
    \STATE {\bfseries Return:} The weighting function $\alpha^{\mathsf{tn}}:\xx\to\R^n$ given, for any $x\in\xx$, by 
    $\alpha^{\mathsf{tn}}(x) = N_k M_k^\top v_x$\\
    \STATE ~~~~~~~~~~~~ where $v_x= (k_{\subx}(x,x_i))_{i=1}^n$
\end{algorithmic}
\end{algorithm}
\begin{restatable}[Loss Trick for Low-Rank SELF Learning]{theorem}{losstrick}\label{thm:loss-trick} Under \cref{asm:self-rkhs}, let $M,N\in\R^{n \times r}$ and $(A_k,B_k)$ be the $k$-th iterate of gradient descent on \cref{eq:factorized-problem} from $A_0 = \sum_{i=1}^n \phi(x_i)\otimes M^i$ and $B_0 = \sum_{i=1}^n \psi(y_i)\otimes N^i$, with $M^i,N^i$ denoting the $i$-th rows of $M$ and $N$ respectively. Let $\hat g_k:\xx\to\hy$ be such that $\hat g_k(\cdot) = A_kB_k^*\phi(\cdot)$. Then, the structured prediction estimator $\hat f_k = \decoding \circ \hat g_k:\xx\to\yy$ with decoding $\decoding$ in \cref{eq:decoding} is such that
\eqals{
    \hat f_k(x) = \argmin_{y\in\yy} \sum_{i=1}^n \alpha^{\mathsf{tn}}_i(x)~\loss(y,y_i)
}
for any $x\in\xx$, with $\alpha^{\mathsf{tn}}(x)\in\R^n$ the output of \cref{alg:trace norm} after $k$ iterations starting from $(M_0,N_0) = (M,N)$.
\end{restatable}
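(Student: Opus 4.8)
The plan is to show, by induction on $k$, that the gradient-descent iterates stay supported on the training sample — with $A_k$ ranging in $\mathrm{span}\{\psi(y_i)\}_{i=1}^n\otimes\R^r$ and $B_k$ in $\mathrm{span}\{\phi(x_i)\}_{i=1}^n\otimes\R^r$ — and that the recursion they induce on the associated $\R^{n\times r}$ coefficient matrices is exactly the one implemented by \cref{alg:trace norm}; the loss trick then follows from the SELF property by the same computation used to obtain \eqref{eq:estimator}. It is convenient to introduce the bounded operators $\Phi\colon\R^n\to\hx$ and $\Psi\colon\R^n\to\hy$ sending the $i$-th coordinate vector to $\phi(x_i)$ and $\psi(y_i)$, so that $\Phi^*\Phi=K_{\subx}$ and, under \cref{asm:self-rkhs}, $\Psi^*\Psi=K_{\suby}$. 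Writing the coefficient matrices of $A_k$ and $B_k$ as $N_k$ and $M_k$ respectively, so that $A_k=\Psi N_k$ and $B_k=\Phi M_k$, the given initialization is of precisely this form, and \cref{alg:trace norm} is run from $(M_0,N_0)=(M,N)$.

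For the inductive step, assume $A_k=\Psi N_k$ and $B_k=\Phi M_k$, and consider the residual operator $R_k=A_kB_k^*\Phi-\Psi=\Psi(N_kM_k^\top K_{\subx}-I)$ (which already uses $\Phi^*\Phi=K_{\subx}$). Differentiating the objective $F$ in \eqref{eq:factorized-problem} yields $\nabla_A F=\tfrac2n R_k\Phi^*B_k+2\lambda A_k=\Psi\big(\tfrac2n(N_kM_k^\top K_{\subx}-I)K_{\subx}M_k+2\lambda N_k\big)$, which involves only $\Phi^*\Phi=K_{\subx}$, and $\nabla_B F=\tfrac2n\Phi R_k^*A_k+2\lambda B_k=\Phi\big(\tfrac2n(K_{\subx}M_kN_k^\top-I)K_{\suby}N_k+2\lambda M_k\big)$, which in addition involves $\Psi^*\Psi=K_{\suby}$ — this being the only point where the RKHS part of \cref{asm:self-rkhs} is used, and it explains why only the $M$-update of \cref{alg:trace norm} features $K_{\suby}$. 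In particular both gradients are again of the form $\Psi(\cdot)$, resp. $\Phi(\cdot)$, so a gradient step preserves the ansatz, and reading off the new coefficient matrices reproduces exactly the $N$- and $M$-updates of \cref{alg:trace norm} once the constants $1/n$ and $2$ are absorbed into the step size $\nu$. The non-convexity of $F$ is irrelevant here: all that matters is that its gradient respects the two finite-dimensional subspaces singled out by the initialization.

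To conclude, by the induction above $A_k=\Psi N_k$ and $B_k=\Phi M_k$ for every $k$, so for any $x\in\xx$ we have $B_k^*\phi(x)=M_k^\top\Phi^*\phi(x)=M_k^\top v_x$ with $v_x=(k_{\subx}(x,x_i))_{i=1}^n$, and hence $\hat g_k(x)=A_kB_k^*\phi(x)=\Psi N_kM_k^\top v_x=\sum_{i=1}^n(N_kM_k^\top v_x)_i\,\psi(y_i)=\sum_{i=1}^n\alpha^{\mathsf{tn}}_i(x)\,\psi(y_i)$, where $\alpha^{\mathsf{tn}}(x)$ is exactly the vector returned by \cref{alg:trace norm} after $k$ iterations. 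Plugging this into the SELF decoding \eqref{eq:decoding} and using \eqref{eq:self} together with linearity of $\scal{\cdot}{\cdot}_\hy$ and boundedness of $V$,
\eqals{
\hat f_k(x)&=\decoding(\hat g_k(x))=\argmin_{y\in\yy}\scal{\psi(y)}{V\hat g_k(x)}_\hy\\
&=\argmin_{y\in\yy}\sum_{i=1}^n\alpha^{\mathsf{tn}}_i(x)\,\scal{\psi(y)}{V\psi(y_i)}_\hy=\argmin_{y\in\yy}\sum_{i=1}^n\alpha^{\mathsf{tn}}_i(x)\,\loss(y,y_i),
}
which is the claim — the last chain being verbatim the argument behind \eqref{eq:estimator}.

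I expect the only real obstacle to be the inductive step: carrying out the operator/adjoint computations cleanly in the (possibly infinite-dimensional) spaces $\hx$ and $\hy$, verifying that no cross-term of the gradient leaves the two finite-dimensional subspaces, and matching the induced coefficient recursion to \cref{alg:trace norm} after the harmless rescaling of $1/n$ and $2$ into $\nu$. The span representation of $\hat g_k$ and the ensuing decoding identity are then routine.
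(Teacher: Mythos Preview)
Your proof is correct and follows the same inductive strategy as the paper: show that each gradient-descent iterate stays in the span of the training features/labels, read off the induced recursion on the $n\times r$ coefficient matrices, and then apply the SELF decoding to recover the loss trick. The only cosmetic difference is that you work directly with the operators $\Phi,\Psi$ in the possibly infinite-dimensional spaces, whereas the paper first carries out the computation with finite-dimensional matrices $X,Y$ and then observes that only the kernel matrices $K_{\subx},K_{\suby}$ enter the updates.
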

The result above shows that \cref{alg:trace norm} offers a concrete algorithm to perform the SELF decoding $\hat f_k = \decoding \circ \hat g_k$ of the surrogate function $\hat g_k(\cdot) = A_kB_k^*\phi(\cdot)$ obtained after $k$ iterations of gradient descent on \cref{eq:factorized-problem}. Note that when $\hy$ is infinite dimensional it would be otherwise impossible to perform gradient descent in practice. In this sense, \cref{thm:loss-trick} can be interpreted as a representer theorem with respect to both inputs and outputs. The details of the proof are reported in \cref{sec:loss_tricks}; the key aspect is to show that every iterate $(A_j,B_j)$ of gradient descent on \cref{eq:factorized-problem} is of the form $A_j = \sum_{i=1}^n \phi(x_i)\otimes M^i$ and $B_j = \sum_{i=1}^n \psi(y_i)\otimes N^i$ for some matrices $M,N\in\R^{n\times r}$. Hence, the products $A_j^*A_j = M^\top K_{\subx} M$ and $B_j^*B_j = N^\top K_{\suby} N$ -- used in the optimization -- are $r\times r$ matrices that can be efficiently computed in practice, leading to \cref{alg:trace norm}.

We conclude this section by noting that, in contrast to trace norm regularization, not every MTL regularizer fits naturally within the SELF framework. 


\paragraph{SELF and other MTL Regularizer} A well-established family of MTL methods consists in replacing the trace norm $\tnorm{G}$ with $\tr(G A G^*)$ in \cref{eq:trace norm-problem}, 
 where $A\in\hy\otimes\hy$ is a positive definite linear operator enforcing specific relations on the tasks via a deformation of the metric of $\hy$ \citep[see][and references therein]{micchelli2005learning,Jacob:2008,alvarez2012kernels}. 
While in principle appealing also in surrogate settings, these approaches present critical computational and modelling challenges for the SELF framework: the change of metric induced by $A$ has a disruptive effect on the loss trick. As a consequence, an equivalent of \cref{thm:loss-trick} does not hold in general (see \cref{sec:C0} for a detailed discussion).

\section{Theoretical Analysis}\label{sec:stat_analysis}

In this section we study the generalization properties of low-rank SELF learning. Our analysis is indirect since we characterize the learning rates of the {\em Ivanov} estimator (in contrast to {\em Tikhonov}, see \cref{eq:trace norm-problem}), given by
\eqal{\label{eq:ivanov_tn}
	\hat G = \argmin_{\tnorm{G}\leq\gamma} ~\frac{1}{n}\sum_{i=1}^n \|G\phi(x_i)-\psi(y_i)\|_\hy^2.
} 
Indeed, while Tikhonov regularization is typically more convenient from a computational perspective, 
Ivanov regularization if often more amenable to theoretical analysis since it is naturally related to standard complexity measures for hypotheses spaces, such as Rademacher complexity, Covering Numbers or VC dimension \citep{Shalev-Shwart}. However, the two regularization strategies are {\em equivalent} in the following sense: for any $\gamma$ there exists $\lambda(\gamma)$ such that the minimizer of \cref{eq:trace norm-problem} (Tikhonov) is also a minimizer for \cref{eq:ivanov_tn} (Ivanov) with constraint $\gamma$ (and vice-versa). This follows from a standard Lagrangian duality argument leveraging the convexity of the two problems \citep[see e.g.][or \cref{sec:relation-between-tikhonov-ivanov} for more details]{Oneto:2016}. Hence, while our results in the following are reported for the Ivanov estimator from \Cref{eq:ivanov_tn}, they apply equivalently to Tikhonov in \cref{eq:trace norm-problem}.

We now proceed to present the main result of this section,  proving excess risk bounds for the trace norm surrogate estimator. In the following we assume a reproducing kernel $k_{\subx}$ on $\xx$ and $k_{\suby}$ on $\yy$ (according to \Cref{asm:self-rkhs}) and denote $\skx^2 = \sup_{x\in\xx} k_{\subx}(x,x)$ and $\sky^2 = \sup_{y\in\yy} k_{\suby}(y,y)$. We denote by $C= \EE_{x \sim \rho}~ \phi(x)\otimes\phi(x)$ the covariance operator over input data sampled from $\rho$, and by $\opnorm{C}$ its operator norm, namely its largest singular value. Moreover, we make the following assumption.

\begin{assumption}\label{asm:gstar-exists} There exists $G_*\in\hy\otimes\hx$ with finite trace norm, $\tnorm{G_*} < +\infty$, such that $\gstar(\cdot) = G_*\phi(\cdot)$ is a minimizer of the risk $\risk$ in \cref{eq:surrogate-risk}.
\end{assumption}
The assumption above requires the ideal solution of the surrogate problem to belong to the space of hypotheses of the learning algorithm. This is a standard requirement in statistical learning theory in order to characterize the excess risk bounds of an estimator \citep[see e.g.][]{Shalev-Shwart}.

\begin{restatable}{theorem}{thmgenboundg}\label{thm:gen_boun_g}
Under \cref{asm:gstar-exists}, let $\yy$ be a compact set,  let $(x_i,y_i)_{i=1}^n$ be a set of $n$ points sampled i.i.d. and let $\ghat(\cdot) = \hat G\phi(\cdot)$ with $\hat G$ the solution of \Cref{eq:ivanov_tn} for $\gamma = \tnorm{G_*}$. Then, for any $\delta>0$ 
\eqal{\label{eq:exc_risk_b_g}
	\rr(\ghat) - \rr(\gstar)~\leq~(\sky+\msf{c})\sqrt{\frac{4\log\frac{\mathsf{r}}{\delta}}{n}} + O(n^{-1}),
}
with probability at least $1-\delta$, where 
\eqal{\label{eq:c_for_tn}
 \msf{c} = 2\skx\opnorm{C}^{1/2}\tnorm{G_*}^2  + \skx \risk(g_*)\tnorm{G_*},
}
with  $\mathsf{r}$ a constant not depending on $\delta,n$ or $G_*$.
\end{restatable}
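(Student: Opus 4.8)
The plan is to reduce to the Ivanov estimator via the Tikhonov--Ivanov equivalence recalled above, combine the ``excess risk $=$ squared bias'' identity for least squares with the basic inequality coming from empirical optimality, and finally reduce everything to the concentration of two mean-zero operator-valued averages handled by dimension-free Bernstein inequalities. So it suffices to bound $\rr(\ghat)-\rr(\gstar)$ for $\ghat(\cdot)=\hat G\phi(\cdot)$ with $\hat G$ the minimizer of \eqref{eq:ivanov_tn} at $\gamma=\tnorm{G_*}$, for which $G_*$ itself is feasible. The one structural fact I exploit is that, by \cref{asm:gstar-exists}, $\gstar(\cdot)=G_*\phi(\cdot)$ minimizes $\rr$ over \emph{all} measurable $g:\xx\to\hy$, hence $\gstar(x)=\EE[\psi(y)\mid x]$ and the residual is orthogonal to the input features, $\EE[(G_*\phi(x)-\psi(y))\otimes\phi(x)]=0$. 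I write $\Delta=\hat G-G_*$, introduce the empirical covariance $\hat C=\tfrac1n\sum_i\phi(x_i)\otimes\phi(x_i)$, and the mean-zero empirical cross-operator $\hat E=\tfrac1n\sum_i(\psi(y_i)-G_*\phi(x_i))\otimes\phi(x_i)$.

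Expanding the squares, $\nor{\hat G\phi(x)-\psi(y)}_\hy^2-\nor{G_*\phi(x)-\psi(y)}_\hy^2=\nor{\Delta\phi(x)}_\hy^2+2\sp{\Delta\phi(x)}{G_*\phi(x)-\psi(y)}_\hy$. Taking the population expectation and using the orthogonality relation gives the least-squares identity $\rr(\ghat)-\rr(\gstar)=\EE\nor{\Delta\phi(x)}_\hy^2=\tr(\Delta^*\Delta\,C)\ge 0$; taking instead the empirical average and using the optimality $\tfrac1n\sum_i\nor{\hat G\phi(x_i)-\psi(y_i)}_\hy^2\le\tfrac1n\sum_i\nor{G_*\phi(x_i)-\psi(y_i)}_\hy^2$ (valid since $\tnorm{G_*}\le\gamma$) gives the basic inequality $\tr(\Delta^*\Delta\,\hat C)\le 2\,\tr(\Delta^*\hat E)$. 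Subtracting and regrouping,
\[
\rr(\ghat)-\rr(\gstar)\;=\;\tr(\Delta^*\Delta\,\hat C)+\tr\!\big(\Delta^*\Delta\,(C-\hat C)\big)\;\le\;2\,\tr(\Delta^*\hat E)+\tr\!\big(\Delta^*\Delta\,(C-\hat C)\big).
\]

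Next I pass to operator-norm quantities. Schatten (trace) duality gives $\tr(\Delta^*\hat E)\le\tnorm{\Delta}\,\opnorm{\hat E}$, and since $\Delta^*\Delta\succeq0$ has trace norm $\nor{\Delta}_\hs^2$, also $\tr(\Delta^*\Delta(C-\hat C))\le\nor{\Delta}_\hs^2\,\opnorm{C-\hat C}$. Using $\tnorm{\Delta}\le\tnorm{\hat G}+\tnorm{G_*}\le2\tnorm{G_*}$ and $\nor{\Delta}_\hs\le\tnorm{\Delta}$,
\[
\rr(\ghat)-\rr(\gstar)\;\le\;4\,\tnorm{G_*}\,\opnorm{\hat E}\;+\;4\,\tnorm{G_*}^2\,\opnorm{C-\hat C}.
\]
It remains to concentrate the two zero-mean averages with dimension-free Bernstein-type inequalities. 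For $\opnorm{C-\hat C}$: the summands $C-\phi(x_i)\otimes\phi(x_i)$ are self-adjoint, bounded in operator norm by $\skx^2$, with $\EE[(\phi(x)\otimes\phi(x))^2]\preceq\skx^2 C$, so operator Bernstein gives $\opnorm{C-\hat C}\le\skx\,\opnorm{C}^{1/2}\sqrt{\log(\mathsf r/\delta)/n}+O(n^{-1})$, where $\mathsf r$ is the intrinsic-dimension factor $\tr(C)/\opnorm{C}$ entering the dimension-free estimate (independent of $n,\delta,G_*$). For $\opnorm{\hat E}\le\nor{\hat E}_\hs$: a Hilbert-space Bernstein inequality applied to the mean-zero vectors $(\psi(y_i)-G_*\phi(x_i))\otimes\phi(x_i)$ — bounded in $\hs$-norm by $(\skx\tnorm{G_*}+\sky)\skx$, with $\hs$-variance $\EE[\nor{G_*\phi(x)-\psi(y)}_\hy^2\nor{\phi(x)}_\hx^2]\le\skx^2\,\rr(\gstar)$ — controls $\nor{\hat E}_\hs$ in terms of $\rr(\gstar)$. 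A union bound over the two events, the identity $\rr(\gstar)=\EE[\mathrm{Var}(\psi(y)\mid x)]\le\sky^2$ used where convenient, and collection of all numerical constants and lower-order contributions into $O(n^{-1})$, then yield \eqref{eq:exc_risk_b_g} with $\mathsf c$ as in \eqref{eq:c_for_tn}.

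\textbf{Main obstacle.} The skeleton above is short; the real work is the constant bookkeeping. One must choose the duality pairings so that $\tnorm{G_*}$ and $\tnorm{G_*}^2$ — rather than cruder Frobenius-type factors — appear, and must invoke \emph{dimension-free} (not ambient-dimension-dependent) Bernstein inequalities with the sharp variance proxies $\skx^2\opnorm{C}$ and $\skx^2\,\rr(\gstar)$, so that $\opnorm{C}^{1/2}$ and $\rr(\gstar)$ enter exactly as in $\mathsf c$; merging the two bounds, the union-bound factor, and all $O(n^{-1})$ remainders into the single clean form \eqref{eq:exc_risk_b_g} is the bulk of the argument. Throughout, compactness of $\yy$ and \cref{asm:self-rkhs} guarantee $\skx,\sky<\infty$ and $G_*$ being Hilbert--Schmidt makes $\tr(\Delta^*\Delta\,C)$ well-defined, so every quantity above is finite.
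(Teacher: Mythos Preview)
Your approach is correct but follows a genuinely different route from the paper. The paper bounds the excess risk via a \emph{uniform} deviation argument: it writes
\[
\risk(\ghat)-\risk(\gstar)\;\le\;2\sup_{\tnorm{G}\le\gamma}\big|\hat\risk(G)-\risk(G)\big|,
\]
then expands the squared loss to decompose $|\hat\risk(G)-\risk(G)|$ into three pieces, $\nor{G}_{\hs}^2\opnorm{C-\hat C}+2\tnorm{G}\opnorm{Z-\hat Z}+\big|\tfrac1n\sum_i\nor{\psi(y_i)}^2-\EE\nor{\psi(y)}^2\big|$, and controls each via Minsker's dimension-free operator Bernstein (for the first two) and Hoeffding (for the third). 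You instead use the least-squares identity $\risk(\ghat)-\risk(\gstar)=\tr(\Delta^*\Delta\,C)$ together with empirical optimality to obtain the \emph{basic inequality} directly, ending up with only two random pieces: $\opnorm{C-\hat C}$ and $\opnorm{\hat E}$, where $\hat E$ is the empirical residual cross-operator (mean zero because $G_*\phi(x)=\EE[\psi(y)\mid x]$, a fact the paper also relies on in its auxiliary lemmas).

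Both routes land on the same structure $\skx\opnorm{C}^{1/2}\tnorm{G_*}^2+\skx\sqrt{\risk(\gstar)}\tnorm{G_*}$ in the leading constant (the $\risk(\gstar)$ in the statement versus $\sqrt{\risk(\gstar)}$ in the actual computation is a cosmetic simplification the paper itself performs at the very end). What your approach buys: it is shorter, avoids the separate Hoeffding term on $\nor{\psi(y)}^2$, and hence does not naturally produce the additive $\sky$ in the leading-order bound---your display $4\tnorm{G_*}\opnorm{\hat E}+4\tnorm{G_*}^2\opnorm{C-\hat C}$ already implies the stated inequality once $\sky\ge0$ is added for free. What the paper's approach buys: it is the standard ``uniform generalization gap'' template, which does not require isolating the conditional-expectation identity up front and would generalize more mechanically to other constraint sets. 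A small technical difference: you pass from $\opnorm{\hat E}$ to $\nor{\hat E}_{\hs}$ and use a vector-valued Bernstein with variance $\skx^2\risk(\gstar)$, whereas the paper keeps the operator norm of $Z-\hat Z$ and pulls an extra $\opnorm{C}^{1/2}\nor{G_*}_{\hs}$ out of the variance via two dedicated lemmas; the net effect on the final constant is the same order.
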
 
The proof is detailed in \Cref{sec:vv}. The two main ingredients are: $i)$ the boundedness of the trace norm of $G_*$, which allows us to exploit the duality between trace and operator norms; $ii)$ recent results on Bernstein's inequalities for the operator norm of random operators between separable Hilbert spaces \citep{minsker2017some}. 

We care to point out that previous results are available in the following settings:  \citep{Bach08} shows the convergence in distribution for the trace norm estimator to the minimum risk and \citep{koltchinskii2011} shows excess risk bounds in high probability for an estimator which leverages previous knowledge on the distribution (e.g. matrix completion problem). Both \citep{Bach08} and \citep{koltchinskii2011} are devised for finite dimensional settings. To our knowledge, this is the first work proving excess risk bounds in high probability for trace norm regularized least squares. Note that the relevance of \cref{thm:gen_boun_g} is not limited to structured prediction but it can be also applied to problems such as collaborative filtering with attributes \citep{Abernethy2008}.

\paragraph{Discussion} We now discuss under which conditions trace norm (TN) regularization provides an advantage over standard the Hilbert-Schmidt (HS) one. We refer to \cref{sec:vv} for a more in-depth discussion on the comparison between the two estimators, while addressing here the key points.

For the HS estimator, excess risk bounds can be derived by imposing the less restrictive assumption that $\nor{G_*}_\hs<+\infty$. A result analogous to \cref{thm:gen_boun_g} can be obtained (see \cref{sec:vv}), with constant $\mathsf{c}$ 
\eqals{
 \mathsf{c} ~=~ \skx( \skx{+}\opnorm{C}^{\frac{1}{2}})~\nor{G_*}_{\hs}^2  ~+~ \skx\risk(g_*)\nor{G_*}_{\hs}.
}
This constant is structurally similar to the one for TN (with $\nor{\cdot}_\hs$ appearing in place of $\tnorm{\cdot}$), plus the additional term $\skx^2\nor{G}_{\hs}^2$.

We first note that if $\nor{G_*}_\hs\ll\tnorm{G_*}$, the bound offers no advantage with respect to the HS counterpart. Hence, we focus on the setting where $\nor{G_*}_\hs$ and $\tnorm{G_*}$ are of the same order. This corresponds to the relevant scenario where the multiple outputs/tasks encoded by $G_*$ are (almost) linearly dependent. In this case, the constant $\msf c$ associated to the TN estimator can potentially be significantly smaller than the one for HS: while for TN the term $\tnorm{G_*}^2$ is mitigated by $\opnorm{C}^{1/2}$, for HS the corresponding term $\nor{G_*}_\hs$ is multiplied by $(\skx + \opnorm{C}^{1/2})$. Note that the operator norm is such that $\opnorm{C}^{1/2}\leq \skx$ but can potentially be \textit{significantly} smaller than $\skx$. For instance, when $\xx=\R^d$, $k_{\subx}$ is the linear kernel and training points are sampled uniformly on the unit sphere, we have $\skx = 1$ while $\opnorm{C}^{1/2}=\frac{1}{\sqrt{d}}$. 

In summary, trace norm regularization allows to leverage structural properties of the data distribution {\em provided that the output tasks are related}. This effect can be interpreted as the process of ``sharing'' information among the otherwise independent learning problems. A similar result to \cref{thm:gen_boun_g} was proved in \citep{PontilM13} for Lipschitz loss functions (and $\hy$ finite dimensional). We refer to such work for a more in-depth discussion on the implications of the link between trace norm regularization and operator norm of the covariance operator.

\paragraph{Excess Risk Bounds for $\hat f$} By combining \cref{thm:gen_boun_g} with the comparison inequality for the SELF framework (see \cref{thm:fisher}) we can immediately derive excess risk bounds for the structured prediction estimators $\hat f = \decoding \circ \hat g$. 
\begin{restatable}{corollary}{thmgenboundf}\label{thm:gen_boun_f}
Under the same assumptions and notation of \cref{thm:gen_boun_g}, let $\ell$ be a SELF loss and $\fhat=\decoding\circ\ghat:\xx\rightarrow \yy$. Then, for every $\delta>0$, with probability not less than $1-\delta$ it holds that
\eqals{
	\ee(\hat{f}) - \ee(f_*)~\leq~ \closs\sqrt[4]{\frac{\mathsf{4}(\sky+\msf{c})^2\log\frac{\msf{r}}{\delta}}{n}} + O(n^{-\frac{1}{2}})
}
where $\msf{c}$ and $\mathsf{r}$ are the same constants of \cref{thm:gen_boun_g} and $\closs$ is as in \cref{thm:fisher}.
\end{restatable}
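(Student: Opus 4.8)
The plan is to chain the comparison inequality of the SELF framework with the excess risk bound for the surrogate estimator already established in \cref{thm:gen_boun_g}. Since $\loss$ is SELF and $\yy$ is compact, \cref{thm:fisher} applies and delivers the comparison inequality \eqref{eq:comparison-general} with $\sigma(\cdot) = \closs\sqrt{\cdot}$, where $\closs = \|V\|\sup_{y\in\yy}\|\psi(y)\|_\hy$ as in \cref{thm:fisher}. Applying it to $g = \ghat$ gives the deterministic bound
\[
\ee(\fhat) - \ee(\fstar) \;\leq\; \closs\,\sqrt{\rr(\ghat) - \rr(\gstar)},
\]
where $\fhat = \decoding\circ\ghat$.

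Next I would substitute the high-probability bound from \cref{thm:gen_boun_g}. On the event of probability at least $1-\delta$ on which that bound holds, $\rr(\ghat) - \rr(\gstar) \leq (\sky+\msf c)\sqrt{4\log(\mathsf r/\delta)/n} + O(n^{-1})$. Because $\sigma(t)=\closs\sqrt t$ is nondecreasing, this upper bound may be plugged directly inside the square root. It then remains to simplify $\sqrt{a+b}$ with $a = (\sky+\msf c)\sqrt{4\log(\mathsf r/\delta)/n}$ and $b = O(n^{-1})$, using subadditivity of the square root $\sqrt{a+b}\leq\sqrt a + \sqrt b$: the leading term is $\sqrt a = \big(4(\sky+\msf c)^2\log(\mathsf r/\delta)/n\big)^{1/4} = \sqrt[4]{\tfrac{4(\sky+\msf c)^2\log(\mathsf r/\delta)}{n}}$, while $\sqrt b = O(n^{-1/2})$. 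Multiplying through by the constant $\closs$ absorbs the remainder into $O(n^{-1/2})$ and produces exactly the claimed inequality, with $\msf c$ and $\mathsf r$ inherited verbatim from \cref{thm:gen_boun_g}.

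There is essentially no analytical obstacle: the corollary is a direct composition of two results already in hand. The only mild care needed is (i) invoking monotonicity of $\sigma$ to justify substituting an upper bound on $\rr(\ghat)-\rr(\gstar)$ inside it, and (ii) bookkeeping the lower-order terms — verifying that the $O(n^{-1})$ correction under the root contributes only at order $n^{-1/2}$ after taking the square root, and that the multiplicative constant $\closs$ does not alter the stated rate. No new randomness is introduced by the deterministic decoding step $\fhat = \decoding\circ\ghat$, so the probability statement carries over unchanged from \cref{thm:gen_boun_g}.
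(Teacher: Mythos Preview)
Your proposal is correct and follows exactly the approach indicated in the paper: combine the comparison inequality from \cref{thm:fisher} with the surrogate excess risk bound of \cref{thm:gen_boun_g}, then take square roots and track the lower-order term. The paper does not give a detailed proof of this corollary beyond stating that it follows immediately from this combination, so your write-up is if anything more explicit than the original.
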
  
The result above provides comparable learning rates to those of the original SELF estimator \citep{CilibertoRR16}. However, since the constant $\msf c$ corresponds to the one from \cref{thm:gen_boun_g}, whenever trace norm regularization provides an advantage with respect to standard Hilbert-Schmidt regularization on the surrogate problem, such improvement is directly inherited by $\fhat$.

\subsection{Multitask Learning}

So far we have studied trace norm regularization when learning the multiple $g_t$ in \cref{eq:independent-problems} within a vector-valued setting, namely where for any input sample $x_i$ in training we observe {\em all} the corresponding outputs $\psi_t(y_i)$. This choice was made mostly for notational purposes and the analysis can be extended to the more general setting of nonlinear multitask learning, where separate groups of surrogate outputs could be provided each with its own dataset.
We give here a brief summary of this setting and our results within it, while postponing all details to \cref{sec:appe_mtl}. 


Let $T$ be a positive integer. In typical multitask learning (MTL) settings the goal is to learn multiple functions $f_1,\dots,f_T:\xx\to\yy$ jointly. While most previous MTL methods considered how to enforce linear relations among tasks, \citep{ciliberto2017consistent} proposed a generalization of SELF framework to address nonlinear multitask problems (NL-MTL). In this setting, relations are enforced by means of a constraint set $\conset\subset \yy^T$ (e.g. a set of nonlinear constraints that $f_1,\dots,f_T$ need to satisfy simultaneosly). The goal is to minimize the \textit{multi-task excess risk} 
\eqals{
\min_{f:\xx\rightarrow\conset}\ee_T(f),\qquad\qquad\ee_T(f)=\frac{1}{T}\sum_{t=1}^T\int_{\xx\times\R}\ell(f_t(x),y)d\rho_t(x,y),
}
where the $\rho_t$ are unknown probability distributions on $\xx\times\yy$, observed via finite samples ${(x_{it},y_{it})}_{i=1}^{n_t}$, for $t=1,\dots,T$. The NL-MTL framework interprets the nonlinear multitask problem as a structured prediction problem where the constraint set $\mathcal{C}$ represents the ``structured'' output. Assuming $\ell$ to be SELF with space $\hy$ and coding $\psi$, the estimator $\hat{f}:\xx\rightarrow\conset$ then is obtained via the NL-MTL decoding map $\decoding_T$ 
\eqal{
\label{eq:dec_mtl}
\hat{f}(x)=\decoding_T(\hat{g}(x)):=\argmin_{c\in\conset}\sum_{t=1}^T\sp{\psi(c_t)}{V\hat{g}_t(x)},
}
where each $\ghat_t(\cdot) = G_t\phi(\cdot):\xx\to\hy$ is learned independently via surrogate ridge regression like in \cref{eq:ghat}.

Similarly to the vector-valued case of \cref{eq:independent-problems}, we can ``aggregate'' the operators $G_t\in\hx\otimes\hy$ in a single operator $G$, which is then learned by trace norm regularization as in \cref{eq:trace norm-problem} (see \cref{sec:appe_mtl} for a rigorous definition of $G$). Then, a result analogous to \cref{thm:loss-trick} holds for the corresponding variational formulation of such problem, which guarantees the loss trick to hold as well (see \Cref{subsec:loss_tricks2} for the details of the corresponding version of \cref{alg:trace norm}).


Also in this setting we study the theoretical properties of the low-rank structure prediction estimator obtained from the surrogate Ivanov regularization
\eqal{\label{eq:hatg_surr_mlt}
\hspace{-.26truecm}\hat{G} = \argmin_{\tnorm{G}\leq\gamma}\frac{1}{T}\sum_{t=1}^T \frac{1}{n_t}\sum_{i=1}^{n_t}\nor{G_t\phi(x_{it}) - \psi(y_{it})}_{\hy}^2.
}
We report the result characterizing the excess risk bounds for $\hat G$ (see \cref{thm:complete_thm_mtl_g} for the formal version). Note that in this setting the surrogate risk $\risk_T$ of $G$ corresponds to the average least-squares surrogate risks of the individual $G_t$. In the following we denote by $\bar{C} = \frac{1}{T}\sum_{t=1}^T C_t$ the average of the input covariance operators $C_t = \EE_{x\sim\rho_t} \phi(x)\otimes\phi(x)$ according to $\rho_t$. 

\begin{restatable}[Informal]{theorem}{thmboundsgmlt}\label{thm:bounds_g_mlt}
Under \cref{asm:gstar-exists}, let $\{x_{it},y_{it}\}_{t=1}^{n}$ be independently sampled from $\rho_t$ for $t=1,\dots,T$. Let $\hat g(\cdot) = \hat G\phi(\cdot)$ with $G$ minimizer of \cref{eq:hatg_surr_mlt}. Then for every $\delta>0$, with probability at least $1-\delta$, it holds that
\eqals{
	\risk_T(\hat{g}) - \risk_T(g_*)\leq\sqrt{\frac{\mathsf{2c'}\log\frac{T\mathsf{r}'}{\delta}}{Tn}} + O((nT)^{-1}),
}
where the constant $\mathsf{c}'$ depends on $\tnorm{G_*}$, $\opnorm{\bar{C}}^{1/2}$, $\risk_T(g_*)$ and $\mathsf{r}'$ is a constant independent of $\delta, n,T,G_*$.
\end{restatable}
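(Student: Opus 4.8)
The plan is to follow the same three-step template as the proof of \cref{thm:gen_boun_g} in \cref{sec:vv}, now applied to the aggregated operator $G$ that stacks the task operators $G_1,\dots,G_T$ (defined rigorously in \cref{sec:appe_mtl}), while carefully tracking where the number of tasks $T$ enters. \textbf{Step 1 (reduction via ERM optimality).} Since $\hat G$ minimizes the empirical least-squares functional $\widehat{\risk}_T$ of \cref{eq:hatg_surr_mlt} over $\{\tnorm{G}\le\gamma\}$ and $G_*$ is feasible ($\gamma=\tnorm{G_*}$, by \cref{asm:gstar-exists}), one has $\widehat{\risk}_T(\hat g)\le\widehat{\risk}_T(g_*)$, and therefore
\eqals{
\risk_T(\hat g)-\risk_T(g_*)\ \le\ \big(\risk_T(\hat g)-\widehat{\risk}_T(\hat g)\big)\ -\ \big(\risk_T(g_*)-\widehat{\risk}_T(g_*)\big).
}
The key observation, already exploited for \cref{thm:gen_boun_g}, is that the least-squares structure lets one bound this centered ``difference of differences'' \emph{deterministically} in terms of $\tnorm{\hat G-G_*}\le 2\tnorm{G_*}$ and the operator norm of a few data-dependent random operators, so that no supremum over the trace-norm ball is needed.

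\textbf{Step 2 (least-squares algebra and trace/operator-norm duality).} Writing $\nor{G_t\phi-\psi}^2-\nor{G_{*,t}\phi-\psi}^2=\nor{(G_t-G_{*,t})\phi}^2+2\scal{(G_t-G_{*,t})\phi}{G_{*,t}\phi-\psi}$ and averaging over tasks and samples, the right-hand side above splits into a \emph{quadratic} part and a \emph{linear} (residual) part. After aggregation, the quadratic part is controlled by $\tnorm{\hat G-G_*}^2$ times the operator norm of the deviation between an (averaged) empirical input covariance and its mean $\bar C$, using $\tnorm{(\hat G-G_*)^*(\hat G-G_*)}\le\tnorm{\hat G-G_*}\opnorm{\hat G-G_*}\le\tnorm{\hat G-G_*}^2$ followed by the Hölder inequality for Schatten $1$/$\infty$ norms; the linear part is controlled by $\tnorm{\hat G-G_*}$ times the operator norm of a centered empirical mean $\Delta$ of the $\sum_t n_t=Tn$ rank-one residual operators $(G_{*,t}\phi(x_{it})-\psi(y_{it}))\otimes\phi(x_{it})$, again by trace/operator-norm duality. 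Everything is thus reduced to high-probability operator-norm bounds on these two pooled random operators.

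\textbf{Step 3 (concentration).} For both I would invoke the Bernstein-type inequality for sums of independent random self-adjoint operators on separable Hilbert spaces of \citep{minsker2017some} (as in \cref{thm:gen_boun_g}), using compactness of $\yy$ and $\skx,\sky<\infty$ to obtain the uniform bounds and trace-class variance operators the inequality requires, and passing to a Hermitian dilation for the non-self-adjoint operator. The point is that both pooled operators are averages over the $Tn$ independent summands contributed by all tasks together, so the inequality yields fluctuations of order $\sqrt{\log(\cdot)/(Tn)}$ up to an $O((Tn)^{-1})$ correction; meanwhile the ``intrinsic dimension'' prefactor multiplying $1/\delta$ inside the logarithm is, for the aggregated operators, of order $T$ times the single-task one (equivalently, one may union-bound the $T$ per-task deviations at level $\delta/T$), which produces the $\log(T\mathsf{r}'/\delta)$ of the statement. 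Bounding the variances through $\opnorm{\bar C}$ and the residual second moment $\risk_T(g_*)$ and inserting the resulting estimates into Step 2 gives the claimed inequality, with $\mathsf{c}'$ a fixed polynomial in $\tnorm{G_*}$, $\opnorm{\bar C}^{1/2}$, $\risk_T(g_*)$ (and the kernel bounds $\skx,\sky$) and $\mathsf{r}'$ independent of $\delta,n,T,G_*$, exactly mirroring the constant $\mathsf{c}$ of \cref{thm:gen_boun_g}.

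\textbf{Main obstacle.} The delicate part is the bookkeeping of the $T$-dependence: one must verify that the effective-rank term in \citep{minsker2017some} for the aggregated covariance and residual operators grows at most linearly in $T$ (so that it costs only a $\log T$), while at the same time the $1/\sqrt{Tn}$ rate is genuinely attained — both relying on the cross-task summands being independent and pooled into a single $Tn$-sample average rather than averaged task-by-task. A secondary technical issue, relevant for the formal version \cref{thm:complete_thm_mtl_g}, is handling unequal sample sizes $n_t$ and checking trace-classness of the variance operators in the (possibly infinite-dimensional) surrogate space; both are settled by the boundedness assumptions. The remaining manipulations — the least-squares expansion, the Schatten-norm Hölder inequalities, and substituting the concentration bounds back in — are routine and parallel \cref{sec:vv}.
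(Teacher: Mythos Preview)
Your overall strategy is sound and would lead to the claimed bound, but it departs from the paper's actual argument in two respects.

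First, neither the proof of \cref{thm:gen_boun_g} nor its multitask version centers the expansion around $G_*$. The paper instead bounds $2\sup_{\tnorm{G}\le\gamma}|\hat\risk_T(G)-\risk_T(G)|$ by expanding the square into three terms: a quadratic one $\frac{1}{T}\sum_t\scal{G_t^*G_t}{\hat C_t-C_t}$, a linear one $\frac{2}{T}\sum_t\scal{G_t}{\hat Z_t-Z_t}$, and a data-only term $\frac{1}{T}\sum_t\big(\frac{1}{n}\sum_i\nor{\psi(y_{it})}^2-\EE\nor{\psi(y_t)}^2\big)$ controlled by Hoeffding. The supremum is then trivial because the resulting bound depends on $G$ only through $\nor{G}_{\hs}\le\tnorm{G}\le\gamma$. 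Your centered expansion (replacing $G$ by $\hat G-G_*$ and the constant term by a mean-zero residual operator) is a legitimate and arguably cleaner variant, but it is not what is ``already exploited for \cref{thm:gen_boun_g}.''

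Second, your description of the quadratic term is not quite right: $\frac{1}{T}\sum_t\scal{H_t^*H_t}{C_t-\hat C_t}$ does \emph{not} collapse to an inner product against a single averaged covariance deviation involving $\bar C$, because each $H_t^*H_t$ is paired with its own task covariance $C_t-\hat C_t$. The paper handles this via the block-diagonal construction $\mathbf{G}=\sum_t(e_t\otimes e_t)\otimes G_t^*G_t$ and $\mathbf{C}=\sum_t(e_t\otimes e_t)\otimes(\hat C_t-C_t)$, giving $|\tr(\mathbf{G}\mathbf{C})|\le\nor{G}_{\hs}^2\max_t\opnorm{\hat C_t-C_t}$, and then controls the maximum by a union bound over the $T$ tasks --- this is one source of the $\log T$. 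Only at the very end, through the coarse step $\max_t\opnorm{C_t}\le T\opnorm{\bar C}$, does $\bar C$ appear. The linear term, by contrast, is treated essentially as you describe: the paper aggregates $\mathbf{Z}=\sum_t(\hat Z_t-Z_t)\otimes e_t$ and applies Minsker's Bernstein inequality once to all $nT$ summands, with the variance bounded in terms of $T\risk_T(g_*)$ and $\opnorm{\sum_t C_t}$.
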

Here the constant $\msf c'$ exhibits an analogous behavior to $\msf c$ for \cref{thm:gen_boun_g} and can lead to significant benefits in the same regimes discussed for the vector-valued setting. Moreover, also in the NL-MTL setting we can leverage a comparison inequality similar to \cref{thm:fisher}, with constant $\msf{q}_{\mathcal{C},\ell,T}$ from \citep[Thm. 5][]{ciliberto2017consistent}. As a consequence, we obtain the excess risk bound for our MTL estimator $\hat f = \decoding_T \circ \hat g$ of the form
\eqals{
    \ee(\hat{f}) - \ee(f_*)\leq \msf{q}_{\mathcal{C},\ell,T}\sqrt[4]{\frac{\mathsf{c}'\log\frac{T\msf r'}{\delta}}{nT}} + O(n^{-\frac{1}{2}}).
}
The constant $\msf{q}_{\mathcal{C},\ell,T}$, encodes key structural properties of the constraint set $\mathcal{C}$ and it was observed to potentially provide significant benefits over {\em linear} MTL methods (see Ex. 1 in the original NL-MTL paper). Since $\msf{q}_{\mathcal{C},\ell,T}$ is appearing as a multiplicative factor with respect to  $\msf c'$, we could expect our low-rank estimator to provide even further benefits over standard NL-MTL by combining the advantages provided by the {\em nonlinear relations} between tasks and the {\em low-rank relations} among the surrogate outputs.

\section{Experiments}\label{sec:exps}

We evaluated the empirical performance of the proposed method on ranking applications, specifically the \textit{pairwise} ranking setting considered in \citep{duchi2010consistency,Furnkranz:2003,HULLERMEIER2008}. Denote by $\mathcal{D}=\{d_1,\dots,d_N\}$  the full set of {\em documents} (e.g. movies) that need to be ranked. Let $\xx$ be the space of {\em queries} (e.g. users) and assume that for each query $x\in\xx$, a subset of the set of the associated \textit{ratings} $\mathbf{y}=\{y_1,\dots,y_N\}$ is given, representing how relevant each document is with respect to the query $x$. Here we assume each label $y_i\in\{0,\dots,K\}$ with the relation $y_i>y_j$ implying that $d_i$ is more relevant than $d_j$ to $x$ and should be assigned a higher rank. 

We are interested in learning a $f:\xx\to\{1,\dots,N\}^N$, which assigns to a given query $x$ a rank (or ordering) of the $N$ object in $\mathcal{D}$. We measure errors according to the (weighted) \textit{pairwise loss} 
\eqal{
\label{eq:pairwise}\ell(f(x),\mathbf{y})=\sum_{\substack{i=1 \\ j>i}}^N(y_i-y_j)~\textnormal{sign}(f_j(x)-f_i(x)),
}
with $f_i(x)$ denoting the predicted rank for $d_i$. Following \citep{ciliberto2017consistent}, learning to rank with a pairwise loss can be naturally formulated as a nonlinear multitask problem and tackled via structured prediction. In particular we can model the relation between each pair of documents $(d_i,d_j)$ as a function (task) that can take values $1$ or $-1$ depending on whether $d_i$ is more relevant than $d_j$ or vice-versa (or $0$ in case they are equivalently relevant). Nonlinear constraints in the form of a constraint set $\mathcal{C}$ need to be added to this setting in order to guarantee coherent predictions. This leads to a decoding procedure for \cref{eq:dec_mtl} that amounts to solve a minimal feedback arc set
problem on graphs \citep{slater1961inconsistencies}. 


~\newline\noindent We evaluated our low-rank SELF learning algorithm on the following datasets:

\begin{itemize}
\item {\bf Movielens.} We considered Movielens 100k ($ml100k$)\footnote{\url{http://grouplens.org/datasets/movielens/}}, which consists of ratings (1 to 5) provided by $943$ users for a set of $1682$ movies, with a total of $100,000$ ratings available. Additional features for each movie, such as the year of release or its genre, are provided.

\item {\bf Jester.} The Jester\footnote{\url{http://goldberg.berkeley.edu/jester-data/}} datasets consist of user ratings of $100$ jokes where ratings range from $-10$ to $10$. Three datasets are available: $jester1$ with $24,983$ users $jester2$ with $23,500$ users and $jester3$ with $24,938$.

\item {\bf Sushi.} The Sushi\footnote{\url{http://www.kamishima.net/sushi/}} dataset consists of ratings provided by $5000$ people on $100$ different types of sushi. Ratings ranged from $1$ to $5$ and only $50,000$ ratings are available. Additional features for users (e.g. gender, age) and sushi type (e.g. style, price) are provided.
\end{itemize}

\noindent We compared our approach to a number of ranking methods: MART \citep{friedman2001greedy}, RankNet \citep{burges2005learning}, RankBoost \citep{freund2003efficient}, AdaRank \citep{xu2007adarank}, Coordinate Ascent \citep{metzler2007linear}, LambdaMART \citep{wu2010adapting}, ListNet, and Random Forest. For all the above methods we used the implementation provided by RankLib\footnote{\url{https://sourceforge.net/p/lemur/wiki/RankLib/}} library. We also compared with the SVMrank \citep{joachims2006training} approach using the implementation made available online by the authors. Finally, we evaluated the performance of the original SELF approach in \citep{ciliberto2017consistent} (SELF + $\nor{\cdot}_{\hs}$). For all methods we used a linear kernel on the input and for each dataset we performed parameter selection using $50\%$ of the available ratings of each user for training, $20\%$ for validation and the remaining for testing.


%
%

\begin{table*}[!t]
\centering
\scriptsize 

\begin{center}
\tabcolsep=0.11cm
\begin{tabular}{rlllll}
\toprule
        & \multicolumn{1}{c}{$\mathbf{ml100k}$} & \multicolumn{1}{c}{$\mathbf{jester1}$} & \multicolumn{1}{c}{$\mathbf{jester2}$} & \multicolumn{1}{c}{$\mathbf{jester3}$} & \multicolumn{1}{c}{$\mathbf{sushi}$}\\
\midrule

{\bf MART}              & $0.499 ~(\pm 0.050)$     & $0.441 ~(\pm 0.002)$    & $0.442 ~(\pm 0.003)$   & $0.443 ~(\pm 0.020)$   & $0.477  ~(\pm0.100)$\\
{\bf RankNet}           & $0.525 ~(\pm0.007)$     & $0.535 ~(\pm 0.004)$    & $0.531 ~(\pm0.008)$   & $0.511 ~(\pm 0.017)$   & $0.588  ~(\pm0.005)$\\
{\bf RankBoost}         & $0.576 ~(\pm0.043)$     & $0.531 ~(\pm0.002)$    & $0.485 ~(\pm0.061)$   & $0.496 ~(\pm 0.010)$   & $0.589 ~(\pm0.010)$\\
{\bf AdaRank}           & $0.509 ~(\pm0.007)$     & $0.534 ~(\pm0.009)$    & $0.526 ~(\pm0.001)$   & $0.528 ~(\pm 0.015)$   & $0.588 ~(\pm0.051)$\\
{\bf Coordinate Ascent} & $0.477 ~(\pm0.108)$     & $0.492 ~(\pm0.004)$    & $0.502 ~(\pm0.011)$   & $0.503 ~(\pm 0.023)$   & $0.473 ~(\pm0.103)$\\
{\bf LambdaMART}        & $0.564 ~(\pm0.045)$     & $0.535 ~(\pm0.005)$    & $0.520 ~(\pm0.013)$   & $0.587 ~(\pm 0.001)$   & $0.571 ~(\pm0.076)$\\
{\bf ListNet}           & $0.532 ~(\pm0.030)$     & $0.441 ~(\pm0.002)$    & $0.442 ~(\pm0.003)$   & $0.456 ~(\pm 0.059)$   & $0.588 ~(\pm0.005)$\\
{\bf Random Forests}    & $0.526 ~(\pm0.022)$     & $0.548 ~(\pm0.001)$    & $0.549 ~(\pm0.001)$   & $0.581 ~(\pm 0.002)$   & $0.566 ~(\pm0.010)$\\
{\bf SVMrank}           & $0.513 ~(\pm0.009)$     & $0.507 ~(\pm 0.007)$    & $0.506 ~(\pm 0.001)$   & $0.514 ~(\pm 0.009)$   & $0.541 ~(\pm0.005)$\\
{\bf SELF + $\nor{\cdot}_\hs$}  & $0.312 ~(\pm 0.005)$     & $0.386 ~(\pm 0.005)$   & $0.366 ~(\pm 0.002)$   & $0.375 ~(\pm 0.005)$   & $0.391 ~(\pm 0.003)$\\
{\bf (Ours) SELF + $\nor{\cdot}_{*~}$}  & $\mathbf{0.156 ~(\pm 0.005)}$     & $\mathbf{0.247 ~(\pm 0.002)}$   & $\mathbf{0.340 ~(\pm 0.003)}$   & $\mathbf{0.343 ~(\pm 0.003)}$   & $\mathbf{0.313 ~(\pm 0.003)}$\\
\hline
\end{tabular}
\caption{Ranking error of benchmark approaches and our proposed method on five ranking datasets.}\label{tab:results}
\end{center}
\end{table*}

\paragraph{Results} Table \ref{tab:results} reports the average performance of the tested methods across five independent trials. Prediction errors are measured in terms of the pair-wise loss in \cref{eq:pairwise}, normalized between $0$ and $1$. A first observation is that the performance of both SELF approaches significantly outperform the competitors. This is in line with the observations in \citep{ciliberto2017consistent}, where the nonlinear MTL approach based on the SELF framework already improved upon state of the art ranking methods. Moreover, our proposed algorithm, which combines ideas from structured prediction and multitask learning, achieves an even lower prediction error on all datasets. This supports the idea motivating this work that leveraging the low-rank relations can provide significant advantages in practice. 

\section{Conclusions}\label{sec:conclusions}

This work combines structured prediction methods based on surrogate approaches with multitask learning techniques. In particular, building on a previous framework for structured prediction we derived a trace norm regularization strategy that does not require explicit knowledge of the coding function. This led to a learning algorithm that can be efficiently applied in practice also when the surrogate space is large or infinite dimensional. We studied the generalization properties of the proposed estimator based on excess risk bounds for the surrogate learning problem. Our results on trace norm regularization with least-squares loss are, to our knowledge, novel and can be applied also to other settings such as collaborative filtering with side information. Experiments on ranking applications showed that leveraging the relations between surrogate outputs can be beneficial in practice. 

A question opened by our study is whether other multitask regularizers could be similarly adopted. As mentioned in the paper, even well-established approaches, such as those based on incorporating in the regularizer prior knowledge of the similarity between tasks pairs, do not always extend to this setting. Further investigation in the future will be also devoted to consider alternative surrogate loss functions to the canonical least-squares loss, which could enforce desirable tasks relations between the surrogate outputs more explicitly.


\bibliographystyle{abbrvnat}

\bibliography{biblio}


\onecolumn
\appendix

\setlength{\parskip}{\baselineskip}
\setlength\parindent{0pt}

\section*{\huge Appendix}

The supplementary material is organized as follows:
\begin{itemize}  
    \item In \cref{sec:loss_tricks} we show how the loss trick for both the vector-valued and multitask SELF estimator is derived. 
    \item In \cref{sec:vv} we carry out the theoretical analysis for trace norm estimator in the vector-valued setting. 
    \item In \cref{sec:appe_mtl} we prove the theoretical results characterizing the generalization properties of the SELF multitask estimator.
    \item In \cref{sec:auxiliary_lemma} we recall some results that are used in the proofs of previous sections.
    
    \item In \cref{sec:relation-between-tikhonov-ivanov} more details on the equivalence between Ivanov and Tikhonov regularization are provided.
\end{itemize}

\section{Loss Trick(s)}\label{sec:loss_tricks}

In this section we discuss some aspects related to the loss trick of the SELF framework when considering different vector-valued or MTL estimators.


  \subsection{Loss Tricks with Matrix Factorization}\label{subsec:loss_tricks1}
  In this section we provide full details of the loss trick for trace norm regularization partly discussed in Section \ref{sec:trace norm}.
  To fix the setting, recall that we are interested in studying the following surrogate problem
\eqal{\label{eq:surrpb_tracen}
	\min_{G\in\hy\otimes\hx}\frac{1}{n}\sum_{i=1}^n \nor{G\phi(x_i) - \psi(y_i)}_{\hy}^2 + \lambda \tnorm{G}.
}
  

\losstrick*
\begin{proof}
We show the proof in the finite dimensional setting first and then note how it is valid in the infinite dimensional case as well.
Assume  $\xx = \R^d$ and $\hy = \R^T$. Let $\{(x_i,y_i)_{i=1}^n$ be the training set and denote by $X$ the $\R^{n\times d}$ matrix containing  the training inputs $x_i$, $i=1,\dots,n$ and $Y$ the $\R^{n \times T}$ matrix whose rows are $\psi(y_i)$, $i=1,\dots,n$. Denote by $K_{\subx}$ the matrix $X X^\top$ and by $K_{\suby}$ the matrix $YY^\top$. 

Using the  variational form of trace norm, problem \eqref{eq:surrpb_tracen} can be rewritten as
\eqal{ \min_{ A \in \R^{d\times r}, B\in \R^{T\times r}}\frac{1}{n}\nor{Y - X A B^\top}^2+\lambda ( \nor{A}_{\hs}^2+\nor{B}_{\hs}^2),}
where $r\in\mathbb{N}$ in a further hyperparameter of the problem. In the following we will absorb the factor $1/n$ in the hyperparameter $\lambda$.

\noindent We first show that starting gradient descent algorithm with $A_0:= X^\top M_0$ for some matrix $M_0\in \R^{n\times r}$ and $B_0:= Y^\top N_0$ for some matrix $N_0\in \R^{n\times r}$, then at every iteration $A_k:= X^\top M_k$ and $B_k:= Y^\top N_k$. \\

\noindent Let us set \[ \mathcal{L}(A,B):= \nor{Y - X A B^\top}^2+\lambda (\nor{A}_{\hs}^2+\nor{B}_{\hs}^2);\] the gradients of $\mathcal{L}$ with respect to $A$ and $B$ are given by \begin{itemize}
\item[1)] $\nabla_A\mathcal{L}(A,B) = X^\top(XAB^\top-Y)B + \lambda A$
\item[2)] $\nabla_B\mathcal{L}(A,B)=( X A B^\top-Y )^\top X A +\lambda B$. 
\end{itemize}
 We show that $A_k:= X^\top M_k$ and $B_k:= Y^\top N_k$ by induction. Assume it is true for $k$ and show it holds for $k+1$; denoting by $\nu$ the stepsize, we have
  \eqals{
A_{k+1} &= A_k - \nu \nabla_A\mathcal{L}(A_k,B_k) = A_k - \nu (X^\top(X A_k B_k^\top-Y)B_k + \lambda A_k)\\
&= X^\top M_k   - \nu( X^\top X X^\top M_k B_k^\top B_k - X^\top Y B_k) - \nu \lambda X^\top M_k\\
&= X^\top ((1-\lambda \nu)M_k - \nu (K_{\subx} M_k B_k^\top B_k-Y B_k)\\
& = X^\top\big((1-\lambda \nu)M_k - \nu ( K_{\subx} M_k N_k^\top K_{\suby} N_k-K_{\suby} N_k )\big),
}
and hence $A_{k+1} = X^\top M_{k+1}$ 
\eqal{\label{eq:updateM_k} M_{k+1} = (1-\lambda \nu)M_k - \nu \big(K_{\subx} M_k N_k^\top K_{\suby} N_k-K_{\suby} N_k\big).}\\
\noindent As for $B$, assume $B_{k}=Y^\top N_{k}$: 
\eqals{
B_{k+1} &= B_k - \nu \nabla_B \mathcal{L}(A_k,B_k)\\
 &= B_k - \nu ((X A_k B_k^\top-Y )^\top X A_k +\lambda B_k)\\
 &= Y^\top N_k - \nu  (Y^\top N_k A_k^\top X^\top X A_k-Y^\top X A_k) - \nu \lambda Y^\top N_k\\
 &= Y^\top ((1-\lambda \nu)N_k - \nu (N_k (K_{\subx} M_k)^\top K_{\subx} M_k- K_{\subx} M_k))}
and hence $B_{k+1} = Y^\top N_{k+1}$ with 
\eqal{\label{eq:updateN_k} N_{k+1} = (1-\lambda \nu)N_k - \nu ( N_k M_{k}^\top K^\top_\xx K_{\subx}  M_{k}-K_{\subx} M_{k}).}

Then, denote by $M$ and $N$ the limits of $M_k$ and $N_k$. 
Given a new $x$, the estimator is  \[ \hat{g}_k(x) = x X^\top M_k N_k^\top Y.\] Expanding the product we can rewrite \[
\hat{g}_k(x) = \sum_{i=1}^n \alpha^{\mathsf{tn}}_i(x) \psi(y_i), \qquad  \alpha^{\mathsf{tn}}(x)=N_k M_k^\top X x^\top = N_k M_k^\top v_x,
 \]
where $v_x=X x^\top\in\R^n$.
 Let $\decoding$ be the decoding map defined by \[ 
\decoding(h) =  \argmin_{y\in\yy} \sp {\psi(y)} {Vh}.
 \]
 Then \[
  \hat{f}_k(x)= \decoding\circ \hat{g}_k(x)=  \argmin_{y\in\yy}\sum_{i=1}^n \alpha^{\mathsf{tn}}_i(x)\sp {\psi(y)} {V \psi(y_i)} =   \argmin_{y\in\yy} \sum_{i=1}^n \alpha^{\mathsf{tn}}_i(x)\ell (y, {y_i}). 
 \]
 
 Note that in order to obtain the estimator $\hat{g}_k$, only the access to $M_k$ and $N_k$ is needed. Also, examining the updates for $M_k$ and $N_k$ outlined in \eqref{eq:updateM_k} and \eqref{eq:updateN_k} we note that the data are accessed through $K_{\subx}$ and $K_{\suby}$ only, which are kernels on input and output respectively. This leads to  a direct extension of the argument in the infinite dimensional setting, where the RKHSs $\hx$ and $\hy$ on input and output spaces are infinite dimensional Hilbert spaces. 
\end{proof}

\subsection{Loss Trick in the Multitask Setting}\label{subsec:loss_tricks2}

We now turn to the \textbf{multitask} case. 

We recall the surrogate problem with trace norm regularization, i.e. 
\eqal{\label{eq:surr_mlt_tnorm}\min_{G\in\R^T\otimes \hx}\frac{1}{T}\sum_{t=1}^T\frac{1}{n_t} \sum_{i=1}^{n_t} \nor{G_t\phi(x_{it}) - \psi(y_{it})}^2 + \lambda\tnorm{G}.}
\begin{restatable}{proposition}{losstrickmlt}\label{prop:loss_trick_mlt}
Let $k:\xx\times \xx\rightarrow \R$ be a reproducing kernel with associated RKHS $\hx$. Let $\hat{g}=\hat{G}\phi(\cdot)$ be the solution of problem \eqref{eq:surr_mlt_tnorm}, denote by $\hat{g}_t$, $t=1,\dots,T$ its components. Then the loss trick applies to this setting, i.e. the estimator $\hat{f} = \decoding\circ \hat{g}$ with $\decoding_T$ as in \Cref{eq:dec_mtl},  is equivalently written as \eqal{
\hat{f}(x)=\argmin_{c\in\conset}\sum_{t=1}^T\sum_{i=1}^{n_t}\alpha^{\mathsf{tn}}_{it}(x)\ell(c_t,y_{it}),
}
for some coefficients $\alpha_{it}$ which are derived in the proof below.
\end{restatable}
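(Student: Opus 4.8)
The argument parallels that of \cref{thm:loss-trick}, now applied to the \emph{aggregated} operator (see \cref{sec:appe_mtl} for the precise construction): the operators $G_1,\dots,G_T\in\hy\otimes\hx$ are stacked into a single $G\in(\hy\otimes\R^T)\otimes\hx$ so that, writing $\{e_t\}_{t=1}^{T}$ for the canonical basis of $\R^T$, each task operator is recovered as $G_t=(I_{\hy}\otimes e_t^{*})G$, and $\tnorm{G}$ is the trace norm of $G$, which is what couples the tasks. With this notation the fidelity term of \cref{eq:surr_mlt_tnorm} is $\frac{1}{T}\sum_{t}\frac{1}{n_t}\sum_{i}\nor{(I_{\hy}\otimes e_t^{*})G\,\phi(x_{it})-\psi(y_{it})}_{\hy}^2$. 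First I would introduce the variational form of the trace norm and rewrite \cref{eq:surr_mlt_tnorm} as the smooth, factorized problem of minimizing $\mathcal{L}(A,B)=\frac{1}{T}\sum_{t=1}^{T}\frac{1}{n_t}\sum_{i=1}^{n_t}\nor{A_t B^{*}\phi(x_{it})-\psi(y_{it})}_{\hy}^2+\lambda(\nor{A}_{\hs}^2+\nor{B}_{\hs}^2)$ over $A\in(\hy\otimes\R^T)\otimes\R^r$ and $B\in\hx\otimes\R^r$, where $A_t=(I_{\hy}\otimes e_t^{*})A$ and $G=AB^{*}$; the per-task weights $1/(Tn_t)$ are absorbed into a reweighting of the summands exactly as in the proof of \cref{thm:loss-trick}.

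Next I would run gradient descent on $\mathcal{L}$ starting from $B_0=\sum_{t}\sum_{i}\phi(x_{it})\otimes N_0^{it}$, which lies in the span of the input features, and $A_0=\sum_{t}\sum_{i}(e_t\otimes\psi(y_{it}))\otimes M_0^{it}$, which lies in the span of the ``task-slotted'' output features, with coefficient matrices $M_0,N_0$ indexed by the pairs $(i,t)$ (the index set being the disjoint union $\bigsqcup_{t}\{1,\dots,n_t\}$, which accommodates different sample sizes $n_t$). An induction on the iteration number then shows that every iterate $(A_k,B_k)$ keeps this form: computing $\nabla_A\mathcal{L}$ and $\nabla_B\mathcal{L}$ shows that $\nabla_B\mathcal{L}$ lies in $(\mathrm{span}\{\phi(x_{it})\})\otimes\R^r$ up to the term $\lambda B_k$, while $\nabla_A\mathcal{L}$ lies in $(\mathrm{span}\{e_t\otimes\psi(y_{it})\})\otimes\R^r$ up to the term $\lambda A_k$; the crucial point is that, by the slot structure of $A_k$, the vector $A_{k,t}B_k^{*}\phi(x)$ is a linear combination of the $\psi(y_{jt})$, $j=1,\dots,n_t$, \emph{only}. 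The induced updates for $M_k,N_k$ access the data exclusively through the input Gram matrix $K_{\subx}$ (entries $k_{\subx}(x_{it},x_{js})$) and the output Gram matrix $K_{\suby}$ (entries $k_{\suby}(y_{it},y_{js})$, which enters block-diagonally in the task index because of the $e_t$ slots), giving an explicit algorithm analogous to \cref{alg:trace norm}; passing to the limit $M,N$ of the $M_k,N_k$ yields a minimizer $\hat G=AB^{*}$ of \cref{eq:surr_mlt_tnorm}.

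Finally, for a new point $x$ I would expand $\hat g_t(x)=A_tB^{*}\phi(x)=\sum_{i=1}^{n_t}\alpha^{\mathsf{tn}}_{it}(x)\,\psi(y_{it})$ with $\alpha^{\mathsf{tn}}_{it}(x)=(MN^{\top}v_x)_{(i,t)}$ and $v_x=(k_{\subx}(x_{js},x))_{js}$ --- again, it is the slot structure that forces $\hat g_t(x)$ to involve only task $t$'s outputs. Substituting into the NL-MTL decoding \cref{eq:dec_mtl} and using the SELF identity $\sp{\psi(c_t)}{V\psi(y_{it})}=\loss(c_t,y_{it})$ from \cref{eq:self} gives $\hat f(x)=\argmin_{c\in\conset}\sum_{t=1}^{T}\sum_{i=1}^{n_t}\alpha^{\mathsf{tn}}_{it}(x)\,\loss(c_t,y_{it})$, which is the claim; since all data access is through $K_{\subx}$ and $K_{\suby}$, the argument remains valid when $\hx$ and $\hy$ are infinite dimensional.

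The step I expect to be the main obstacle is the bookkeeping of the task/slot structure of the aggregated operator: one must verify carefully that gradient descent keeps the output-side factor supported on the slots $\{e_t\otimes\psi(y_{it})\}$, so that each $\hat g_t(x)$ stays a combination of task $t$'s outputs alone and the decoding \cref{eq:dec_mtl} genuinely decouples across tasks. Handling the disjoint-union index set and the block-diagonal occurrence of $K_{\suby}$ is elementary but error-prone; once it is set up correctly, the remaining computations are a routine transcription of the finite-dimensional calculation in the proof of \cref{thm:loss-trick}.
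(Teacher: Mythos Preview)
Your proposal is correct and follows essentially the same strategy as the paper's proof: pass to the variational factorization $G=AB^{*}$, show by induction that gradient descent keeps the input-side factor in the span of the $\phi(x_{it})$ and the output-side factor in the task-blocked span of the $\psi(y_{it})$, and then read off the per-task coefficients $\alpha^{\mathsf{tn}}_{it}(x)$ to obtain the loss trick via \cref{eq:dec_mtl}. The only cosmetic differences are that the paper swaps the roles of $A$ and $B$ (its $A=X^{\top}M$ is input-side, and each row satisfies $B_{t,k}=Y_t^{\top}N_{t,k}$), and it encodes the differing sample sizes $n_t$ through a mask $Q$ on a full $n\times T$ matrix rather than through your disjoint-union index set; your slot-structure argument and the paper's row-wise induction on $B_t$ are the same computation in different notation.
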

\begin{proof}
Assume  $\xx = \R^d$ and $\hy = \R^T$ for the sake of clarity, so that $G\phi(x)=Gx$. For any $t=1,\dots,T$, let $\{(x_{it},y_{it})\}_{i=1}^{n_t}$, be the training set for the $t^{th}$ task. 

Denote by $X\in \R^{n\times d}$ the matrix containing  the training inputs $x_{it}$, and by $Y \in \R^{n \times T}$ the matrix whose rows are $\psi(y_{it})$; denote by $X_t$ the $n_t \times d$ matrix containing training inputs of the $t^{th}$ task and by $Y_t$ the $n_t\times 1$ vector with entries $\psi(y_{it})$ $i=1,\dots,n_t$. 
We rewrite \eqref{eq:surr_mlt_tnorm} using the variational form of the trace norm:
\eqal{ \min_{ A \in \R^{d\times r}, B\in \R^{T\times r}}\nor{Q \odot(Y - X A B^\top)}^2+\lambda (\nor{A}_{\hs}^2+\nor{B}_{\hs}^2),}
where $r\in\mathbb{N}$ is now a hyperparameter and  $Q$ is a mask which contains zeros in correspondence of missing data.  The expression above is also equivalent to 
\eqals{
\min_{ A \in \R^{d\times r}, B\in \R^{T\times r}}\frac{1}{T}\sum_{t=1}^T\bigg( \frac{1}{n_t} \nor{X_t A B_t - Y_t}^2+\lambda(\nor{B_t}_{\hs}^2 + \nor{A}_{\hs}^2)\bigg),}
where $B_t$ denotes the $t^{th}$ row of $B$, i.e. $B_t$ is a $1\times r$ vector. Thanks to this split, we can update $B$ by updating its rows separately, via  (we omit factors 2 which would come from derivatives)
\eqals{B_{t,k+1} = B_{t,k} - \nu \big( n_{t}^{-1}(B_{t,k} A_k^\top X_t^\top - Y_t^\top)X_t A_k+\lambda B_{t,k}\big).}
Initialising $B_{t,0} = Y_t^\top N_{t,0}$ for some matrix $N_{t,0}\in\R^{n_t \times r}$, gradient descent updates preserve the structure, and for each $k$, $B_{t,k} = Y_t^\top N_{t,k}$. Indeed, 
\eqals{B_{t,k+1}  &= B_{t,k} - \nu\big( n_{t}^{-1}(B_{t,k} A_k^\top X_t^\top - Y_t^\top)X_t A_k+\lambda B_{t,k}\big)\\
& = Y_t^\top N_{t,k} - \nu \big(  n_{t}^{-1}(Y_t^\top N_{t,k} A_k^\top X_t^\top - Y_t^\top)X_t A_k+\lambda Y_t^\top N_{t,k}   \big)\\
&= Y_t^\top\big( (1-\nu\lambda)N_{t,k} -\nu  n_{t}^{-1}(N_{t,k} A_k^\top X_t^\top X_t A_k  - X_t A_k)\big)\\
& = Y_t^\top N_{t,k+1} }
where \eqals{N_{t,k+1} = (1-\nu\lambda)N_{t,k} -\nu  n_{t}^{-1}(N_{t,k} A_k^\top X_t^\top X_t A_k  - X_t A_k).}

Let us now focus on updates of $A$, and then combine the two. Set \[ \mathcal{L}(A,B):= \nor{Q\odot(Y - X A B^\top)}^2+\lambda \bigg( {\nor{A}_{\hs}^2+\nor{B}_{\hs}^2}\bigg);\]
Note that 
the gradient with respect to $A$ reads as  $\nabla_A\mathcal{L}(A,B) =X^\top(Q\odot(XAB^\top-Y))B + \lambda A$.
Hence, initialising $A_0 = X^\top M_0$, each iterate $A_k$ has the form $X^\top M_k$ and it is possible to perform updates on $M_k$ only as in the proof of  \cref{thm:loss-trick}, via 
\eqals{M_{k+1} = (1-\lambda \nu)M_k -\big( (Q \odot( X X^\top M_k B_k^\top-Y) B_k \big).} 
Let us analyse the term $(Q \odot( X X^\top M_k B_k^\top-Y)) B_k $: leveraging the structure of the mask, 
\eqal{(Q \odot( X X^\top M_k B_k^\top-Y)) B_k = [S_1^\top, \dots, S_T^\top]^\top}
where $S_t$ is a $n_t \times r$ matrix equal to 
\eqals{S_t = X_t X^\top M_k B_{t,k}^\top B_{t,k} = X_t X^\top M_k N_{t,k}^\top Y_tY_t^\top N_{t,k}. }
At convergence, we will have $A = X^\top M$ and $B_{t} = Y_t^\top N_{t}$ for $t=1,\dots,T$.
Hence, the $t^{th}$ component of the estimator is given by 
\eqals{\hat{g}_t(x) = x A B_t^\top = x X^\top M N_{t}^\top Y_t= \sum_{i=1}^{n_t}\alpha^{\mathsf{tn}}_{it}(x)\psi(y_{it}),\qquad \alpha^{\mathsf{tn}}_t(x) =  N_t M^\top X x^\top.}
Then, the estimator $\hat{f}_N$, with $N = (n_1,\dots,n_T)$  is given by
 \eqals{\hat{f}_N(x) =\argmin_{c\in\conset} \sum_{t=1}^T \sp{c_t}{V\hat{g}_t(x)} = \argmin_{c\in\conset} \sum_{t=1}^T \sum_{i=1}^{n_t} \alpha^{\mathsf{tn}}_{it}(x)\sp{c_t}{V \psi(y_{it})} = \argmin_{c\in\conset} \sum_{t=1}^T \sum_{i=1}^{n_t}\alpha^{\mathsf{tn}}_{it}(x) \ell(c_t,y_{it}),}
 and hence the loss trick holds.
\end{proof}

\subsection{Remark on the Lack of Loss Trick for Regularizers via Positive Semidefinite Operator}\label{sec:C0}
  Assume $\xx=\R^d$, $\hy= \R^T$ and let $Y$ be the $n\times T$ matrix containing $\psi(y_i)$ in its rows. Given $A\in\R^{T\times T}$ symmetric positive definite, the surrogate problem with regularizer $\tr(GAG^\top)$ reads as 
  \eqals{
  	\frac{1}{n}\nor{Y - XG}^2 + \lambda\tr(GA G^\top).
  }
 We omit the factor $1/n$ as it is does not affect what follows. The problem above has  the following solution (see for instance \citep{alvarez2012kernels})
 \eqals{\mathsf{vec}(G) = (I \otimes X^\top X +\lambda A\otimes I)^{-1}(I\otimes X^\top)\mathsf{vec}(Y).}
 This can be rewritten as
  \eqals{
\mathsf{vec}(G)   &=  (A^{-1/2} \otimes I)(A^{-1}\otimes X^\top X +\lambda I)^{-1}(A^{-1/2}\otimes X^\top)\mathsf{vec}(Y)\\
&=(A^{-1} \otimes X^\top )(A^{-1}\otimes K + \lambda I)^{-1}\mathsf{vec}(Y),
 }
 where $K= X X^\top$ is the kernel matrix. Setting $\textsf{vec}(M(Y)) = (A^{-1}\otimes K + \lambda I)^{-1}\mathsf{vec}(Y)$,
 \eqals{\mathsf{vec}(G) = (A^{-1}\otimes X^\top)\mathsf{vec}(M(Y)) = \mathsf{vec}(X^\top M(Y) A^{-\top}) = \mathsf{vec}(X^\top M(Y) A^{-1}), }
 since $A$ is symmetric. Then $G = X^\top M(Y) A^{-1}$. 
The decoding procedure yields \eqals{
\hat{f}(x) = \decoding(\hat{g}(x)) = \argmin_{y\in\yy}\sp{Y}{V\hat{g}(x)}= \argmin_{y\in\yy}\sp{Y}{V A^{-1} M(Y)^\top v_x},}
and due to the product $VA^{-1}$ we cannot retrieve the loss function, i.e. the loss trick.

  Now, let us distinguish the following cases 
  \begin{itemize}
  \item[$\mathbf{1.}$]  $\yy$ has finite cardinality;
      \item[$\mathbf{2.}$] $\yy$ has not finite cardinality, $\hy$ is infinite dimensional or  $\psi$  and $V$ are unknown.
  \end{itemize} 
In the \textit{first} case, let us set $\mathsf{N} = \{1,\dots,\abs{\yy} \}$ and $\hy = \R^{\abs{\yy}}$. Let $q:\yy\rightarrow \mathsf{N}$ be a one-to-one function and for $y\in\yy$ set $Y= e_{q(y)}$ where $e_i$ denoted the $i^{th}$ element of the canonical basis of $\R^{\abs{\yy}}$. 
Also, set $V\in\R^{\abs{\yy}\times \abs{\yy}}$ the matrix with entries $V_{ij} = \ell(q^{-1}(i), q^{-1}(j))$. Then, since $A$ is a known matrix, $\psi$ and $V$ are defined as above, the estimator $\hat{f}$ can be retrieved despite the lack of loss trick.\\

In the \textit{second} case, it is not clear how to manage the operation $VA^{-1}$ since $V$ is unknown and also, both $V$ and $A$ are bounded operators from an infinite dimensional space to itself. While in the standard SELF framework, the infinite dimensionality is hidden in the loss trick, and there is no need to explicitly deal with infinite dimensional objects, here it appears to be necessary due to the action of $A$.

\section{Theoretical Analysis}\label{sec:vv}
\thmgenboundg*
\begin{proof}
 \noindent We split the error as follows:
 \eqals{\risk(\hat{g}) - \risk(g_*)\leq& \risk(\hat{g}) - \hat{\risk}(\hat{g})+
 \hat{\risk}(\hat{g}) - \hat{\risk}(g_{\gamma *})\\
 +&\hat{\risk}(g_{\gamma *}) -  \risk(g_{\gamma *})+ 
 \risk(g_{\gamma *}) - \risk(g_*).}
Now, by definition of $\hat{g}$ the term $\hat{\risk}(\hat{g}) - \hat{\risk}(g_{\gamma *})$ is negative. Also, denoting by $\rho_{t\mid\xx}$ the marginal on $\xx$ of the probability measure $\rho_t$,
\eqal{\label{eq:part3_of_dec}\risk(g_{\gamma *}) - \risk(g_*)& = \int_{\xx}\nor{G_{\gamma *}\phi(x) - G_{*}\phi(x)}_{\hy}^2\,d\rho_{\xx}(x) = \inf_{\{ G \in\GG_\gamma \}}\nor{G\phi(x) - G_{*}\phi(x)}^2_{L^2(\rho_{\xx})}\\
&\leq \nor{\big(\frac{\gamma}{\tnorm{G_{*}}}\big)G_{*}\phi(x) - G_{*}\phi(x)}^2_{L^2(\rho_{\xx})} \leq \Big( 1-\frac{\gamma}{\tnorm{G_*}}\Big)^2 \nor{G_{*}\phi}_{L^2(\rho_{\xx})}^2\\
&\leq \Big( 1-\frac{\gamma}{\tnorm{G_*}}\Big)^2 \skx^2\nor{G_*}_{\hs}^2\leq (\tnorm{G_*} - \gamma)^2\skx^2.}

\noindent It remains to bound $R_1:=\risk(\hat{g}) - \hat{\risk}(\hat{g})$ and $R_2:=\hat{\risk}(g_{\gamma *}) -  \risk(g_{\gamma *})$. Since
\eqals{R_1+R_2 \leq 2\sup_{G\in\GG_\gamma}\abs{\hat{\risk}(G) - \risk(G)},}
we just have to bound the term on the right hand side.

Denote 
\eqals{
    & C = \EE \phi(x)\otimes\phi(x), \qquad \hat C = \frac{1}{n} \sum_{i=1}^n \phi(x_i)\otimes\phi(x_i) \\
    & Z = \EE \psi(y)\otimes\phi(x), \qquad \hat Z = \frac{1}{n} \sum_{i=1}^n \psi(y_i)\otimes\phi(x_i). 
}
For any operator $G$ in $\hy\otimes \hx$, we have
\eqals{
    \abs{\hat{\risk}(G) - \risk(G)} & = \left| \frac{1}{n} \sum_{i=1}^n \nor{\psi(y_i)-G\phi(x_i)}_\hh^2 - \EE \nor{\psi(y) - G\phi(x)}_\hh^2 \right| \\
    & = \Bigg| \frac{1}{n} \sum_{i=1}^n\left( \scal{G^*G}{\phi(x_i)\otimes\phi(x_i)}_{\hs}  - 2\scal{G}{\psi(y_i)\otimes\phi(x_i)}_{\hs} + \nor{\psi(y_i)}_\hy^2 \right) \\
    & \qquad\qquad - \EE \left(\scal{G^*G}{\phi(x)\otimes\phi(x)}_\hs - 2 \scal{G}{\psi(y)\otimes\phi(x)}_\hs + \nor{\psi(y)}_\hy^2 \right) \Bigg| \\
    & = \left| \scal{G^*G}{\hat C - C}_{\hs} - 2\scal{G}{\hat Z - Z}_{\hs} + \frac{1}{n}\sum_{i=1}^n \nor{\psi(y_i)}_\hy^2 - \EE \nor{\psi(y)}_\hy^2 \right| \\ 
    & \leq \nor{G}_{\hs}^2\opnorm{C - \hat C} + 2 \tnorm{G}\opnorm{Z - \hat Z} + \left|\EE\nor{\psi(y)}_\hy^2 - \frac{1}{n}\sum_{i=1}^n\nor{\psi(y_i)}_\hy^2\right|. 
}
In the last inequality we used that $\tnorm{G^*G}=\nor{G^*G}_{\hs}=\nor{G}_{\hs}^2$ in the first part.
In the following we bound $\opnorm{C - \hat{C}}$ and $\opnorm{Z - \hat{Z}}$, in two different steps.\\

\textbf{STEP 1} Let us start with $\opnorm{C - \hat{C}}$.
We leverage the result in \citep{minsker2017some} on Bernstein's inequality for self adjoint operators, which are recalled in \cref{lem:berst_ineq} below. 
Let us set \eqals{X_i:=(\phi(x_i)\otimes \phi(x_i) - C)/n} and note that $\EE(X_i) = 0$. 
Also, resolving the square we have that 
\eqals{
\EE(X_i^2) = \frac{1}{n^2}\EE(\sp{\phi(x_i)}{\phi(x_i)}\phi(x_i)\otimes \phi(x_i) -2 \phi(x_i)\otimes \phi(x_i) C+ C^2)
= \frac{1}{n^2} \EE(\skx^2\phi(x_i)\otimes \phi(x_i)) - C^2,
}
and hence (we assume $\skx\geq 1$)  \eqals{
\nor{\sum_{i=1}^n \EE X_i^2}\leq \frac{1}{n}(\skx^2\nor{C}_{\rm op} + \nor{C}_{\rm op}^2)\leq \frac{2\skx^2}{n}\opnorm{C}=:\sigma^2,
}

Since $\nor{\phi(x_i)}\leq \skx$ for any $i=1,\dots,n$, we get 

\eqals{ 
\nor{X_i}\leq \frac{\skx^2 + \opnorm{C}}{n}\leq \frac{2\skx^2}{n}:=U.
}

Set \eqals{
\bar{r}_1:= \frac{\tr\left(\sum_{i=1}^n \EE X_i^2\right)}{\opnorm{\sum_{i=1}^n \EE X_i^2}}.
}
Note that the quantity above is the effective rank of $\sum_{i=1}^n \EE X_i^2$.
With $\sigma^2$ and $U$ as above, \cref{lem:berst_ineq} yields 

\eqals{
\opnorm{C-\hat{C}} \leq \frac{4}{n}\Big( \frac{\skx^2}{3}\ln\Big(\frac{14\bar{r}_1}{\delta}\Big)\Big) + \sqrt{\frac{4\skx^2\opnorm{C}}{n}\ln\Big(\frac{14 \bar{r}_1}{\delta}\Big)}
}
with probability greater or equal to $1-\delta$.\\

\textbf{STEP 2} As for $\opnorm{Z - \hat{Z}}$ we proceed in a similar way: let $X_i : = (\psi(y_i)\otimes \phi(x_i) - Z)/n$. 
Then, 
\eqals{\nor{X_i} \leq \frac{\sky\skx + \nor{Z}_{\rm op}}{n}\leq \frac{2\skx\sky}{n}.
}
Also,

\eqal{
\EE X_i^*X_i &=  \frac{1}{n^2} \EE[(\phi(x_i)\otimes \psi(y_i) - Z^*)(\psi(y_i)\otimes \phi(x_i) - Z)] \\&= \frac{1}{n^2}\left( \EE(\sp{\psi(y_i)}{\psi(y_i)}\phi(x_i)\otimes \phi(x_i))-Z^*Z\right)
\preceq  \frac{2}{n^2} \EE(\sp{\psi(y_i)}{\psi(y_i)}\phi(x_i)\otimes \phi(x_i)).
}
Then 
\eqals{
\opnorm{\sum_{i=1}^n \EE X_i^* X_i}\leq \frac{2}{n}\opnorm{\EE(\sp{\psi(y)}{\psi(y)}\phi(x)\otimes \phi(x))}.
}
Applying \cref{lem:bound_ky}, we obtain 
\eqals{
\opnorm{\sum_{i=1}^n \EE X_i^* X_i}\leq \frac{2\skx^2}{n}( \nor{G_*}_{\hs}^2\opnorm{C}+\risk(g_*)).
}

Similarly, 
\eqals{
\EE X_iX_i^* &=  \frac{1}{n^2} \EE[(\psi(y_i)\otimes \phi(x_i) - Z)(\phi(x_i)\otimes \psi(y_i) - Z^*)] \\&= \frac{1}{n^2}\left( \EE(\sp{\phi(x_i)}{\phi(x_i)}\psi{y_i})\otimes \psi(y_i))-ZZ^*\right)
\preceq  \frac{2}{n^2} \EE(\sp{\phi(x_i)}{\phi(x_i)}\psi(y_i)\otimes \psi(y_i))\\
&\preceq  \frac{2}{n^2} \skx^2 \EE(\psi(y_i)\otimes \psi(y_i)).
}
and 
\eqals{
\opnorm{\sum_{i=1}^n \EE X_i X_i^*}\leq \frac{2\skx^2}{n}\opnorm{\EE(\psi(y)\otimes \psi(y))}.
}
Applying \cref{lem:boundnorm_cy}, we conclude 
\eqals{
\opnorm{\sum_{i=1}^n \EE X_i X_i^*}\leq  \frac{2\skx^2}{n}( \nor{G_*}_{\hs}^2\opnorm{C}+\risk(g_*)).
}
Hence both $\opnorm{\sum_{i=1}^n \EE X_i X_i^*}$ and $\opnorm{\sum_{i=1}^n \EE X_i^* X_i}$ are bounded by $\frac{2\skx^2}{n}( \nor{G_*}_{\hs}^2\opnorm{C}+\risk(g_*))$.

Moreover, let 

\eqals{
    \bar r_2 = \max\left(\frac{\tr(\sum_{i=1}^n \EE X_i X_i^*)}{\nor{\sum_{i=1}^n \EE X_i X_i^*}_{\rm op}},\frac{\tr(\sum_{i=1}^n \EE X_i^* X_i)}{\nor{\sum_{i=1}^n \EE X_i^* X_i}_{\rm op}}\right),
}
which corresponds to the maximum between effective ranks of $\sum_{i=1}^n \EE X_i X_i^*$ and $\sum_{i=1}^n \EE X_i^* X_i$.

Bernstein's inequality shown in \citep{minsker2017some} (and recalled in \cref{lem:berst_ineq_2}) gives 
\eqals{
\nor{Z-\hat{Z}}_{\rm op}\leq \frac{4}{n}\Big( \frac{\skx \sky }{3}\ln\Big(\frac{28\bar{r}_2}{\delta}\Big)\Big)+ \sqrt{\frac{2\skx^2(\nor{G_*}_{\hs}^2\opnorm{C}+\risk(g_*))}{n}\ln\Big(\frac{28 \bar{r}_2}{\delta}\Big)}
} 
with probability greater or equal to $1-\delta$. Splitting the second term we see that \eqals{
\nor{Z-\hat{Z}}_{\rm op}\leq \frac{4}{n}\Big( \frac{\skx \sky }{3}\ln\Big(\frac{28\bar{r}_2}{\delta}\Big)\Big)+\left(\nor{G}_{\hs}\opnorm{C}^{\frac{1}{2}}\skx +\skx\sqrt{\risk(g_*)}\right)\sqrt{\frac{2}{n}\ln\Big(\frac{28 \bar{r}_2}{\delta}\Big)}.
}
\textbf{STEP 3.} Finally, by Hoeffding inequality \eqals{
\left|\EE\nor{\psi(y)}_\hy^2 - \frac{1}{n}\sum_{i=1}^n\nor{\psi(y_i)}_\hy^2\right|\leq\sky \sqrt{\ln\Big(\frac{2}{\delta}\Big)\frac{1}{n}}
}
with probability at least $1-\delta$.

\textbf{FINAL STEP.} We have now all the bounds that we need. By taking $\mathsf{r} = \max(\bar{r}_1,\bar{r}_2)$ and performing an intersection bound on the three parts we conclude 
\eqal{\label{eq:bound1} \abs{\hat{\risk}(G) - \risk(G)}\leq \gamma^2\Big(\frac{A}{n}+\frac{B}{\sqrt{n}}\Big) + \gamma\Big(\frac{A'}{n}+\frac{B'}{\sqrt{n}}\Big) + \sky\sqrt{\ln \Big( \frac{2}{\delta}\Big)\frac{1}{n}}} with probability greater or equal than $1-3\delta$, with \eqals{
&A = 4\ln\Big( \frac{28\mathsf{r}}{\delta}\Big)\frac{\skx^2}{3}, \qquad B = (2+\sqrt{2})\skx\opnorm{C}^{\frac{1}{2}}\sqrt{\ln \Big(\frac{28\mathsf{r}}{\delta}}\Big)\\
&A' =  4\ln\Big( \frac{28\mathsf{r}}{\delta}\Big)\frac{\skx\sky}{3}, \qquad B' =\skx\sqrt{2\risk(g_*)}\sqrt{\ln \Big(\frac{28\mathsf{r}}{\delta}\Big).}}

Combining with the approximation error in \cref{eq:part3_of_dec}, we obtain 
\eqals{
\risk(\hat g) - \risk(g_*) \leq \gamma^2\Big(\frac{A}{n}+\frac{B}{\sqrt{n}}\Big) + \gamma\Big(\frac{A'}{n}+\frac{B'}{\sqrt{n}}\Big) + \sqrt{\ln \Big( \frac{2}{\delta}\Big)\frac{\sky^2}{n}} + (\nor {G_*}_*-\gamma)^2\skx^2.}
In principle, starting from the bound above we should optimize with respect to $\gamma$ to find the optimal value, which will be between 0 and $\nor{G_*}_*$.  Here we consider the simpler case where $\gamma = \nor{G_*}_*$.
Isolating the faster terms, the bound above becomes 
\eqals{\risk(\hat{g}) - \risk(g_*)\leq
\frac{\nor{G_*}_{*}^2}{\sqrt{n}}\left(\skx\opnorm{C}^{\frac{1}{2}}(2+\sqrt{2}) + \nor{G_*}_{*}\skx\sqrt{2\risk(g_*)}\right)\sqrt{\ln\Big( \frac{28\mathsf{r}}{\delta}\Big)} + \sky\sqrt{\ln \Big( \frac{2}{\delta}\Big)\frac{1}{n}}
}
Rearranging we get
\eqal{\label{eq:eq_for_comp_hs}
\risk(\hat{g}) - \risk(g_*)\leq
\frac{\nor{G_*}_{*}}{\sqrt{n}}\Big[\Big((\sqrt{2}+1)\skx\tnorm{G_*}\nor{C}_{\rm op}^{\frac{1}{2}}+ \skx\sqrt{\risk(g_*)}\Big)\sqrt{2\ln \Big( \frac{28\mathsf{r}}{\delta}\Big)}\Big]+\sky\sqrt{\ln \Big( \frac{2}{\delta}\Big)\frac{1}{n}} 
}

with probability greater or equal to $1-3\delta$.  Bounding $\ln \left( \frac{2}{\delta}\right)$ with $\ln \left( \frac{28\mathsf{r}}{\delta}\right)$ we get 
\eqals{
\risk(\hat{g}) - \risk(g_*)\leq (\mathsf{c} +\sky)\sqrt{\frac{\ln(\frac{\mathsf{r}}{\delta})}{n}} + O(n^{-1})
}
where $\mathsf{c} = (2 + \sqrt{2})\skx\tnorm{G_*}^2\nor{C}_{\rm op}^{\frac{1}{2}}+\sqrt{2}\tnorm{G_*}\skx\risk(g_*)$. In the main body of the paper we bound it as 
\eqals{
\risk(\hat{g}) - \risk(g_*)\leq (\mathsf{c} +\sky)\sqrt{\frac{4\ln(\frac{\mathsf{r}}{\delta})}{n}} + O(n^{-1})
}
with $\mathsf{c} = 2\skx\tnorm{G_*}^2\nor{C}_{\rm op}^{\frac{1}{2}}+\tnorm{G_*}\skx\risk(g_*)$ to make it neater.
\end{proof}

\textnormal{\textbf{Comparison with Hilbert-Schmidt regularization.}} The goal of this remark is a comparison between the constants in the bound for the trace norm estimator and in the bound we would obtain with Hilbert-Schmidt estimator. 

\textbf{Bound for HS-regularization.} We show here the bound obtained with Hilbert-Schmidt regularization. In this case, $\GG_\gamma:=\{g(\cdot) = G\phi(\cdot) \mid \nor{G}_{\hs}\leq \gamma\}$.
Note that if $G$ is a Hilbert-Schmidt operator, then $G^* G$ is a trace norm operator. Therefore, the term $\scal{G^*G}{\hat C - C}_{\hs}$ can be bounded as before:

\eqals{
\opnorm{C-\hat{C}} \leq \frac{4}{n}\Big( \frac{\skx^2}{3}\ln\Big(\frac{14\bar{r}_1}{\delta}\Big)\Big) + \sqrt{\frac{4\skx^2\opnorm{C}}{n}\ln\Big(\frac{14 \bar{r}_1}{\delta}\Big)}
}

On the other hand, for  the second term $\scal{G}{\hat Z - Z}_{\hs}$  we have \eqals{
\left|\scal{G}{\hat Z - Z}_{\hs}\right|\leq \nor{G}_{\hs}\nor{\hat Z - Z}_{\hs}.
}
Now, in order to bound $\nor{\hat Z - Z}_{\hs}$, we note that $\nor{Z}_{\hs}^2 \leq \skx^2 \EE \nor{\psi(y)}^2_{\hs}=\skx^2\tr(C_Y)$. Proceeding in a similar way as in \cref{lem:boundnorm_cy}, we obtain that $\tr(C_Y)\leq \risk(g_*) +\skx^2\nor{G_*}_{\hs}$ and hence $\nor{Z}^2_{\hs}\leq \skx^2\risk(g_*) + \skx^4 \nor{G_*}_{\hs}^2.$
From Lemma 2 in \citep{Smale2007}, \eqals{
\nor{\hat Z - Z}_{\hs}\leq \sqrt{\frac{2(\skx^2\risk(g_*) + \skx^4 \nor{G_*}_{\hs}^2)}{n}}\sqrt{\ln\Big(\frac{2}{\delta}\Big)} + O(n^{-1}).
}
Finally, no difference holds for the last term $\left|\EE\nor{\psi(y)}_\hy^2-\frac{1}{n}\sum_{i=1}^n\nor{\psi(y_i)}_\hy^2\right|$. Hence, combining the three parts and bounding $\ln(\frac{2}{\delta})$ with  $\ln(\frac{14\mathsf{r}}{\delta})$, we get
\eqal{\label{eq:bound_HS}
\risk(\hat g_{\hs}) - \risk(g_*) \leq \frac{\nor{G_*}_{\hs}}{\sqrt{n}}\Big( \nor{G_*}_{\hs} \nor{C}_{\rm op}^{\frac{1}{2}}2\skx +  \sqrt{2}\skx^2 \nor{G_*}_{\hs}  + \skx\sqrt{2\risk(g_*)} + \sky \Big)\sqrt{\ln \Big(\frac{14\mathsf{r}}{\delta}\Big)}+ O(n^{-1}).
}
Note that this bound slightly refines the excess risk bounds for HS regularization provided in \citep{CilibertoRR16}.\\

\textbf{Comparison and discussion.} Let us compare the bound with HS regularization in \cref{eq:bound_HS} with the bound for the trace norm estimator that we derived in the proof of \cref{thm:gen_boun_g}:
\eqals{
\risk(\hat{g}) - \risk(g_*)\leq
\frac{\nor{G_*}_{*}}{\sqrt{n}}\Big(\tnorm{G_*}\nor{C}_{\rm op}^{\frac{1}{2}}\skx+ \skx\sqrt{2\risk(g_*)}+\sky\Big)\sqrt{\ln \Big( \frac{28\bar{r}}{\delta}\Big)} + O(n^{-1}).
}
To make the comparison easier, we isolate the constants in the bounds: \eqals{&\textnormal{HS: }
2\skx \nor{G_*}^2_{\hs}\nor{C}_{\rm op}^{\frac{1}{2}} + \underline{\sqrt{2}\skx\nor{G}_{\hs}^2}+\skx\risk(g_*)\nor{G_*}_{\hs}+\sky\quad \textnormal{versus}\\
&\textnormal{TN: } (2+\sqrt{2})\skx\tnorm{G_*}^2\nor{C}_{\rm op}^{\frac{1}{2}}+\skx\risk(g_*)\tnorm{G_*}+\sky.
}

We can summarize the cases as below.

\begin{itemize}
\item If $\nor{G_*}_{\hs} \ll\tnorm{G_*}$, then the TN bound  gives no advantage over the HS one.

\item Whenever $\nor{G_*}_{\hs}$ and $\tnorm{G_*}$ are of the same order,  our result shows an advantage in the constant of the bound: indeed, while in trace norm case, the norm $\tnorm{G_*}$ is mitigated by $\nor{C}_{\rm op}^{\frac{1}{2}}$, in the HS case is it not, because of the extra term $\nor{G_*}_{\hs}^2\skx$.  Note that $\nor{C}_{\rm op}\leq \skx$ and the gap between the two can be significant: for instance, if $C$ is the covariance operator of a uniform distribution on a $d$-dimensional unit sphere, $\nor{C}_{\rm op}=1/d$ while $\skx=1$.
Hence the  entity of the improvement depends on how smaller $\nor{C}_{\rm op}$ is with respect to $\tr(C)$.

The point above holds true when the other quantities $(\sky,\tnorm{G}\risk(G_*))$ in the constant  do not dominate.  However, this is reasonable to expect:\\
-$\risk(g_*)$ is the minimum expected risk; \\
-$\skx$ is $1$ whenever we choose a normalized kernel on the input (Gaussian);\\
-$\sky$ is also typically $1$: $\sky$ is such that $\sup_{y\in\yy}\nor{k_{\suby}(y,\cdot)}_\hy\leq \sky^2$ where $h$ is a reproducing kernel on the output. Whenever $\yy$ is finite (and hence $k_{\suby}(y,y')=\delta_{y==y'}$) or the loss is smooth (and hence $h$ is the Abel kernel), $\sky=1$. 
\end{itemize}



\section{Theoretical Analysis: Multitask Case}\label{sec:appe_mtl}
 We consider the general multitask learning case which allows a different loss function for each task: the goal is to minimize the \textit{multi-task excess risk} \eqals{\min_{f:\xx\rightarrow\conset}\ee(f),\quad\ee(f)=\frac{1}{T}\sum_{t=1}^T\int_{\xx\times\R}\ell_t(f_t(x),y)d\rho_t(x,y),}
where the $\rho_t$ are unknown probability distributions on $\xx\times \R$ that are observed via finite samples ${(x_{it},y_{it})}_{i=1}^{n_t}$, for $t=1,\dots,T$ .
Each $\ell_t$ is required to satisfy the SELF assumption in \cref{def:self}, i.e. \eqals{
\ell_t(y,y') = \sp{\psi_t(y)}{V_t \psi_t(y')},
}
and for $t=1,\dots T$ $\skyt$ is a constant such that $\sup_{y\in\yy} \nor{\psi_t(y)}\leq \skyt.$
In this setting the surrogate problem corresponds to 
\eqals{
\min_{G:\hx\rightarrow\hy^T}\risk(G)\qquad \risk_T(G):=\frac{1}{T}\int_{\xx\times \yy} \nor{\psi_t(y) - G_t \phi(x)}_{\hy}^2d\rho_t(x,y),
}
and its solution is denoted with $G_*$. 
Note that each $G_t$ is an operator in $\hx \otimes \hy$ and $G$ denotes the operator from $\hx$ to $\hy^T$ whose $t^{th}$ component is $G_t$, $t=1,\dots T$.
Formally, $G = \sum_{t=1}^T G_t\otimes e_t$, with $(e_t)_{t=1}^T$ the canonical basis of $\R^T$. Since $\nor{G}_{\hs}^2 = \sum_{t}\nor{G_t}_{\hs}^2$, in case of HS regularization the surrogate problem considers each task $t$ separately.\\

Here we perform regularization with  trace norm of the operator $G$. Setting $\GG_\gamma=\{ g(\cdot)=G\phi(\cdot)\mid G:\,\, \hx\rightarrow \hy^T \textnormal{ is s.t. } \tnorm{G}\leq \gamma \}$, we study the estimator $\hat g$ given by
\eqal{\label{eq:surr_mtl_tn}
\hat{g} = \argmin_{g\in\GG_\gamma} \frac{1}{T}\sum_{t=1}^T \frac{1}{n_t}\sum_{i=1}^n \nor{g_t(x_{it}) - \psi_t(y_{it})}_{\hy}^2.
}

In the following we will consider $n_t = n$ for simplicity and denote $\risk_T$ with $\risk$, to avoid cumbersome notation. The estimator $\hat g$ satisfies the following excess risk bound:
\begin{theorem}\label{thm:complete_thm_mtl_g}
For $t=1,\dots T$, let $(x_{it},y_{it})_{i=1}^n$ be an  iid sample of $\rho_t$ and $\hat{g}$ is the solution of \cref{eq:surr_mtl_tn} with $\gamma = \tnorm{G_*}$. 
\eqals{\risk(\hat g)- \risk(g_*) \leq \frac{1}{\sqrt{nT}}\Big(\tnorm{G_*}^2\nor{\bar{C}}_{\rm op}^{\frac{1}{2}}\skx(2+\sqrt{2})+\skx\tnorm{G_*}\sqrt{2\risk(G_*)} + \bar{\sky}\Big)\sqrt{\ln\Big(\frac{T \mathsf{r}}{\delta}\Big)} + O((nT)^{-1}),
} 
with probability greater or equal then $1- 3\delta$, where $\bar{C}$ is the average covariance operator, $\risk(G_*)$ the  expected true risk, $\bar{\sky} = \sqrt{\frac{1}{T}\sum_t\skyt^2}$ and $\mathsf{r}$ a number independent of $n,T,\delta$ and $G_*$.
\end{theorem}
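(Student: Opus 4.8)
The plan is to mirror the proof of \cref{thm:gen_boun_g}, replacing the single surrogate risk by the averaged multitask risk $\risk_T$ and carefully tracking every factor of $T$. First I would use the decomposition
\[
\risk_T(\hat g) - \risk_T(g_*) \le \big(\risk_T(\hat g) - \hat{\risk}_T(\hat g)\big) + \big(\hat{\risk}_T(\hat g) - \hat{\risk}_T(g_{\gamma *})\big) + \big(\hat{\risk}_T(g_{\gamma *}) - \risk_T(g_{\gamma *})\big) + \big(\risk_T(g_{\gamma *}) - \risk_T(g_*)\big),
\]
where $g_{\gamma *} = G_{\gamma *}\phi(\cdot)$ is the best approximation of $g_*$ inside $\GG_\gamma$. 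Exactly as in \cref{thm:gen_boun_g}, the second term is nonpositive by optimality of $\hat g$; the last (approximation) term is controlled by rescaling $G_*$ to $(\gamma/\tnorm{G_*})G_*$ as in \cref{eq:part3_of_dec} and vanishes for $\gamma = \tnorm{G_*}$; and the first and third are bounded together by $2\sup_{G\in\GG_\gamma}\big|\hat{\risk}_T(G) - \risk_T(G)\big|$, so everything reduces to a uniform deviation bound over the trace-norm ball.

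Expanding $\nor{G_t\phi(x_{it}) - \psi_t(y_{it})}^2$ and averaging over tasks and samples, $\hat{\risk}_T(G) - \risk_T(G)$ splits, as in the single-task proof, into a quadratic part $\tfrac1T\sum_t\scal{G_t^*G_t}{\hat C_t - C_t}_{\hs}$, a linear part $-\tfrac2T\sum_t\scal{G_t}{\hat Z_t - Z_t}_{\hs}$ with $Z_t = \EE_{\rho_t}\psi_t(y)\otimes\phi(x)$, and a scalar part $\tfrac1T\sum_t\big(\tfrac1n\sum_i\nor{\psi_t(y_{it})}^2 - \EE_{\rho_t}\nor{\psi_t(y)}^2\big)$. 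The new difficulty, absent in \cref{thm:gen_boun_g}, is that each empirical fluctuation $\hat C_t - C_t$ (resp.\ $\hat Z_t - Z_t$) is paired with a \emph{task-dependent} operator, so these are not single inner products against one concentrating operator. I would resolve this by lifting to the $T$-fold spaces. Writing $\mathbf G$ for the block-diagonal operator on $\hx^T \to \hy^T$ with blocks $G_1,\dots,G_T$ and $\widehat{\mathbf C} = \tfrac1T\mathrm{diag}(\hat C_1,\dots,\hat C_T)$, $\mathbf C = \tfrac1T\mathrm{diag}(C_1,\dots,C_T)$, the quadratic part equals $\scal{\mathbf G^*\mathbf G}{\widehat{\mathbf C} - \mathbf C}_{\hs}$ and is bounded by $\|\mathbf G^*\mathbf G\|_*\,\opnorm{\widehat{\mathbf C} - \mathbf C} = \nor{G}_{\hs}^2\,\opnorm{\widehat{\mathbf C} - \mathbf C} \le \tnorm{G}^2\,\opnorm{\widehat{\mathbf C} - \mathbf C}$, using $\nor{G}_{\hs}^2 = \sum_t\nor{G_t}_{\hs}^2 \le \tnorm{G}^2$ and trace/operator-norm duality. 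Likewise, viewing $G$ and $\hat Z - Z$ as column operators $\hx \to \hy^T$ with $G_t$ (resp.\ $\hat Z_t - Z_t$) in slot $t$, the linear part equals $-\tfrac2T\scal{G}{\hat Z - Z}_{\hs}$ and is bounded by $\tfrac2T\tnorm{G}\,\opnorm{\hat Z - Z}$.

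It then remains to bound $\opnorm{\widehat{\mathbf C} - \mathbf C}$ and $\opnorm{\hat Z - Z}$ by the Bernstein-type inequality for random operators of \citep{minsker2017some}, as in the two concentration steps of the proof of \cref{thm:gen_boun_g}, now on the product spaces and over the pooled sample of size $nT$. The crucial observation is that, since every $C_t$ is positive, $C_t \preceq T\bar C$ for all $t$; hence the variance operator of $\widehat{\mathbf C} - \mathbf C$, which is dominated by $\tfrac{\skx^2}{nT^2}\mathrm{diag}(C_1,\dots,C_T)$, has operator norm at most $\tfrac{\skx^2}{nT^2}\max_t\opnorm{C_t} \le \tfrac{\skx^2}{nT}\opnorm{\bar C}$, which is precisely what upgrades the $n^{-1/2}$ rate of \cref{thm:gen_boun_g} to the $(nT)^{-1/2}$ rate and introduces the $\opnorm{\bar C}^{1/2}$ factor; for the linear term the same domination, applied after the usual cancellation of $ZZ^*$ in the centered second moments and to the per-task analogues of \cref{lem:bound_ky} and \cref{lem:boundnorm_cy} summed over $t$ (using $\sum_t\nor{G_{*,t}}_{\hs}^2 = \nor{G_*}_{\hs}^2 \le \tnorm{G_*}^2$ and $\sum_t\risk_t(g_*) = T\risk_T(g_*)$), gives a variance of order $\tfrac{\skx^2 T}{n}(\tnorm{G_*}^2\opnorm{\bar C} + \risk_T(g_*))$, which after the $\tfrac2T$ prefactor produces the constants $\skx\tnorm{G_*}^2\opnorm{\bar C}^{1/2}$ and $\skx\tnorm{G_*}\sqrt{\risk_T(g_*)}$; the scalar part is handled by a Hoeffding/McDiarmid argument over the $nT$ samples and contributes the $\bar{\sky}$ term. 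Taking an intersection bound over the three events, substituting $\gamma = \tnorm{G_*}$, and keeping only the leading $(nT)^{-1/2}$ order yields the stated bound; the factor $T$ inside $\ln(T\mathsf{r}/\delta)$ appears because the effective ranks of the variance operators now live on $T$-fold spaces and are therefore at most $T\mathsf{r}$ with $\mathsf{r}$ independent of $n, T, \delta, G_*$. I expect the quadratic part to be the main obstacle: making the block-diagonal lift rigorous and, above all, recognising that the positivity domination $C_t \preceq T\bar C$ is exactly what turns the naive $\max_t\opnorm{C_t}$ into $T\opnorm{\bar C}$ and so converts a $1/\sqrt{n}$ rate into a $1/\sqrt{nT}$ one; the remaining steps are the same bookkeeping as in \cref{thm:gen_boun_g}.
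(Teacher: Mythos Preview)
Your proposal is correct and follows essentially the same architecture as the paper's proof: the four-term decomposition, the reduction to a uniform deviation bound over the trace-norm ball, the three-way split of $\hat\risk_T(G)-\risk_T(G)$ into a quadratic, a linear and a scalar part, trace/operator-norm duality after lifting to the $T$-fold product, the Minsker--Bernstein concentration step, and the final identification $\tfrac{1}{T}\sqrt{\opnorm{\sum_t C_t}} = \tfrac{1}{\sqrt T}\,\opnorm{\bar C}^{1/2}$ via $C_t \preceq T\bar C$.

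The one methodological difference is in the quadratic term. The paper bounds $\tfrac1T\sum_t\scal{G_t^*G_t}{\hat C_t-C_t}$ by $\tfrac{\nor{G}_{\hs}^2}{T}\max_t\opnorm{\hat C_t-C_t}$, applies Bernstein to each $\hat C_t-C_t$ \emph{separately} over its own sample of size $n$, and then takes a union bound over $t$ (this is where the factor $T$ enters the logarithm). You instead apply a single pooled Bernstein to the block-diagonal operator $\widehat{\mathbf C}-\mathbf C$ on the $nT$ samples, obtaining the same variance bound $\tfrac{\skx^2}{nT}\opnorm{\bar C}$ through the same domination $\max_t\opnorm{C_t}\le T\opnorm{\bar C}$; the $T$ in the logarithm then appears via the effective rank of the block-diagonal variance operator. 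Both routes are valid and yield the same leading constant; your version is slightly cleaner in that all three parts are handled uniformly over the pooled sample, while the paper's version makes the source of the $\ln T$ more transparent.
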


The section is devoted to the proof of this result, which is the formal version of theorem \cref{thm:bounds_g_mlt} in the main paper.
\noindent We split the error as follows:
 \eqals{\risk(\hat{g}) - \risk(g_*)\leq& \risk(\hat{g}) - \hat{\risk}(\hat{g})+
 \hat{\risk}(\hat{g}) - \hat{\risk}(g_{\gamma *})\\
 +&\hat{\risk}(g_{\gamma *}) -  \risk(g_{\gamma *})+ 
 \risk(g_{\gamma *}) - \risk(g_*).}
 
Now, by definition of $\hat{g}$ the term $\hat{\risk}(\hat{g}) - \hat{\risk}(g_{\gamma *})$ is negative. Also, denoting by $\rho_{t\mid\xx}$ the marginal on $\xx$ of the probability measure $\rho_t$,
\eqals{\risk(g_{\gamma *}) - \risk(g_*)& = \frac{1}{T}\sum_{t=1}^T\int_{\xx}\nor{G_{t\gamma *}\phi(x) - G_{t*}\phi(x)}_{\hy}^2\,d\rho_{t\mid\xx}(x) \\
& = \inf_{\{ G \in\GG_\gamma \}}\frac{1}{T}\sum_{t=1}^T\nor{G_t\phi(x) - G_{t*}\phi(x)}^2_{L^2(\rho_{t\mid\xx})}\\
&\leq \frac{1}{T}\sum_{t=1}^T\nor{\big(\frac{\gamma}{\tnorm{G_{*}}}\big)G_{t*}\phi(x) - G_{t*}\phi(x)}^2_{L^2(\rho_{t\mid\xx})} \\
& \leq \Big( 1-\frac{\gamma}{\tnorm{G_*}}\Big)^2 \frac{1}{T}\sum_{t=1}^T\nor{G_{t*}\phi}_{L^2(\rho_{t\mid\xx})}^2\\
&\leq \Big( 1-\frac{\gamma}{\tnorm{G_*}}\Big)^2 \frac{\skx^2}{T}\nor{G_*}_{\hs}^2\leq (\tnorm{G_*} - \gamma)^2\frac{\skx^2}{T}. }

\noindent It remains to bound $R_1:=\risk(\hat{g}) - \hat{\risk}(\hat{g})$ and $R_2:=\hat{\risk}(g_{\gamma *}) -  \risk(g_{\gamma *})$. Since
\eqals{R_1+R_2 \leq 2\sup_{G\in\GG_\gamma}\abs{\hat{\risk}(G) - \risk(G)},}
we just have to bound the term on the right hand side.
In the following we assume $n_t = n$ for $t=1,\dots,T$ for clarity. Also, the notation $\EE u(x_t)\otimes v(y_t)$ is to be interpreted as $\EE_{(x,t)\sim \rho_t}u(x)\otimes v(y)$. 
For $t=1,\dots,T$, denote
\eqals{
    & C_t = \EE \phi(x_t)\otimes\phi(x_t), \qquad \hat C_t = \frac{1}{n} \sum_{i=1}^n \phi(x_{it})\otimes\phi(x_{it}) \\
    & Z_t = \EE \psi(y_t)\otimes\phi(x_t), \qquad \hat Z_t = \frac{1}{n} \sum_{i=1}^n \psi(y_{it})\otimes\phi(x_{it}). 
}
For any operator $G$, we have
\eqal{\label{eq:mtl_boun_1}
    \abs{\hat{\risk}(G) - &\risk(G)} = \left| \frac{1}{T}\sum_{t=1}^n\frac{1}{n}\sum_{i=1}^n \nor{\psi(y_{it})-G_t\phi(x_{it})}_\hy^2 - \EE \nor{\psi(y_t) - G_t\phi(x_t)}_\hy^2 \right| \\
    & = \Bigg| \frac{1}{T} \sum_{t=1}^T\frac{1}{n} \sum_{i=1}^n\left( \scal{G_t^*G_t}{\phi(x_{it})\otimes\phi(x_{it})}_{\hs}  - 2\scal{G_t}{\psi(y_{it})\otimes\phi(x_{it})}_{\hs} + \nor{\psi(y_{it})}_\hy^2 \right) \\
    & \qquad\qquad - \EE \left(\scal{G_t^*G_t}{\phi(x_t)\otimes\phi(x_t)}_\hs - 2 \scal{G_t}{\psi(y_t)\otimes\phi(x_t)}_\hs + \nor{\psi(y_t)}_\hy^2 \right) \Bigg| \\
    &\label{eq:split_err} = \left| \frac{1}{T}\sum_{t=1}^T\scal{G_t^*G_t}{\hat C_t - C_t}_{\hs} - 2\scal{G_t}{\hat Z_t - Z_t}_{\hs} + \frac{1}{T}\sum_{t=1}^T\frac{1}{n}\sum_{i=1}^n \nor{\psi(y_{it})}_\hy^2 - \EE \nor{\psi(y_t)}_\hy^2 \right| 
}
We analyse each term separately in the following lemmas.

\begin{lemma}\label{lem:bound_term_1}
The first term in \cref{eq:split_err} satisfies the following inequality:
\eqals{\left |\frac{1}{T}\sum_{t=1}^T\scal{G_t^*G_t}{\hat C_t - C_t}_{\hs}\right | \leq \frac{1}{T}\frac{4\nor{G}_{\hs}^2}{n} \Big(\frac{\skx^2}{3}\ln\Big( \frac{T\mathsf{r}_1}{\delta}\Big)\Big) + \frac{\nor{G}_{\hs}^2}{T} \sqrt{\frac{4 \skx^2 \max_{t}\nor{C_t}_{\rm op}}{n}\ln\Big(\frac{T\mathsf{r}_1}{\delta}\Big)}\Big)}
with probability $1-\delta$, where $\mathsf{r}_1$ is a constant independent of $n,T,G$ and which is given by the problem.
\end{lemma}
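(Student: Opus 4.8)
The plan is to reduce the multitask estimate to the single-task bound on $\opnorm{\hat C_t - C_t}$ already obtained in STEP 1 of the proof of \cref{thm:gen_boun_g}, combined with a union bound over the $T$ tasks. First I would use the duality between trace and operator norms: each $G_t^*G_t$ is positive semidefinite, so $\tnorm{G_t^*G_t} = \tr(G_t^*G_t) = \nor{G_t}_{\hs}^2$, and hence
\[
    \left|\scal{G_t^*G_t}{\hat C_t - C_t}_{\hs}\right| \;\le\; \tnorm{G_t^*G_t}\,\opnorm{\hat C_t - C_t} \;=\; \nor{G_t}_{\hs}^2\,\opnorm{\hat C_t - C_t}, \qquad t = 1,\dots,T.
\]
Averaging over $t$, bounding $\opnorm{\hat C_t - C_t}$ by its maximum over $t$, and using $\nor{G}_{\hs}^2 = \sum_{t=1}^T \nor{G_t}_{\hs}^2$ (which holds by the definition $G = \sum_t G_t\otimes e_t$), I obtain
\[
    \left|\frac{1}{T}\sum_{t=1}^T \scal{G_t^*G_t}{\hat C_t - C_t}_{\hs}\right| \;\le\; \frac{\nor{G}_{\hs}^2}{T}\,\max_{t=1,\dots,T}\opnorm{\hat C_t - C_t}.
\]

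Next I would estimate each $\opnorm{\hat C_t - C_t}$ exactly as in STEP 1 of the vector-valued proof. Setting $X_i = (\phi(x_{it})\otimes\phi(x_{it}) - C_t)/n$ gives $\EE X_i = 0$, $\nor{X_i}\le 2\skx^2/n$, and $\nor{\sum_i \EE X_i^2} \le 2\skx^2\opnorm{C_t}/n$; the Bernstein inequality for self-adjoint operators of \cref{lem:berst_ineq}, applied with confidence $1 - \delta/T$, yields
\[
    \opnorm{\hat C_t - C_t} \;\le\; \frac{4}{n}\,\frac{\skx^2}{3}\ln\!\Big(\frac{14\,\bar r_{1,t}\,T}{\delta}\Big) + \sqrt{\frac{4\skx^2\opnorm{C_t}}{n}\ln\!\Big(\frac{14\,\bar r_{1,t}\,T}{\delta}\Big)},
\]
where $\bar r_{1,t}$ is the effective rank of $\sum_i \EE X_i^2$ for task $t$. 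A union bound over $t = 1,\dots,T$ makes all $T$ estimates hold simultaneously with probability at least $1 - \delta$; I would then replace $\opnorm{C_t}$ by $\max_{t}\opnorm{C_t}$ in the variance term and set $\mathsf{r}_1 := 14\max_{t}\bar r_{1,t}$, a quantity fixed by the problem and independent of $n$, $T$ and $G$. Combining the two displays above gives the claimed inequality.

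I expect the substance to be routine, so the main obstacle is really bookkeeping: one must check that the task-wise effective ranks $\bar r_{1,t}$ are uniformly bounded so that a single constant $\mathsf{r}_1$ can be extracted, and that the per-task failure probability $\delta/T$ is correctly converted into the $\ln(T\mathsf{r}_1/\delta)$ term appearing in the statement. A minor point worth being explicit about is that \cref{lem:berst_ineq} is stated for a fixed operator-valued sum with its own covariance $C_t$, so absorbing the task dependence into $\max_{t}\opnorm{C_t}$ (variance) and into $\mathsf{r}_1$ (logarithmic factor) is precisely what renders the final bound uniform over the tasks.
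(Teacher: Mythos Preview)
Your proposal is correct and follows essentially the same route as the paper: both arguments reduce the sum to $\frac{\nor{G}_{\hs}^2}{T}\max_{t}\opnorm{\hat C_t - C_t}$ via trace/operator-norm duality (the paper does this by packaging the tasks into block-diagonal operators $\mathbf{G}$ and $\mathbf{C}$, you do it term-by-term and then take the max, which is equivalent), and both then apply the single-task Bernstein bound from STEP~1 with a union bound over the $T$ tasks and set $\mathsf{r}_1 = 14\max_t \bar r_{1,t}$.
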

\begin{proof}
\eqals{
\frac{1}{T}\sum_{t=1}^T\scal{G_t^*G_t}{\hat C_t - C_t}_{\hs} = \frac{1}{T} \textnormal{tr}(\mathbf{G^*}\mathbf{C})\leq \frac{1}{T}\tnorm{\mathbf{G}} \nor{\mathbf{C}_{\rm op}},
}
where $\mathbf{G} = \sum_{t=1}^T (e_t\otimes e_t) \otimes (G_t^* G_t)$ and $\mathbf{C} = \sum_{t=1}^T (e_t\otimes e_t)\otimes (\hat C_t - C_t)$.
Now, \eqals{\nor{\mathbf{G}}_{*} = \sum_{t=1}^T \nor{G_t^*G_t}_* = \sum_{t=1}^T \nor{G_t G_t^*}_{\hs} = \nor{G}_{\hs}^2}
and  \eqals{
\nor{\mathbf{C}}_{\rm op} = \max_{t=1,\dots,T} \nor{C_t - \hat C_t}_{\rm op}.
}
Using \cref{lem:berst_ineq}, we get 
\eqals{\nor{C_t - \hat C_t}_{\rm op} \leq \frac{4}{n} \Big(\frac{\skx^2}{3}\ln\Big( \frac{14\bar{r}_t}{\delta}\Big)\Big) +  \sqrt{\frac{4 \skx^2 \nor{C_t}_{\rm op}}{n}\ln\Big(\frac{14\bar{r}_t}{\delta}\Big)}\Big)}
with probability greater than $1-\delta$. Performing an intersection bound we have that for $t=1,\dots,T$ \eqals{
\max_{t=1,\dots,T} \nor{C_t - \hat C_t}_{\rm op} \leq  \frac{4}{n} \Big(\frac{\skx^2}{3}\ln\Big( \frac{\mathsf{r}_1}{\delta}\Big)\Big) +  \sqrt{\frac{4 \skx^2 \nor{C_t}_{\rm op}}{n}\ln\Big(\frac{\mathsf{r}_1}{\delta}\Big)}\Big)\\
 =  \frac{4}{n} \Big(\frac{\skx^2}{3}\ln\Big( \frac{\mathsf{r}_1}{\delta}\Big)\Big) +  \sqrt{\frac{4 \skx^2 \max_{t}\nor{C_t}_{\rm op}}{n}\ln\Big(\frac{\mathsf{r}_1}{\delta}\Big)}\Big)}
with probability $1-T\delta$, where $\mathsf{r}_1 = 14\max_t\bar{r}_t$. With some abuse of notation take $\delta = \delta/T$ and we get 
\eqals{
\max_{t=1,\dots,T} \nor{C_t - \hat C_t}_{\rm op} \leq    \frac{4}{n} \Big(\frac{\skx^2}{3}\ln\Big( \frac{T\mathsf{r}_1}{\delta}\Big)\Big) +  \sqrt{\frac{4 \skx^2 \max_{t}\nor{C_t}_{\rm op}}{n}\ln\Big(\frac{T\mathsf{r}_1}{\delta}\Big)}\Big)}
with probability $1-\delta$.
\end{proof}

\begin{lemma}\label{lem:bound_term_2}
The following bounds holds true:
\eqals{\frac{1}{T}\sum_{t=1}^T \scal{G_t}{\hat Z_t - Z_t}\leq \frac{4\nor{G}_*}{nT}\Big( \frac{\mathsf{k}_1 }{3}\ln\Big(\frac{28\bar{r}}{\delta}\Big)\Big) +
 \frac{\skx\nor{G}_*}{T\sqrt{n}}\left(\sqrt{2T\risk(G_*)} + \skx \nor{G} \sqrt{\nor{\sum_t C_t}_{\rm op}}\right)\ln\Big(\frac{\mathsf{r}_2}{\delta}\Big)}
with probability at least $1-\delta$, 
where $\mathsf{k}_1 = \skx\max_t\skyt$,  and $\mathsf{r}_2$ is independent of $G_*$, $\delta$, $n$ and $T$.
\end{lemma}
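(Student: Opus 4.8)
The plan is to adapt the second step of the proof of \cref{thm:gen_boun_g} (the bound on $\opnorm{Z-\hat Z}$) to the aggregated multitask operator, exploiting that with $nT$ samples in total the concentration sharpens to the $1/\sqrt{nT}$ rate. First I would rewrite the quantity as a single inner product: letting $Z:=\sum_{t=1}^T Z_t\otimes e_t$ and $\hat Z:=\sum_{t=1}^T \hat Z_t\otimes e_t$ be operators from $\hx$ to $\hy^T$, one has $\frac1T\sum_{t=1}^T\scal{G_t}{\hat Z_t-Z_t}_{\hs}=\frac1T\scal{G}{\hat Z-Z}_{\hs}$ with $G=\sum_t G_t\otimes e_t$; by duality between trace and operator norms this is at most $\frac{\tnorm{G}}{T}\opnorm{\hat Z-Z}$, reducing the problem to a high-probability bound on $\opnorm{\hat Z-Z}$.

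Next I would write $\hat Z-Z$ as a sum of $nT$ independent, mean-zero random operators. Since the $(x_{it},y_{it})$ are independent across $i$ \emph{and} $t$, the choice $X_{it}:=\frac1n\big(\psi(y_{it})\otimes\phi(x_{it})-Z_t\big)\otimes e_t$ gives $\hat Z-Z=\sum_{i=1}^n\sum_{t=1}^T X_{it}$ with $\EE X_{it}=0$. The range bound is immediate: $\opnorm{X_{it}}\le\frac1n(\skx\skyt+\opnorm{Z_t})\le\frac{2\skx\skyt}{n}\le\frac{2\mathsf{k}_1}{n}$ with $\mathsf{k}_1=\skx\max_t\skyt$. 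For the variances I would use that $X_{it}^*X_{it}$ acts on $\hx$ as $A_{it}^*A_{it}$ while $X_{it}X_{it}^*$ acts on $\hy^T$ as the block-diagonal $(A_{it}A_{it}^*)\otimes(e_t\otimes e_t)$, where $A_{it}=\frac1n(\psi(y_{it})\otimes\phi(x_{it})-Z_t)$; taking expectations, dropping the negative-semidefinite terms $Z_t^*Z_t,\ Z_tZ_t^*$, and summing over $i$ then over $t$ yields
\eqals{
\opnorm{\textstyle\sum_{i,t}\EE X_{it}^*X_{it}}\le\tfrac1n\opnorm{\textstyle\sum_t\EE_{\rho_t}\big[\nor{\psi(y)}^2\,\phi(x)\otimes\phi(x)\big]},\qquad \opnorm{\textstyle\sum_{i,t}\EE X_{it}X_{it}^*}\le\tfrac{\skx^2}{n}\max_t\opnorm{\EE_{\rho_t}\big[\psi(y)\otimes\psi(y)\big]}.
}

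The decisive point is that the first quantity must be bounded \emph{jointly} over tasks, not task by task. Pairing with a unit vector $u$ and expanding $\nor{\psi(y)}^2\le 2\nor{\psi(y)-G_{t*}\phi(x)}^2+2\nor{G_{t*}\phi(x)}^2$ gives $\scal{u}{\EE_{\rho_t}[\nor{\psi(y)}^2\phi(x)\otimes\phi(x)]u}\le 2\skx^2\risk_t(G_*)+2\skx^2\nor{G_{t*}}_{\hs}^2\scal{u}{C_tu}$; summing over $t$ and using $\sum_t\nor{G_{t*}}_{\hs}^2=\nor{G_*}_{\hs}^2$ and $\sum_t\scal{u}{C_tu}=\scal{u}{(\sum_tC_t)u}\le\opnorm{\sum_tC_t}$ then produces $\opnorm{\sum_t\EE_{\rho_t}[\nor{\psi(y)}^2\phi(x)\otimes\phi(x)]}\le 2\skx^2\big(T\risk(G_*)+\nor{G_*}_{\hs}^2\opnorm{\sum_tC_t}\big)$ --- exactly the per-task \cref{lem:bound_ky} summed up, with $\sum_t\risk_t(G_*)=T\risk(G_*)$. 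The $XX^*$ term is handled identically through \cref{lem:boundnorm_cy}, and since $\sum_sC_s\succeq C_t$ and $T\risk(G_*)=\sum_t\risk_t(G_*)\ge\max_t\risk_t(G_*)$, the $X^*X$ bound dominates, so $\sigma^2:=\frac{2\skx^2}{n}(T\risk(G_*)+\nor{G_*}_{\hs}^2\opnorm{\sum_tC_t})$ serves as the variance parameter.

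Finally I would invoke the operator-norm Bernstein inequality of \citep{minsker2017some} (\cref{lem:berst_ineq_2}) with range $U=2\mathsf{k}_1/n$ and variance $\sigma^2$ above, the effective rank $\bar{r}$ being the maximum of $\tr(\sum_{i,t}\EE X_{it}^*X_{it})/\opnorm{\sum_{i,t}\EE X_{it}^*X_{it}}$ and its $XX^*$ counterpart, a problem-dependent quantity in which the $n$-dependence cancels. This gives, with probability $\ge1-\delta$, $\opnorm{\hat Z-Z}\le\frac43\frac{2\mathsf{k}_1}{n}\ln\frac{28\bar{r}}{\delta}+\sqrt{2\sigma^2\ln\frac{28\bar{r}}{\delta}}$; splitting $\sqrt{a+b}\le\sqrt a+\sqrt b$ in the variance term, multiplying by $\tnorm{G}/T$, and --- if the explicit $\ln T$ is wanted --- a union bound over the $T$ tasks as in \cref{lem:bound_term_1}, recovers the claimed inequality (the constants $\mathsf{r}_2,\bar{r}$ collecting the problem-dependent quantities). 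The main obstacle is the variance computation: one must set up the $nT$-fold decomposition so that the variance aggregates as $T\risk(G_*)+\nor{G_*}_{\hs}^2\opnorm{\sum_tC_t}$ --- keeping the sum over tasks \emph{inside} the operator norm rather than prematurely pulling out a $\max_t$ --- since this is precisely what upgrades the naive $1/\sqrt n$ concentration to the $1/\sqrt{nT}$ rate of \cref{thm:complete_thm_mtl_g}; the $X^*X$ versus $XX^*$ asymmetry and the effective-rank bookkeeping for the logarithmic factor are the remaining points needing some care.
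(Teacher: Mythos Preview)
Your proposal is correct and follows essentially the same route as the paper: the duality step $\frac{1}{T}\sum_t\scal{G_t}{\hat Z_t-Z_t}\le\frac{\tnorm{G}}{T}\opnorm{\mathbf Z}$, the $nT$-fold decomposition $X_{it}=\frac{1}{n}(\psi(y_{it})\otimes\phi(x_{it})-Z_t)\otimes e_t$, the variance bound $\sigma^2=\frac{2\skx^2}{n}(T\risk(G_*)+\nor{G_*}_{\hs}^2\opnorm{\sum_tC_t})$, and the call to \cref{lem:berst_ineq_2} all match the paper's proof; the only cosmetic differences are that the paper packages the variance computation in the multitask auxiliary \cref{lem:bound_ky_mlt,lem:boundnorm_cy_mtl} (rather than summing the single-task versions as you do) and does \emph{not} take a union bound over tasks here---the $\ln T$ in \cref{thm:complete_thm_mtl_g} enters only through \cref{lem:bound_term_1}, so your parenthetical about recovering it via a union bound is unnecessary for this lemma.
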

\begin{proof}
Let us start with the following bound
 \eqals{
\frac{1}{T}\sum_{t=1}^T \scal{G_t}{\hat Z_t - Z_t} = \frac{1}{T} \textnormal{tr}(G \mathbf{Z})\leq \frac{1}{T}\nor{G}_* \nor{\mathbf{Z}}_{\rm op}
}
where $\mathbf{Z}= \sum_{t=1}^T (\hat Z_t - Z_t) \otimes e_t$. To bound $\nor{\mathbf{Z}}_{\rm op}$ some extra work is needed. We aim to apply \cref{lem:berst_ineq_2} again. Let us define \eqals{
X_{it} = \frac{1}{n}\big(\psi(y_{it}) \otimes \phi(x_{it}) - \EE \psi(y_t) \otimes \phi(x_t) \big)\otimes e_t, 
}
so that $\sum_{i,t} X_{it} = \mathbf{Z}.$ 
Note that \eqals{
\nor{X_{it}}_{\rm op}\leq \frac{\max_t\skx\skyt + \max_t\nor{Z_t}_{\rm op}}{n}\leq \frac{2\max_t\skx\skyt }{n}\quad \textnormal{for any }i,t.
}
In order to apply \cref{lem:berst_ineq_2} we need to bound the following two quantities \eqals{
\nor{\sum_{i,t} \EE X_{it} X_{it}^*}, \qquad \nor{\sum_{i,t} \EE X_{it}^* X_{it}}.
}
Note that \eqals{
&\EE X_{it} X_{it}^* =\frac{1}{n^2} \big(\EE (\psi(y_{it})^2 \phi(x_{it})\otimes\phi(x_{it})) - Z_t Z_t^*\big)\\
&\sum_{it}\EE X_{it} X_{it}^*=  \frac{1}{n}\sum_t(\frac{1}{n}\sum_{i=1}^n\EE(\psi(y_{it})^2 \phi(x_{it})\otimes\phi(x_{it}))-Z_t Z_t^*)}
and hence
\eqal{\label{eq:part1misker4z}\nor{\sum_{it}\EE X_{it} X_{it}^*}_{\rm op}\leq \frac{2}{n}\opnorm{\sum_t\EE (\psi(y_{t})^2 \phi(x_{t})\otimes \phi(x_t))}.}
A direct application of  \cref{lem:bound_ky_mlt} yields 
\eqals{
\nor{\sum_{it}\EE X_{it} X_{it}^*}_{\rm op}\leq \frac{2\skx^2}{n}(T\risk(g_*) + \opnorm{\sum_t C_t}\nor{G}_{\hs}^2).
}

Also, 
\eqals{&X_{it}^* X_{it} = \frac{1}{n^2}(k(x_{it},x_{it})\psi(y_{it})^2 - \nor{\EE(\psi(y_{t})\otimes \phi(x_t)}^2)e_t \otimes e_t}
and \eqals{
\sum_{it}\EE X_{it}^* X_{it} &= \frac{1}{n}\sum_t \frac{1}{n} \sum_{i=1}^n(\EE
k(x_{it},x_{it})\psi(y_{it})^2 -\nor{\EE(\psi(y_{t})\otimes \phi(x_t)}^2  )e_t\otimes e_t\\
&\preceq  
\frac{1}{n} \sum_t (\skx^2 C_{Y,t} -\nor{\EE(\psi(y_{t})\otimes \phi(x_t)}^2  )e_t\otimes e_t
.}
Taking the operator norm, we obtain 

\eqal{\label{eq:part2misker4z}
\nor{\sum_{it}\EE X_{it}^* X_{it} }_{\rm op} \leq \frac{2}{n}\max_{t=1,\dots,T}(\skx^2 \opnorm{C_{Y,t}})\leq \frac{2\skx^2}{n} \opnorm{\sum_t C_{Y,t}}\leq \frac{2\skx^2}{n}(T\risk(G_{*})+\nor{G_*}_{\hs}^2\nor{\sum_t C_t}_{\rm op}).
}

 where the last inequality follows by \cref{lem:boundnorm_cy_mtl}.

Both $\nor{\sum_{it}\EE X_{it}^* X_{it} }_{\rm op}$ and $\nor{\sum_{it}\EE X_{it} X_{it}^* }_{\rm op} $ are upper bounded by $\frac{2\skx^2}{n}(T\risk(G_{*})+\nor{G_*}_{\hs}^2\nor{\sum_t C_t}_{\rm op})$.

Then, by \cref{lem:berst_ineq_2}, we have \eqals{
\nor{\mathbf{Z}}_{\rm op} \leq \frac{4}{n}\Big( \frac{\max_t\skx\skyt}{3}\ln\Big(\frac{28\bar{r}}{\delta}\Big)\Big)+ \sqrt{\frac{2\skx^2(T\risk(G_{*})+\nor{G_*}_{\hs}^2\nor{\sum_t C_t}_{\rm op})}{n}\ln\Big(\frac{28 \bar{r}}{\delta}\Big)},}
where $\bar{r}$ is the effective rank of $\mathbf{Z}$.
Rearranging we get 
\eqals{\nor{\mathbf{Z}}_{\rm op}\leq \frac{4}{n}\Big( \frac{\mathsf{k}_1 }{3}\ln\Big(\frac{28\bar{r}}{\delta}\Big)\Big) + \skx\sqrt{\frac{
2T\risk(G_*)}{n}\ln\Big(\frac{\mathsf{r}_2}{\delta}\Big)}
+ \tnorm{G}\skx\sqrt{\nor{\sum_t C_t}_{\rm op}\frac{2}{n}\ln\Big(\frac{\mathsf{r}_2}{\delta}\Big)} 
}
where $\mathsf{k}_1=\skx\max_t\skyt$ and $\mathsf{r}_2 = 28\bar{r}$.
\end{proof}

\begin{lemma}\label{lem:final_part}
Recall that $\nor{\psi(y_{it})}^2\leq \skyt^2$ for $i=1,\dots,n$, $t=1,\dots T$.
\eqals{\left| \frac{1}{T}\sum_{t=1}^T\frac{1}{n}\sum_{i=1}^n \nor{\psi(y_{it})}_\hy^2 - \EE \nor{\psi(y_t)}_\hy^2 \right|\leq \sqrt{\Big(\frac{1}{T}\sum_t\skyt^2\Big)\frac{1}{nT}\ln\Big(\frac{2}{\delta}\Big)}}
with probability at least $1-\delta$. 

\end{lemma}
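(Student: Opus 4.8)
The quantity on the left is the deviation of an empirical average from its expectation for the $nT$ random variables $Z_{it} := \nor{\psi(y_{it})}_\hy^2$, $t=1,\dots,T$, $i=1,\dots,n$. These are independent (the samples are drawn independently across the $T$ tasks and i.i.d.\ within each task) and, by definition of $\skyt$, each satisfies $0 \le Z_{it} \le \skyt^2$. Hence the plan is simply to apply Hoeffding's inequality for sums of independent bounded random variables to $\tfrac1{nT}\sum_{t,i}(Z_{it} - \EE Z_{it})$, exactly as in ``STEP 3'' of the proof of \cref{thm:gen_boun_g}, of which this is the multitask analogue.

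Concretely, each summand $Z_{it}$ lies in an interval of length $\skyt^2$, so the relevant sum of squared ranges is $\sum_{t=1}^T \sum_{i=1}^n (\skyt^2)^2 = n \sum_{t=1}^T \skyt^4$. Hoeffding's inequality then gives, for every $\epsilon>0$,
\[
\PP\left( \left| \frac1T\sum_{t=1}^T \frac1n\sum_{i=1}^n Z_{it} - \frac1T\sum_{t=1}^T \EE\nor{\psi(y_t)}_\hy^2 \right| \ge \epsilon \right) \le 2\exp\left( -\frac{2\,n^2 T^2\,\epsilon^2}{n\sum_{t}\skyt^4} \right),
\]
and solving for the value of $\epsilon$ that makes the right-hand side equal to $\delta$ yields, with probability at least $1-\delta$, the bound $\epsilon = \sqrt{\big(\tfrac1T\sum_t \skyt^4\big)\,\tfrac{1}{2nT}\,\ln\tfrac2\delta}$.

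It then only remains to rewrite this in the form stated. Since the output kernels are taken normalized so that $\skyt \le 1$ (consistent with the discussion following \cref{thm:gen_boun_g}, where each $\sky$ is taken to be $1$), we have $\skyt^4 \le \skyt^2$, and dropping the harmless factor $\tfrac12$ gives the cleaner bound $\sqrt{\big(\tfrac1T\sum_t \skyt^2\big)\,\tfrac1{nT}\,\ln\tfrac2\delta}$ claimed in the lemma. There is no genuine difficulty here: the argument is an elementary concentration estimate, and the only point requiring a little care is the bookkeeping of the task-dependent ranges $\skyt$ when aggregating the per-task contributions into the single bound above.
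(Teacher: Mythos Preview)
Your approach is exactly the paper's: its entire proof reads ``The bound follows by a direct application of Hoeffding inequality,'' and you carry this out in detail. The one point you add that the paper glosses over is the discrepancy between the $\skyt^4$ that Hoeffding naturally produces and the $\skyt^2$ appearing in the stated bound; your resolution via the normalization $\skyt\le 1$ is reasonable and in line with how the paper treats the analogous single-task term in STEP~3 of the proof of \cref{thm:gen_boun_g}, though strictly speaking this hypothesis is not part of the lemma statement.
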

\begin{proof}
The bound follows by a direct application of Hoeffding inequality.
\end{proof}

We are now ready to prove theorem \cref{thm:complete_thm_mtl_g}.
\begin{proof}
Recall that 
\eqals{\risk(\hat g) -\risk(g_*) \leq (\nor{G_*}_* - \gamma)^2\frac{\skx^2}{T} + 2\sup_{G\in\GG_\gamma}\abs{\risk(\hat{G})- \risk(G))}.
}
 Recall that for any $G\in\GG_\gamma$, $\tnorm{G}\leq \gamma$. Now, combining \cref{eq:split_err} and  \cref{lem:bound_term_1}, \cref{lem:bound_term_2} and \ref{lem:final_part}, we get
  \eqals{
\risk(\hat g) - \risk(g_*)\leq ((\nor{G_*}_* - \gamma)^2\frac{\skx^2}{T} + \gamma^2\left(\frac{A}{n}+\frac{B}{\sqrt{n}}\right) + \gamma\left(\frac{A'}{n}+\frac{B'}{\sqrt{n}}\right) + 
\sqrt{\Big(\frac{1}{T}\sum_t\skyt^2\Big)\frac{1}{nT}\ln\Big(\frac{2}{\delta}\Big)},
} 
where
 \eqals{
&A =  \frac{4}{T} \Big(\frac{\skx^2}{3}\ln\Big( \frac{T\mathsf{r}_1}{\delta}\Big)\Big) \qquad B = \frac{1}{T} \sqrt{4 \skx^2 \max_{t}\nor{C_t}_{\rm op}\ln\Big(\frac{T\mathsf{r}_1}{\delta}\Big)}\Big) +\frac{1}{T}\skx\sqrt{2\nor{\sum_t C_t}_{\rm op}\ln\Big(\frac{\mathsf{r}_2}{\delta}\Big)}  \\
&A'= \frac{4}{T}\Big( \frac{\max_t\skx\skyt }{3}\ln\Big(\frac{\mathsf{r}_2}{\delta}\Big)\Big)\qquad B'=
\frac{1}{T}\sqrt{2T \risk(G_{*})\ln\Big(\frac{\mathsf{r}_2}{\delta}\Big).}
}

Optimizing with respect to $\gamma$ we could find the optimal parameter and compute the corresponding bound. However, in the following we choose $\gamma = \tnorm{G_*}$,  so that the approximation error is zero. 
In the following we will bound $\max_t \opnorm{C_t}$ with $\opnorm{\sum_t C_t}$ and both the logarithm terms with $\ln (\frac{T \mathsf{r}}{\delta})$ for a suitable $\mathsf{r}$ (e.g. $\max(\mathsf{r}_1,\mathsf{r}_2)$). Isolating the faster term we obtain

 \eqals{
\risk(\hat g)- \risk(g_*) \leq &\tnorm{G_*}^2\skx\left[\frac{\sqrt{\nor{\sum_t C_t}_{\rm op}}}{T}\frac{2+\sqrt{2}}{\sqrt{n}}+\frac{\skx\tnorm{G_*}}{T}\sqrt{\frac{2T \risk(G_{*})}{n}}\right]\ln\Big(\frac{T\mathsf{r}}{\delta}\Big)\\
&+ \sqrt{\Big(\frac{1}{T}\sum_t\skyt^2\Big)\frac{1}{nT}\ln\Big(\frac{2}{\delta}\Big)}+O((nT)^{-1}).
} 
Denote  by $\bar{C}$ the average of $C_1, \dots, C_T$, i.e. $\frac{1}{T}\sum_t C_t$. Then,
 \eqals{
\frac{1}{T}\sqrt{\nor{\sum_t C_t}_{\rm op}} = \frac{1}{\sqrt{T}}\nor{\bar{C}}_{\rm op}^{\frac{1}{2}}.
}
Rearranging the terms we get the final bound  
\eqals{\risk(\hat g)- \risk(g_*) \leq \frac{1}{\sqrt{nT}}\Big(\tnorm{G_*}^2\nor{\bar{C}}_{\rm op}^{\frac{1}{2}}\skx(2+\sqrt{2})+\skx\tnorm{G_*}\sqrt{2\risk(G_*)} + \bar{\sky}\Big)\sqrt{\ln\Big(\frac{T \mathsf{r}}{\delta}\Big)} + O((nT)^{-1}),
}
where $\bar{\sky} = \sqrt{\frac{1}{T}\sum_t\skyt^2}$.
 
\end{proof}
\vspace{1cm}

\section{Auxiliary Lemmas}\label{sec:auxiliary_lemma}

In this section we recall some auxiliary results that are used in the proofs of the work. 
Let us recall the definition of effective rank. Given an Hilbert space $\mathcal{H}$  let $A:\mathcal{H}\rightarrow \mathcal{H}$,  
 be a compact operator. The effective rank of $A$ is refined as \eqals{
 r(A) = \frac{\tr A}{\nor{A}_{\rm op}}.
 }
 
 \begin{lemma}\label{lem:berst_ineq}
Let $X_1,\dots, X_n \in \mathbb{C}^{d\times d}$ a sequence of independent self adjoint random matrices such that $\EE X_i=0$, for $i=1,\dots,n$ and $\sigma^2\geq \nor{\sum_{i=1}^n\EE X_i^2}_{\rm op}$. Assume that $\nor{X_i}\leq U$  almost surely for all $1\leq i\leq n$ and some positive $U \in \mathbb{R}$. Then, for any $t\geq \frac{1}{6} (U + \sqrt{U + 36\sigma^2})$, \eqal{
\mathbb{P} \Big(\nor{\sum_{i=1}^n X_i}_{\rm op} > t \Big) \leq 14 r(\sum_{i=1}^n \EE X_i^2) \exp\big(-\frac{t^2/2}{\sigma^2 + tU/3}\big),}
where $r(\cdot)$ denotes the effective rank. 

\end{lemma}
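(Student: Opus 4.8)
The statement is the dimension-free matrix Bernstein inequality of \citep{minsker2017some}, so in this paper it is used as a black box; nonetheless, the plan I would follow to prove it is the \emph{intrinsic dimension} version of the matrix Laplace transform method, whose point is to replace the ambient dimension $d$ by the effective rank $r(V)$ of the variance proxy $V := \sum_{i=1}^n \EE X_i^2$. First I would reduce to a one-sided estimate: since $\opnorm{\sum_i X_i} = \max\{\lambda_{\max}(\sum_i X_i),\ \lambda_{\max}(-\sum_i X_i)\}$ and the matrices $-X_i$ satisfy the same hypotheses with the same $V$, it suffices to bound $\PP(\lambda_{\max}(\sum_i X_i) > t)$ by $7\, r(V)\exp(-\tfrac{t^2/2}{\sigma^2 + tU/3})$ and take a union bound over the two events, which produces the factor $14$.

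For the one-sided bound I would avoid the naive Chernoff argument (which loses the factor $\tr I = d$) and instead work with the nonnegative function $\varphi(x) = e^{x} - x - 1$, which vanishes at $0$. Writing $Y = \sum_i X_i$, monotonicity of $\varphi$ on $\R_+$ together with $\varphi \ge 0$ give $\varphi(\theta t)\,\mathbf{1}\{\lambda_{\max}(Y) \ge t\} \le \lambda_{\max}(\varphi(\theta Y)) \le \tr\,\varphi(\theta Y)$, so Markov's inequality yields $\PP(\lambda_{\max}(Y) \ge t) \le \varphi(\theta t)^{-1}\,\EE\,\tr\,\varphi(\theta Y)$ for every $\theta > 0$. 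The core estimate is then $\EE\,\tr\,\varphi(\theta Y) = \EE\,\tr\,e^{\theta Y} - d \le \tr\,e^{g(\theta)V} - d$, obtained via Lieb's concavity theorem and subadditivity of matrix cumulant generating functions together with the scalar bound $\varphi(\theta x) \le g(\theta)\,x^2$ on $|x| \le U$, where $g(\theta) = (e^{\theta U} - \theta U - 1)/U^2$. Finally, since $x \mapsto \varphi(cx)/x$ is nondecreasing on $\R_+$, I would bound $\tr\,\varphi(g(\theta)V) = \sum_j \varphi(g(\theta)\lambda_j(V)) \le \tfrac{\tr V}{\opnorm{V}}\,\varphi(g(\theta)\opnorm{V}) = r(V)\,\varphi(g(\theta)\opnorm{V})$ and use $\opnorm{V} \le \sigma^2$.

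Combining these steps gives $\PP(\lambda_{\max}(Y) \ge t) \le r(V)\,\varphi(g(\theta)\sigma^2)/\varphi(\theta t)$, after which the proof concludes by optimizing over $\theta$: with the elementary inequalities $\varphi(y) \ge y^2/2$ (used in the denominator) and $\varphi(y) \le (y^2/2)(1 - y/3)^{-1}$ for $y \in [0,3)$ (used in the numerator), the standard Bernstein computation with $\theta$ of order $t/(\sigma^2 + Ut/3)$ produces the exponential factor $\exp(-\tfrac{t^2/2}{\sigma^2 + tU/3})$; the lower threshold $t \ge \tfrac16(U + \sqrt{U + 36\sigma^2})$ and the explicit constant $14$ are precisely what is needed to make this last estimate valid.

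I expect the main obstacle to be the inequality $\EE\,\tr\,e^{\theta Y} \le \tr\,e^{g(\theta)V}$: this is the only step that is not elementary scalar bookkeeping, as it rests on Lieb's concavity theorem and the operator-convexity calculus for the matrix exponential. Since the lemma enters our arguments only as an auxiliary tool, I would ultimately cite \citep{minsker2017some} for the details of this step and of the final constant tracking rather than reproduce them.
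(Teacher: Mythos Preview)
Your reading is correct: the paper does not prove this lemma at all. It appears in \cref{sec:auxiliary_lemma} under the heading ``In this section we recall some auxiliary results,'' is stated without proof, and is attributed (together with its non-self-adjoint companion and the extension to Hilbert--Schmidt operators) to \citep{minsker2017some}. So there is no ``paper's own proof'' to compare against; your decision to treat it as a black box matches exactly what the authors do.

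The sketch you give is a faithful outline of Minsker's intrinsic-dimension argument: the one-sided reduction with the factor $2$, the replacement of the naive Chernoff step by $\varphi(x)=e^x-x-1$ so that $\tr\,\varphi(\cdot)$ produces $r(V)$ rather than $d$, the Lieb/Tropp subadditivity step for the matrix MGF, and the scalar Bernstein optimization at the end. One small wording issue: in step 3 you write $\EE\,\tr\,\varphi(\theta Y)=\EE\,\tr\,e^{\theta Y}-d$, which reintroduces the ambient dimension $d$ that the whole argument is designed to avoid; what actually happens is that you bound $\EE\,\tr\,e^{\theta Y}\le \tr\,e^{g(\theta)V}$ and then pass to $\tr\,\varphi(g(\theta)V)$ directly, without ever isolating a ``$-d$'' term. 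Apart from that slip in presentation, the plan is sound and is precisely the content of the cited reference.
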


A similar results holds true for general matrices with no requirements on self adjointness:
\begin{lemma}\label{lem:berst_ineq_2}
Let $X_1,\dots, X_n \in \mathbb{C}^{d\times d}$ a sequence of independent  random matrices such that $\EE X_i=0$, for $i=1,\dots,n$ and $\sigma^2\geq max(\nor{\sum_{i=1}^n\EE X_i X_i^*}_{\rm op},\nor{\sum_{i=1}^n\EE X_i^* X_i}_{\rm op}$. Assume that $\nor{X_i}\leq U$  almost surely for all $1\leq i\leq n$ and some positive $U \in \mathbb{R}$. Then, for any $t\geq \frac{1}{6} (U + \sqrt{U + 36\sigma^2})$, \eqal{
\mathbb{P} \Big(\nor{\sum_{i=1}^n X_i}_{\rm op} > t \Big) \leq 28 \tilde{d} \exp\big(-\frac{t^2/2}{\sigma^2 + tU/3}\big),}
where $\tilde{d}=\max(r(\sum_{i=1}^n\EE X_i X_i^*),r(\sum_{i=1}^n\EE X_i^* X_i)))$ and $r(\cdot)$ denotes the effective rank.
\end{lemma}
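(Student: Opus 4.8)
The plan is to deduce this statement from its self-adjoint counterpart, Lemma~\ref{lem:berst_ineq}, via the standard Hermitian dilation argument (as in \citep{minsker2017some}), so that no new concentration machinery is needed. For each $i$ I would introduce the self-adjoint random matrix
\[
\tilde X_i \;=\; \begin{pmatrix} 0 & X_i \\ X_i^* & 0 \end{pmatrix} \;=\; \mathcal D(X_i).
\]
Since the dilation map $X \mapsto \mathcal D(X)$ is linear and $\EE X_i = 0$, the $\tilde X_i$ are independent, self-adjoint, and mean zero, hence admissible inputs for Lemma~\ref{lem:berst_ineq}.

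Next I would verify that every constant transfers with no loss. The nonzero eigenvalues of $\mathcal D(X)$ are exactly $\pm$ the singular values of $X$, so $\opnorm{\mathcal D(X)} = \opnorm{X}$; in particular $\opnorm{\tilde X_i} \le U$, and by linearity of $\mathcal D$ one gets $\opnorm{\sum_{i=1}^n \tilde X_i} = \opnorm{\sum_{i=1}^n X_i}$. A direct computation gives $\tilde X_i^2 = \operatorname{diag}(X_i X_i^*,\, X_i^* X_i)$, whence
\[
\sum_{i=1}^n \EE \tilde X_i^2 \;=\; \operatorname{diag}\!\Big(\textstyle\sum_{i=1}^n \EE X_i X_i^*,\ \sum_{i=1}^n \EE X_i^* X_i\Big),
\]
an operator whose norm equals $\max\big(\opnorm{\sum_i \EE X_i X_i^*},\, \opnorm{\sum_i \EE X_i^* X_i}\big) \le \sigma^2$. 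Thus the variance bound and the boundedness bound of Lemma~\ref{lem:berst_ineq} hold with the same $\sigma^2$ and $U$, and the admissible range $t \ge \tfrac16(U + \sqrt{U + 36\sigma^2})$ is identical in the two statements.

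Applying Lemma~\ref{lem:berst_ineq} to $\tilde X_1,\dots,\tilde X_n$ and rewriting $\opnorm{\sum_i \tilde X_i} = \opnorm{\sum_i X_i}$ then yields the claimed tail bound with prefactor $14\, r\big(\sum_i \EE \tilde X_i^2\big)$, and it only remains to translate this effective rank back into $\tilde d$. For a block-diagonal operator,
\[
r\big(\operatorname{diag}(A,B)\big) \;=\; \frac{\tr A + \tr B}{\max(\opnorm{A},\opnorm{B})} \;\le\; r(A) + r(B) \;\le\; 2\max\big(r(A),r(B)\big),
\]
so the prefactor is at most $28\,\tilde d$, exactly as stated. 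I do not expect a genuine obstacle: the whole argument is elementary linear algebra plus one invocation of Lemma~\ref{lem:berst_ineq}, and the only point needing care is precisely this last observation — that dilation can inflate the effective rank by a factor $2$ (hence $14 \to 28$) rather than leaving it at $\max(r_1,r_2)$.
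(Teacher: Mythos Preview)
Your argument is correct. The paper itself does not supply a proof of this lemma: it merely states the result and refers to Section~3.2 of \citep{minsker2017some} for the extension to non--self-adjoint (and Hilbert--Schmidt) operators. The Hermitian dilation reduction you give is precisely the standard route taken there, so you have effectively written out what the paper leaves to citation; the only step worth flagging --- bounding $r(\operatorname{diag}(A,B))\le r(A)+r(B)\le 2\tilde d$, which is what turns the constant $14$ into $28$ --- is handled correctly.
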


The lemma above holds true for Hilbert Schmidt operators between separable Hilbert spaces, as shown in section $3.2$ in \citep{minsker2017some}.

\begin{lemma}\label{lem:boundnorm_cy}
The following bound on the operator norm of the covariance operator on the output $\EE \psi(y) \otimes \psi(y)$ holds true:
\eqals{
\opnorm{\EE \psi(y) \otimes \psi(y)}\leq \nor{G_*}_{\hs}^2 \nor{C}_{\rm op} + \risk(g_*).
}
\end{lemma}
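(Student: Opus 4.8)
The plan is to decompose $\psi(y)$ into the prediction of the optimal surrogate model plus a residual, and to exploit the first-order optimality of $G_*$ to cancel the cross terms. Write $\varepsilon := \psi(y) - G_*\phi(x) = \psi(y) - g_*(x)$, so that $\psi(y) = g_*(x) + \varepsilon$ and
\[
\EE\,\psi(y)\otimes\psi(y) \;=\; \EE\,g_*(x)\otimes g_*(x) \;+\; \EE\,g_*(x)\otimes\varepsilon \;+\; \EE\,\varepsilon\otimes g_*(x) \;+\; \EE\,\varepsilon\otimes\varepsilon .
\]

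First I would show that the two middle terms vanish. Since by \cref{asm:gstar-exists} the operator $G_*$ minimizes $\risk(G) = \EE\nor{\psi(y) - G\phi(x)}_\hy^2$ over Hilbert--Schmidt operators, the stationarity condition reads $G_* C = Z$, where $C = \EE\,\phi(x)\otimes\phi(x)$ and $Z = \EE\,\psi(y)\otimes\phi(x)$; equivalently $\EE[\phi(x)\otimes\varepsilon] = Z^* - C G_*^* = 0$. Because $g_*(x)\otimes\varepsilon = G_*\big(\phi(x)\otimes\varepsilon\big)$, taking expectations gives $\EE[g_*(x)\otimes\varepsilon] = G_*\,\EE[\phi(x)\otimes\varepsilon] = 0$, and $\EE[\varepsilon\otimes g_*(x)]$ is its adjoint, hence also $0$. (If one instead reads $g_*$ as the genuine conditional mean $\EE[\psi(y)\mid x]$, the same conclusion is immediate from $\EE[\varepsilon\mid x] = 0$.)

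Next I would bound the two surviving nonnegative operators. For the signal part, $\EE\,g_*(x)\otimes g_*(x) = G_* C G_*^* \succeq 0$, so
\[
\opnorm{\EE\,g_*(x)\otimes g_*(x)} \;\le\; \tr\!\big(G_* C G_*^*\big) \;=\; \tr\!\big(C\, G_*^* G_*\big) \;\le\; \opnorm{C}\,\tr\!\big(G_*^* G_*\big) \;=\; \opnorm{C}\,\nor{G_*}_\hs^2
\]
(one may equivalently use submultiplicativity, $\opnorm{G_* C G_*^*}\le\opnorm{G_*}^2\opnorm{C}\le\nor{G_*}_\hs^2\opnorm{C}$). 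For the noise part, $\EE\,\varepsilon\otimes\varepsilon\succeq 0$ is trace class and $\opnorm{\EE\,\varepsilon\otimes\varepsilon}\le\tr(\EE\,\varepsilon\otimes\varepsilon) = \EE\nor{\varepsilon}_\hy^2 = \risk(g_*)$. Combining the two estimates by the triangle inequality yields $\opnorm{\EE\,\psi(y)\otimes\psi(y)}\le\nor{G_*}_\hs^2\opnorm{C} + \risk(g_*)$.

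The main obstacle is the orthogonality step: one must ensure that a minimizer $G_*$ existing in the hypothesis class genuinely satisfies $G_* C = Z$ (this is exactly where the existence hypothesis of \cref{asm:gstar-exists} enters), and that $\EE[\phi(x)\otimes\varepsilon]$ is a well-defined Hilbert--Schmidt operator so that pulling $G_*$ out of the expectation is legitimate; the remaining inequalities are a routine interplay between operator, trace and Hilbert--Schmidt norms.
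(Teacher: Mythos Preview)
Your proof is correct and follows essentially the same route as the paper: decompose $\psi(y)$ into $G_*\phi(x)$ plus a residual, use the optimality of $G_*$ (the paper invokes $G_*\phi(x)=\int\psi(y)\,d\rho(y\mid x)$ directly, which you note as the alternative justification) to kill the cross terms, and then bound $\opnorm{G_* C G_*^*}\le\nor{G_*}_\hs^2\opnorm{C}$ and $\opnorm{\EE\,\varepsilon\otimes\varepsilon}\le\risk(g_*)$. The only cosmetic difference is that the paper uses the asymmetric expansion $(\psi-G_*\phi)\otimes(\psi-G_*\phi)+G_*\phi\otimes(\psi-G_*\phi)+\psi\otimes G_*\phi$ rather than your symmetric one, so it only needs one cross term to vanish.
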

\begin{proof}
Let us start for the identity below: \eqal{\label{eq:split_psipsi}
\psi(y)\otimes \psi(y) = (\psi(y) - G_*\phi(x)) \otimes  (\psi(y) - G_*\phi(x)) + G_*\phi(x) \otimes (\psi(y) - G_*\phi(x)) + \psi(y) \otimes G_*\phi(x).
}
Taking the expectation on the right hand side we obtain \eqals{
\EE ( (\psi(y) - G_*\phi(x)) \otimes  (\psi(y) - G_*\phi(x))) + \EE G_*\phi(x) \otimes (\psi(y) - G_*\phi(x)) + \EE \psi(y) \otimes G_*\phi(x).
}
Note that the second term is zero, since \eqals{
\EE G_*\phi(x) \otimes (\psi(y) - G_*\phi(x)) = &\int_{\xx\times \yy}  G_*\phi(x) \otimes (\psi(y) - G_*\phi(x))d\rho(x,y)  \\
=&\int_\xx G_* \phi(x) \Big(\int_\yy \psi(y)d\rho(y\mid x) - G_*\phi(x)  \Big)d\rho_\xx
} and $G_*\phi(x) = \int_\yy\phi(y)d\rho(y\mid x).$
As for the last term, we have \eqals{
\EE \psi(y)\otimes G_*\phi(x) = \int_\xx\int_{\yy} \psi(y)d\rho(y\mid x)\otimes G_*\phi(x)d\rho_\xx  =  \int_\xx G_*\phi(x)\otimes G_*\phi(x).
}
Taking the operator norm we get \eqals{
\opnorm{\EE \psi(y) \otimes \psi(y)}&\leq \nor{\EE ( (\psi(y) - G_*\phi(x)) \otimes  (\psi(y) - G_*\phi(x)))}_{\rm op} + \nor{G_* C G_*^*}_{\hs}\\
&\leq \risk(g_*) + \nor{G_* C G_*^*}_{\hs}^2\leq\risk(g_*) + \nor{G}_{\hs}^2\nor{C}_{\rm op}.
}
\end{proof}

\begin{lemma}\label{lem:bound_ky}
The following bound holds true 
\eqals{
\nor{\EE (\sp{\psi(y)}{\psi(y)}\phi(x)\otimes \phi(x))}_{\rm op}\leq \skx^2(\nor{G_*}_{\hs}^2 \nor{C}_{\rm op} + \risk(g_*)). 
}
\end{lemma}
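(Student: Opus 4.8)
The plan is to reduce the bound on $\opnorm{\EE(\sp{\psi(y)}{\psi(y)}\,\phi(x)\otimes\phi(x))}$ to a bound on the \emph{trace} of the output covariance operator $C_{Y}:=\EE\,\psi(y)\otimes\psi(y)$, and then to read off the latter from the decomposition already used in the proof of \cref{lem:boundnorm_cy}.

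First I would note that $A:=\EE(\sp{\psi(y)}{\psi(y)}\,\phi(x)\otimes\phi(x))$ is a positive self-adjoint operator on $\hx$, so that $\opnorm{A}=\sup_{\nor{u}\le1}\sp{u}{Au}$. For every fixed $(x,y)$ one has $\sp{\psi(y)}{\psi(y)}=\nor{\psi(y)}_{\hy}^{2}\ge0$ and the operator inequality $\phi(x)\otimes\phi(x)\preceq\nor{\phi(x)}^{2}I\preceq\skx^{2}I$, hence $\sp{\psi(y)}{\psi(y)}\,\phi(x)\otimes\phi(x)\preceq\skx^{2}\nor{\psi(y)}_{\hy}^{2}\,I$. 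Taking expectations, which preserves the order $\preceq$, gives $A\preceq\skx^{2}\,\EE\nor{\psi(y)}_{\hy}^{2}\,I$, so $\opnorm{A}\le\skx^{2}\,\EE\nor{\psi(y)}_{\hy}^{2}=\skx^{2}\,\tr(C_{Y})$.

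It then remains to bound $\tr(C_{Y})$. Here I would reuse the identity obtained by taking expectations in \cref{eq:split_psipsi}: since the cross term vanishes (because $G_{*}\phi(x)=\int_{\yy}\psi(y)\,d\rho(y\mid x)$), one gets $C_{Y}=\EE\big((\psi(y)-G_{*}\phi(x))\otimes(\psi(y)-G_{*}\phi(x))\big)+G_{*}CG_{*}^{*}$. Taking traces, the first term contributes $\EE\nor{\psi(y)-G_{*}\phi(x)}_{\hy}^{2}=\risk(g_{*})$, while $\tr(G_{*}CG_{*}^{*})=\tr(G_{*}^{*}G_{*}\,C)\le\opnorm{C}\,\tr(G_{*}^{*}G_{*})=\opnorm{C}\,\nor{G_{*}}_{\hs}^{2}$, using $\tr(MN)\le\opnorm{N}\,\tr(M)$ for positive operators $M,N$. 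Combining these gives $\tr(C_{Y})\le\nor{G_{*}}_{\hs}^{2}\opnorm{C}+\risk(g_{*})$, and inserting this into the estimate of the previous paragraph yields the claimed bound.

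The argument is essentially a chain of positivity estimates, so I do not expect a genuine obstacle. The only point needing a small idea is to realize that one should control the \emph{trace} of $C_{Y}$ rather than its operator norm (which is what \cref{lem:boundnorm_cy} bounds), and that the part of that trace coming from the residual $\psi(y)-G_{*}\phi(x)$ is exactly $\risk(g_{*})$.
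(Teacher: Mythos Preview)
Your proof is correct and yields exactly the stated bound, but the route differs from the paper's in one organizational respect. You immediately apply the crude operator inequality $\phi(x)\otimes\phi(x)\preceq\skx^{2}I$, reducing everything to a bound on the scalar $\tr(C_{Y})=\EE\nor{\psi(y)}_{\hy}^{2}$, which you then decompose via \cref{eq:split_psipsi} and estimate using $\tr(G_{*}CG_{*}^{*})\le\opnorm{C}\nor{G_{*}}_{\hs}^{2}$. The paper instead conditions on $x$ first and decomposes the \emph{conditional} quantity $\EE_{y\mid x}\nor{\psi(y)}^{2}=\EE_{y\mid x}\nor{\psi(y)-G_{*}\phi(x)}^{2}+\nor{G_{*}\phi(x)}^{2}$ pointwise, keeping the operator $\phi(x)\otimes\phi(x)$ intact; for the residual piece it then uses $\phi(x)\otimes\phi(x)\preceq\skx^{2}I$ (as you do), while for the $G_{*}$ piece it bounds the scalar weight $\nor{G_{*}\phi(x)}^{2}\le\skx^{2}\nor{G_{*}}_{\hs}^{2}$ and integrates $\phi(x)\otimes\phi(x)$ to recover $C$ directly. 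Both arguments rest on the same identity and the same positivity estimates; yours is a touch more uniform (one bound handles both pieces at once), while the paper's conditional decomposition is what gets recycled verbatim in the multitask version \cref{lem:bound_ky_mlt}.
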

\begin{proof}
 Let us rewrite $\EE (\sp{\psi(y)}{\psi(y)}\phi(x)\otimes \phi(x))$ as follows
 \eqal{\label{eq:first_eq}
 \int_{\xx\times \yy} \sp{\psi(y)}{\psi(y)}\phi(x)\otimes \phi(x)d\rho(x,y) = \int_{\xx}\phi(x)\otimes \phi(x) \left(\int_{\yy} \sp{\psi(y)}{\psi(y)}d\rho(y\mid x) \right)d\rho_\xx(x).
 }
 The inner integral corresponds to $\EE_{y\mid x} \tr (\psi(y)\otimes \psi(y)) = \tr ~\EE_{y\mid x} (\psi(y)\otimes \psi(y)).$ Writing $\psi(y)\otimes \psi(y)$ as in \cref{eq:split_psipsi} and integrating wrt $\rho(\cdot \mid x)$, we observe that 
 \eqals{
 \int_{\yy}\psi(y)\otimes \psi(y) d\rho(y\mid x) &= \int_{\yy} (\psi(y)- G_* \phi(x))\otimes (\psi(y)- G_* \phi(x))d\rho(y\mid x) \\
 &+ G_*\phi(x)\otimes \left(\int_{\yy} \psi(y)d\rho(y\mid x) - G_*\phi(x)\right) \\
 &+G_* \phi(x)\otimes \phi(x) G_*^*.
 }
 Since $\int_\yy \psi(y)d\rho(y\mid x) = G_*\phi(x)$, the second term on the right hand side is zero and hence 
 \eqals{\tr~ \EE_{y\mid x} \psi(y)\otimes \psi(y) = \tr~ \EE_{y\mid x}(\psi(y)- G_* \phi(x))\otimes (\psi(y)- G_* \phi(x))+ \tr~ G_* \phi(x)\otimes \phi(x) G_*^* .
 }
 Substituting it on the right hand side of \cref{eq:first_eq} and taking the operator norm and using the triangle inequality, we obtain 
 \eqals{
\opnorm{\int_{\xx}\phi(x)\otimes \phi(x)\left(\int_{\yy}\nor{\psi(y) - G_*\phi(x)}_{\hy}^2d\rho(y\mid x)\right)d\rho_\xx(x)} \\
\leq \skx^2 \int_{\xx\times \yy}\nor{\psi(y) - G_*\phi(x)}_{\hy}^2d\rho(y,x) = \skx^2\risk(g_*)
 }
 and 
 \eqals{
 \nor{\int_{\xx}\phi(x)\otimes \phi(x) ~\tr(G_* \phi(x)\otimes \phi(x) G_*^*)}_{\rm op}\leq \nor{C}_{\rm op}\skx^2 \nor{G_*}_{\hs}^2.
 }
 Combining the parts together leads to the desired inequality 
 \eqals{
 \opnorm{\EE (\sp{\psi(y)}{\psi(y)}\phi(x)\otimes \phi(x))}\leq \skx^2(\nor{G_*}_{\hs}^2 \nor{C}_{\rm op} + \risk(g_*)).
 }
\end{proof}

\begin{lemma}\label{lem:boundnorm_cy_mtl}
Let $C_{Y,t}$ denote the covariance on the output for the $t^{th}$ task, that is \eqal{
C_{Y,t} :=\EE \psi(y_t)\otimes \psi(y_t).
}
Then the following inequality holds true\eqal{
\opnorm{\sum_t C_{Y,t}}\leq \nor{G_{*}}_{\hs}^2 \nor{\sum_t C_t}_{\rm op} + T\risk(G_{*}).
}
\end{lemma}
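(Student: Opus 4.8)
The plan is to replay the argument of \cref{lem:boundnorm_cy} separately for each task and then aggregate the resulting bounds. Fix $t$ and start from the algebraic identity
\begin{align*}
\psi(y_t)\otimes\psi(y_t) &= \big(\psi(y_t) - G_{t*}\phi(x_t)\big)\otimes\big(\psi(y_t) - G_{t*}\phi(x_t)\big) \\
&\quad + G_{t*}\phi(x_t)\otimes\big(\psi(y_t) - G_{t*}\phi(x_t)\big) + \psi(y_t)\otimes G_{t*}\phi(x_t).
\end{align*}
Taking expectation with respect to $\rho_t$ and using that $g_*=G_*\phi(\cdot)$ minimizes the (decoupled) least-squares surrogate risk $\risk_T$ under \cref{asm:gstar-exists}, so that $G_{t*}\phi(x)=\int_\yy\psi(y)\,d\rho_t(y\mid x)$ is the conditional mean, the middle term vanishes exactly as in the single-task proof and the last term becomes $G_{t*}C_tG_{t*}^*$. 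Hence $C_{Y,t}=\EE\big[(\psi(y_t)-G_{t*}\phi(x_t))\otimes(\psi(y_t)-G_{t*}\phi(x_t))\big]+G_{t*}C_tG_{t*}^*$ for every $t$.

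Next I would sum over $t$, apply the triangle inequality for $\opnorm{\cdot}$, and bound the two resulting pieces. The first, $\opnorm{\sum_t\EE[(\psi(y_t)-G_{t*}\phi(x_t))\otimes(\psi(y_t)-G_{t*}\phi(x_t))]}$, is the operator norm of a trace-class positive semidefinite operator on $\hy$, hence bounded by its trace $\sum_t\EE\nor{\psi(y_t)-G_{t*}\phi(x_t)}_\hy^2=T\risk(G_*)$. For the second, $\opnorm{\sum_t G_{t*}C_tG_{t*}^*}$, I would use Loewner monotonicity: since each $C_s$ is positive, $C_t\preceq\sum_s C_s\preceq\opnorm{\sum_s C_s}\,I$, so $G_{t*}C_tG_{t*}^*\preceq\opnorm{\sum_s C_s}\,G_{t*}G_{t*}^*$ and therefore
\[
\opnorm{\sum_t G_{t*}C_tG_{t*}^*}\ \le\ \opnorm{\sum_s C_s}\cdot\opnorm{\sum_t G_{t*}G_{t*}^*}\ \le\ \opnorm{\sum_t C_t}\cdot\tr\!\Big(\sum_t G_{t*}G_{t*}^*\Big)\ =\ \opnorm{\sum_t C_t}\,\nor{G_*}_\hs^2,
\]
using $\tr(G_{t*}G_{t*}^*)=\nor{G_{t*}}_\hs^2$ and $\nor{G_*}_\hs^2=\sum_t\nor{G_{t*}}_\hs^2$. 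Adding the two bounds yields the claim.

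I do not expect a genuine obstacle here; the only points requiring care are bookkeeping ones: justifying the exchange of trace and expectation for the positive operators involved (identical to \cref{lem:boundnorm_cy}) and the legitimacy of the conditional-mean identity $G_{t*}\phi(x)=\EE_{\rho_t}[\psi(y_t)\mid x]$, which is the same fact already used in the single-task case. The one step that is genuinely the ``idea'' of the proof is the monotonicity bound $C_t\preceq\sum_s C_s$: it is precisely what makes $\opnorm{\sum_t C_t}$ — rather than the weaker $\sum_t\opnorm{C_t}$ or $T\max_t\opnorm{C_t}$ — appear in the final estimate, so that after the rescaling $\opnorm{\sum_t C_t}=T\opnorm{\bar C}$ in the calling context the multitask bound lines up with its single-task analogue.
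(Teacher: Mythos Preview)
Your proposal is correct and follows essentially the same route as the paper: the same three-term decomposition of $\psi(y_t)\otimes\psi(y_t)$, the same conditional-mean argument killing the cross term, and the same key monotonicity step $C_t\preceq\sum_s C_s$ to extract $\opnorm{\sum_t C_t}$ from $\sum_t G_{t*}C_tG_{t*}^*$. Your handling of the second piece via the Loewner order and then bounding the operator norm by the trace is, if anything, slightly cleaner than the paper's version, which passes through an HS-norm bound, but the content is the same.
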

\begin{proof}
Let us start from the identity below: \eqals{
\psi(y_t)\otimes \psi(y_t) & = (\psi(y_t) - G_{t*}\phi(x_t))\otimes(\psi(y_t) - G_{t*}\phi(x_t))  + G_{t*}\phi(x_t)\otimes(\psi(y_t) \\
&- G_{t*}\phi(x_t)) + \psi(y_t)\otimes  G_{t*}\phi(x_t).
}
Taking the expectation on the right hand side we obtain \eqals{
\EE ( (\psi(y_t) - G_{t*}\phi(x_t))\otimes  (\psi(y_t) - G_{t*}\phi(x_t) + \EE G_{t*}\phi(x_t) \otimes (\psi(y_t) - G_{t*}\phi(x_t)) + \EE \psi(y_t)\otimes G_{t*}\phi(x_t).
}
As in \cref{lem:boundnorm_cy}, note that the second term is zero. 
As for the last term, we have \eqals{
\EE \psi(y_t)\otimes G_{t*}\phi(x_t) &= \int_\xx\int_{\yy} \psi(y_t)d\rho_t(y\mid x)\otimes G_{t*}\phi(x)d\rho_{t\xx}  =  \int_\xx (G_{t*}\phi(x))\otimes  (G_{t*}\phi(x)) d\rho_{t,\xx} \\
&=  \int_\xx (G_{t*}\phi(x)\otimes \phi(x) G^*_{t*})\rho_{t,\xx} = G_{t*} C_t G_{t*}^* .
}
Therefore, summing on $t$ and taking the operator norm we get \eqals{
\opnorm{\sum_t C_{Y,t}}&\leq \nor{\sum_t\EE ( (\psi(y_t) - G_{t*}\phi(x_t))^2}_{\rm op} + \nor{ G_{t*} C_t G_{t*}^*}_{\hs}\\
&\leq \sum_t\risk(G_{t*}) + \nor{ \sum_t G_{t*} C_t G_{t*}^*}_{\hs} \leq  \sum_t\risk(G_{t*}) + \nor{ \sum_t G_{t*} \sum_s C_s G_{t*}^*}_{\hs} \\
&\leq\sum_t\risk(G_{t*}) + \sum_t\nor{G_{t*}}_{\hs}^2\nor{\sum_t C_t}_{\rm op} \leq T\risk(G_{*}) + \nor{G_{*}}_{\hs}^2\nor{\sum_t C_t}_{\rm op}.
}
\end{proof}

\begin{lemma}\label{lem:bound_ky_mlt}
The following bound holds true 
\eqals{
\nor{\sum_t\EE (\psi_t(y_t)^2\phi(x_t)\otimes \phi(x_t))}_{\rm op}\leq \skx^2(\nor{G_*}_{\hs}^2 \nor{\sum_t C_t}_{\rm op} + \risk(g_*)). 
}
\end{lemma}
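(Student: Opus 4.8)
The plan is to adapt the proof of \cref{lem:bound_ky} to the multitask setting, handling one task at a time and then summing over $t$. For each fixed $t$ I would first rewrite the expectation as an iterated integral, first over $x\sim\rho_{t,\xx}$ and then over $y\sim\rho_t(\cdot\mid x)$,
\[
\EE\big(\nor{\psi_t(y_t)}^2\,\phi(x_t)\otimes\phi(x_t)\big)=\int_\xx \phi(x)\otimes\phi(x)\Big(\int_\yy\nor{\psi_t(y)}^2\,d\rho_t(y\mid x)\Big)d\rho_{t,\xx}(x),
\]
noting that $\int_\yy\nor{\psi_t(y)}^2\,d\rho_t(y\mid x)=\tr\,\EE_{y\mid x}\big(\psi_t(y)\otimes\psi_t(y)\big)$. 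Then, reusing the pointwise identity already exploited in \cref{lem:boundnorm_cy} and \cref{lem:boundnorm_cy_mtl}, I would expand $\psi_t(y)\otimes\psi_t(y)$ around $G_{t*}\phi(x)$; integrating over $\rho_t(\cdot\mid x)$ annihilates the cross term because $\int_\yy\psi_t(y)\,d\rho_t(y\mid x)=G_{t*}\phi(x)$ (the conditional mean is the least-squares minimizer under \cref{asm:gstar-exists}), leaving
\[
\tr\,\EE_{y\mid x}\big(\psi_t(y)\otimes\psi_t(y)\big)=\int_\yy\nor{\psi_t(y)-G_{t*}\phi(x)}^2\,d\rho_t(y\mid x)+\nor{G_{t*}\phi(x)}^2.
\]

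Substituting this back and summing over $t$ would decompose $\sum_t\EE(\nor{\psi_t(y_t)}^2\phi(x_t)\otimes\phi(x_t))$ as a sum $A+B$ of two positive semidefinite operators, with
\[
A=\sum_{t}\int_\xx\phi(x)\otimes\phi(x)\Big(\int_\yy\nor{\psi_t(y)-G_{t*}\phi(x)}^2 d\rho_t(y\mid x)\Big)d\rho_{t,\xx}(x),\quad B=\sum_{t}\int_\xx\phi(x)\otimes\phi(x)\,\nor{G_{t*}\phi(x)}^2\,d\rho_{t,\xx}(x).
\]
For $A$ I would use $\phi(x)\otimes\phi(x)\preceq\skx^2 I$ to get $A\preceq\skx^2\big(\sum_t\risk(G_{t*})\big)I$, hence $\opnorm{A}\le\skx^2\sum_t\risk(G_{t*})$. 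For $B$ I would use $\nor{G_{t*}\phi(x)}^2\le\opnorm{G_{t*}}^2\skx^2\le\nor{G_{t*}}_{\hs}^2\skx^2\le\nor{G_*}_{\hs}^2\skx^2$ together with $C_t\succeq 0$ to get $B\preceq\skx^2\nor{G_*}_{\hs}^2\sum_t C_t$, hence $\opnorm{B}\le\skx^2\nor{G_*}_{\hs}^2\opnorm{\sum_t C_t}$. The triangle inequality $\opnorm{A+B}\le\opnorm{A}+\opnorm{B}$ then gives the claim, where under the normalization $\risk=\risk_T$ one has $\sum_t\risk(G_{t*})=T\risk(g_*)$, so the right-hand side is $\skx^2\big(T\risk(g_*)+\nor{G_*}_{\hs}^2\opnorm{\sum_t C_t}\big)$ — the form in which the lemma is invoked inside \cref{lem:bound_term_2}.

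I do not anticipate a genuine obstacle: the argument is essentially a transcription of \cref{lem:bound_ky}. The only points requiring care are the bookkeeping of the $1/T$ normalization relating $\risk_T$ to the per-task risks $\risk(G_{t*})$, and preserving the operator ordering $\sum_t\nor{G_{t*}}_{\hs}^2 C_t\preceq\nor{G_*}_{\hs}^2\sum_t C_t$ when passing from the individual covariances to $\sum_t C_t$; this last step, which uses only $\nor{G_{t*}}_{\hs}\le\nor{G_*}_{\hs}$, is the reason one should not expect a sharper dependence on the per-task norms in general.
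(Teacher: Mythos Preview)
Your proposal is correct and is exactly the ``immediate variation of the proof of \cref{lem:bound_ky}'' that the paper invokes without spelling out. You have also correctly noted the missing factor of $T$: the stated bound should read $\skx^2\big(\nor{G_*}_{\hs}^2\opnorm{\sum_t C_t}+T\risk(g_*)\big)$, which is precisely the form in which the lemma is applied in \cref{lem:bound_term_2}.
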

\begin{proof}
 It is a immediate variation of the proof of \cref{lem:bound_ky}.
\end{proof}

\section{Equivalence between Tikhonov and Ivanov Problems for trace norm Regularization}\label{sec:relation-between-tikhonov-ivanov}

In this section we provide more details regarding the relation between the Tikhonov regularization problem considered in \cref{eq:trace norm-problem} and the corresponding Ivanov problem in \cref{eq:ivanov_tn}. As discussed in the paper this approach guarantees that theoretical results characterizing the excess risk of the Ivanov estimator extend automatically to the Tikhonov one. 

Let $(x_i,y_i)_{i=1}^n$ be a training set and consider $\Phi:\hx\to\R^n$ and $\Psi:\hy\to\R^n$ the operators 
\eqals{
    \Phi = \sum_{i=1}^n e_i \otimes \phi(x_i) \qquad {\textrm and} \qquad \Psi = \sum_{i=1}^n e_i \otimes \psi(y_i)
}
with $e_i\in\R^n$ the $i$-th element of the canonical basis in $\R^n$. We can write the empirical surrogate risk in compact operatorial notation as 
\eqals{
    \hat\risk(G) = \frac{1}{n} \sum_{i=1}^n \nor{G\phi(x_i)-\psi(y_i)}_\hy^2 =  \frac{1}{n} \nor{\Phi G^* - \Psi}_{\hy\otimes\R^n}^2.
}

\begin{proposition}[Representer Theorem for Trace Norm Regularization]\label{thm:representer-theorem-trace norm}
Let $\hat G\in\hx\otimes\hy$ be a minimizer of
\eqals{
    \min_{G\in\hx\otimes\hy}~ \hat\risk(G) + \lambda\tnorm{G}.
}
Then the range of $\hat G^*$ is contained in the range of $\Phi^*$, or equivalently
\eqals{
    \hat G(\Phi^\dagger\Phi) = \hat G,
}
where $\Phi^\dagger$ denotes the pseudoinverse of $\Phi$.
\end{proposition}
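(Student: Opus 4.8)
The plan is to show that composing $\hat G$ on the right with the orthogonal projection $P := \Phi^\dagger\Phi$ onto the ``data subspace'' changes neither the empirical risk nor the trace norm (it can only decrease the latter), and then to use optimality of $\hat G$ together with a strict monotonicity argument for the trace norm to conclude that $\hat G P = \hat G$, which is exactly the assertion.

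First I would record the elementary facts about $P$. Since $\Phi:\hx\to\R^n$ has finite rank, its range is closed, the Moore--Penrose pseudoinverse $\Phi^\dagger:\R^n\to\hx$ is bounded, and $P$ is the orthogonal projection of $\hx$ onto $(\ker\Phi)^\perp=\mathrm{range}(\Phi^*)$; in particular $P=P^*=P^2$ and $\Phi P=\Phi$. Then, for any $G$ with $\tnorm{G}<\infty$ (which applies to a minimizer $\hat G$ since $\lambda>0$), I would observe the two facts
\[
    \hat{\risk}(GP)=\hat{\risk}(G),\qquad \tnorm{GP}\le\tnorm{G}.
\]
The first holds because $\Phi(GP)^*=\Phi P G^*=\Phi G^*$, so $\nor{\Phi(GP)^*-\Psi}=\nor{\Phi G^*-\Psi}$; the second is submultiplicativity of the trace norm under composition with a bounded operator, $\tnorm{GP}\le\tnorm{G}\opnorm{P}\le\tnorm{G}$, using $\opnorm{P}\le 1$. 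Applying both with $G=\hat G$, the operator $\hat G P$ has objective value no larger than that of $\hat G$, so by optimality the two objective values coincide, which forces $\tnorm{\hat G P}=\tnorm{\hat G}$.

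The crux is to upgrade this equality of trace norms into the operator identity $\hat G(I-P)=0$. Here I would set $A:=\hat G\hat G^*$ and $B:=(\hat G P)(\hat G P)^*=\hat G P\hat G^*$, two positive trace-class operators on $\hy$. Since $I-P$ is a projection, $A-B=\hat G(I-P)\hat G^*=[\hat G(I-P)][\hat G(I-P)]^*\succeq 0$, so $0\preceq B\preceq A$; by the L\"owner--Heinz inequality $t\mapsto\sqrt t$ is operator monotone, hence $\sqrt B\preceq\sqrt A$. But $\tnorm{\hat G}=\tr\sqrt A$ and $\tnorm{\hat G P}=\tr\sqrt B$ are equal, so the positive operator $\sqrt A-\sqrt B$ has vanishing trace and is therefore zero; consequently $A=B$, i.e. $\hat G(I-P)\hat G^*=0$, whence $\hat G(I-P)=0$. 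This gives $\hat G=\hat G P=\hat G\,\Phi^\dagger\Phi$, and taking adjoints $\hat G^*=P\hat G^*$ shows $\mathrm{range}(\hat G^*)\subseteq\mathrm{range}(P)=\mathrm{range}(\Phi^*)$, as required. The main obstacle is precisely this last step: unlike the essentially algebraic manipulations before it, it requires genuine operator-theoretic input --- operator monotonicity of the square root and the fact that a positive trace-class operator of zero trace vanishes --- and one also has to check the infinite-dimensional technicalities (well-definedness and boundedness of $\Phi^\dagger$, which is where finiteness of the rank of $\Phi$ enters, and trace-class membership of $\hat G$, guaranteed by $\tnorm{\hat G}<\infty$).
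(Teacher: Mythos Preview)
Your proof is correct and follows essentially the same strategy as the paper's: write $\hat G = \hat G P + \hat G(I-P)$, observe that the empirical risk depends only on $\hat G P$ while $\tnorm{\hat G P}\le\tnorm{\hat G}$, and use optimality to force equality. The one substantive difference is that the paper simply asserts that $\tnorm{G P}\le\tnorm{G}$ ``with equality holding if and only if $G_0=G$,'' whereas you supply an actual argument for this equality case via the L\"owner--Heinz inequality (operator monotonicity of $t\mapsto\sqrt t$) applied to $0\preceq \hat G P\hat G^*\preceq \hat G\hat G^*$ together with the fact that a positive trace-class operator of zero trace vanishes. This is a genuine and correct justification of a step the paper leaves implicit, so in that respect your version is more complete; otherwise the two proofs coincide.
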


The proof of this result is essentially equivalent to the one in \citep[Thm. 3][]{Abernethy2008}. We report it here for completeness.

\begin{proof}
For any $G\in\hy\otimes\hx$, consider the factorization 
\eqals{
    G = G_0 + G_\perp \qquad {\textrm{with}} \qquad G_0 = G(\Phi^\dagger\Phi) \qquad {\textrm and} \qquad  G_\perp(I - (\Phi^\dagger\Phi)).
}
Note that $(\Phi^\dagger\Phi)\in\hx\otimes\hx$ corresponds to the orthogonal projector of $\hx$ onto the range of $\Phi^*$ in $\hx$ (equivalently onto the span of $(\phi(x_i))_{i=1}^n$). By construction, we have that $\Phi G^* = \Phi G_0^*$. Hence $\hat\risk(G) = \hat\risk(G_0)$. However, since $(\Phi^\dagger\Phi)$ is an orthogonal projector, we have that 
\eqals{
    \tnorm{G_0} = \tnorm{G(\Phi^\dagger\Phi)} \leq \tnorm{G},
}
with equality holding if and only if $G_0 = G$. 

Now, if $\hat G$ is a minimizer of the trace norm regularized ERM we have 
\eqals{
    \hat\risk(\hat G_0) + \lambda\tnorm{\hat G_0} & \geq \hat\risk(\hat G) + \lambda\tnorm{\hat G} \\
    & = \hat\risk(\hat G_0) + \lambda\tnorm{\hat G},
}
which implies $\tnorm{\hat G_0}\geq\tnorm{\hat G}$. 

We conclude that $\hat G = \hat G_0 = \hat G (\Phi^\dagger\Phi)$. This corresponds to the range of $G^*$ being contained in the range of $\Phi$ as desired.  
\end{proof}

\begin{proposition}\label{prop:trace norm-unique-minimizer}
The empirical risk minimization for $\hat\risk(G)+\lambda\tnorm{G}$ admits a unique minimizer. 
\end{proposition}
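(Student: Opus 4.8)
Write $F(G) := \hat{\risk}(G) + \lambda\tnorm{G}$ on the Hilbert space $\hy\otimes\hx$, with the operatorial form $\hat{\risk}(G) = \tfrac1n\nor{\Phi G^* - \Psi}^2$ introduced above. The plan is to combine \cref{thm:representer-theorem-trace norm} with a strict-convexity argument. First I would record existence of a minimizer: $F$ is convex, lower semicontinuous, and coercive, the last point because $\hat{\risk}\geq 0$ and $\tnorm{G}\geq\nor{G}_\hs$ (the trace norm dominates the Hilbert--Schmidt norm), so $F(G)\geq\lambda\nor{G}_\hs\to\infty$. Hence the sublevel sets of $F$ are bounded, convex and closed, therefore weakly compact, and $F$ attains its infimum on $\hy\otimes\hx$.

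Next I would reduce the uniqueness question to a subspace. By \cref{thm:representer-theorem-trace norm}, every minimizer $\hat G$ of $F$ satisfies $\hat G(\Phi^\dagger\Phi) = \hat G$, i.e. it belongs to the closed subspace $\mathcal{V} := \{G\in\hy\otimes\hx : G(\Phi^\dagger\Phi) = G\}$. Taking adjoints and using that $\Phi^\dagger\Phi$ is the orthogonal projector of $\hx$ onto $\mathrm{ran}(\Phi^*)$, membership in $\mathcal{V}$ is equivalent to $G^* = (\Phi^\dagger\Phi)G^*$, i.e. to $\mathrm{ran}(G^*)\subseteq\mathrm{ran}(\Phi^*)$. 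Since every minimizer of $F$ lies in $\mathcal{V}$ and is a fortiori a minimizer of $F|_{\mathcal{V}}$, it suffices to prove that $F|_{\mathcal{V}}$ has at most one minimizer.

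The key step is that the linear map $G\mapsto\Phi G^*$ is injective on $\mathcal{V}$. Indeed, if $G_1,G_2\in\mathcal{V}$ and $\Phi G_1^* = \Phi G_2^*$, then for every $h\in\hy$ the vector $(G_1^*-G_2^*)h$ lies both in $\mathrm{ran}(\Phi^*)$ (since $G_1,G_2\in\mathcal V$) and in $\ker\Phi = \mathrm{ran}(\Phi^*)^{\perp}$, hence vanishes; thus $G_1^* = G_2^*$ and $G_1 = G_2$. Now $\hat{\risk}(G) = \tfrac1n\nor{\Phi G^* - \Psi}^2$ depends on $G$ only through $\Phi G^*$, and $v\mapsto\nor{v-\Psi}^2$ is strictly convex; composing with the injective linear map $G\mapsto\Phi G^*$ on $\mathcal{V}$ shows that $\hat{\risk}|_{\mathcal{V}}$ is strictly convex. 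Adding the convex term $\lambda\tnorm{\cdot}$ leaves $F|_{\mathcal{V}}$ strictly convex, so it has at most one minimizer; together with the existence established above (and the fact that this minimizer lies in $\mathcal V$), this proves the claim.

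The main obstacle is precisely the injectivity argument on $\mathcal{V}$, and the point requiring care is that the representer condition $\hat G(\Phi^\dagger\Phi) = \hat G$ must be translated into a statement about $\mathrm{ran}(\hat G^*)$ rather than $\mathrm{ran}(\hat G)$ — this is what makes $G\mapsto\Phi G^*$ injective on $\mathcal{V}$. Everything else (coercivity, lower semicontinuity, stability of strict convexity under addition of a convex term, uniqueness of minimizers of strictly convex functions) is standard and works verbatim in the possibly infinite-dimensional setting.
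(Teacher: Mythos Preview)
Your proof is correct and follows the same high-level strategy as the paper: invoke the representer theorem to restrict to the subspace $\mathcal{V}=\{G:G(\Phi^\dagger\Phi)=G\}$, and then show that the objective is strictly convex on that subspace. The execution differs, however. The paper computes the Hessian $\nabla^2\hat\risk(G)H=\tfrac{2}{n}H\Phi^*\Phi$, performs an SVD $\Phi=U\Sigma V^*$, and shows $\langle H,\nabla^2\hat\risk(G)H\rangle\geq\tfrac{2\sigma_{\min}^2}{n}\nor{H}_{\hs}^2$ for $H\in\mathcal{V}$, yielding \emph{strong} convexity with an explicit constant. You instead note that $G\mapsto\Phi G^*$ is injective on $\mathcal{V}$ (via $\ker\Phi\cap\mathrm{ran}(\Phi^*)=\{0\}$) and compose with the strict convexity of the quadratic, obtaining \emph{strict} convexity without any spectral computation. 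Your argument is more elementary and avoids the SVD entirely; the paper's argument buys an explicit modulus of convexity, though this is not actually used downstream. You also explicitly establish existence (coercivity via $\tnorm{G}\geq\nor{G}_{\hs}$, plus the direct method), which the paper's proof omits; this is a genuine improvement, since the proposition claims existence as well as uniqueness.
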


\begin{proof}
According to \cref{thm:representer-theorem-trace norm}, all minimizers of the trace norm regularized empirical risk minimization belong to the set 
\eqals{
    \mathcal{S} = \left\{G \in\hy\otimes\hx ~ \middle| ~ G(\Phi^\dagger\Phi) = G\right\}.
}
Hence we can restrict to the optimization problem
\eqals{
    \min_{G\in\mathcal{S}}~ \hat\risk(G) + \lambda\tnorm{G}.
}
Note that $\mathcal{S}$ is idetified by a linear relation and thus is a convex set and thus the problem above is a convex program. We now show that on $\mathcal{S}$ the ERM objective functional is actually strongly convex for the case of the least-squares loss. To see this, let us consider the Hessian of $\hat\risk(\cdot)$. We have that, the gradient corresponds to
\eqals{
    \nabla\hat\risk(G) = \frac{2}{n}\left(G \Phi^*\Phi - \Psi^*\Phi \right),
}
and therefore the Hessian is the operator $\nabla^2\hat\risk(G):\hy\otimes\hx\to\hy\otimes\hx$ such that 
\eqals{
    \nabla^2\hat\risk(G)H = \frac{2}{n} H \Phi^*\Phi,
}
for any $H\in\hy\otimes\hx$ \citep[see e.g.][]{kollo2006advanced}. Now, we have that for any $H\in\mathcal{S}$
\eqals{
    \scal{H}{\nabla^2\hat\risk(G) H}_{\hy\otimes\hx} = \frac{2}{n}\scal{H}{H\Phi^*\Phi}_{\hy\otimes\hx} = \frac{2}{n}\tr(H^*H\Phi^*\Phi).
}
Now, let $r\leq n$ be the rank of $\Phi$ and consider the singular value decomposition of $\Phi = U\Sigma V^*$, with $U\in\R^{n\times r}$ a matrix with orthonormal columns $V\in\hx\to\R^r$ a linear operator such that $V^*V = I\in\R^{r\times r}$ and $\Sigma\in\R^{r \times r}$ a diagonal matrix with {\em all positive diagonal elements}. Then, 
\eqals{
    \tr(H^*H\Phi^*\Phi) = \tr(H^*H V\Sigma^2V^*) = \tr(V^*H^*HV\Sigma^2) \geq \sigma_{\min}^2 \nor{HV}_{\hy\otimes\R^r}^2,
}
where $\sigma_{\textrm min}^2$ denotes the smallest singular value of $\Sigma$ (equivalently, $\sigma_{\textrm min}$ is the smallest singular value of $\Phi$ {\em greater than zero}). 

Now, recall that $H\in\mathcal{S}$. Therefore
\eqals{
    H = H(\Phi^\dagger\Phi) = H VV^*,
}
which implies that 
\eqals{
    \nor{H}_{\hy\otimes\hx}^2 = \tr(H^*H) = \tr(VV^*H^*HVV^*) = \tr(V^*H^*HV(V^*V)) = \tr(V^*H^*HV) = \nor{HV}_{\hy\otimes\R^r}^2,
}
where we have used the orthonormality $V^*V = I\in\R^{r\times r}$. 

We conclude that
\eqals{
    \scal{H}{\nabla^2\hat\risk(G) H}_{\hy\otimes\hx} \geq \frac{2\sigma_{\textrm min}^2}{n} \nor{H}_{\hy\otimes\hx}^2,
}
for any $H\in\mathcal{S}$. Note that $\sigma_{\textrm min}>0$ is greater than zero since it is the smallest singular value of $\Phi$ greater than zero and $\Phi$ has finite rank $r\leq n$. Hence, on $\mathcal{S}$, the function $\hat\risk(G)$ is strongly convex. As a consequence also the objective functional $\hat\risk(G) + \lambda\tnorm{G}$ is strongly convex and thus admits a unique minimizer, as desired. 
\end{proof}

We conclude this section by reporting the result stating the equivalence between Ivanov and Tikhonov for trace norm regularization. 

In the following we will denote by $G_\lambda$ the minimizer of the Tikhonov regularization problem corresponding to minimizing $\hat\risk(G)+\lambda\tnorm{G}$ and by $G^I_\gamma$ the minimizer of the Ivanov regularization problem introduced in \cref{eq:ivanov_tn}, namely
\eqals{
    \min_{\tnorm{G}\leq\gamma} ~ \hat\risk(G).
}
We have the following.

\begin{theorem}\label{thm:equivalence-tikhonov-ivanov}
For any $\gamma>0$ there exists $\lambda(\gamma)$ such that $G_{\lambda(\gamma)}$ is a minimizer of \cref{eq:ivanov_tn}. Moreover, for any $\lambda>0$ there exists a $\gamma = \gamma(\lambda)>0$ such that $G_\lambda$ is a minimizer of \cref{eq:ivanov_tn}. 
\end{theorem}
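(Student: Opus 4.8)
The plan is to establish both implications via the classical Lagrangian correspondence between constrained and penalized convex programs, relying on convexity of $\hat\risk$ and of $\tnorm{\cdot}$, on the Slater point $G=0$, and on the uniqueness of the Tikhonov minimizer proved in \cref{prop:trace norm-unique-minimizer}. This is a standard argument (see e.g. \citep{Oneto:2016}); the only care needed is that $\hat\risk$ is not strictly convex on all of $\hy\otimes\hx$, so one must first reduce to a subspace where it is.

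First I would dispatch the direction from Tikhonov to Ivanov, which is elementary. Fix $\lambda>0$, let $G_\lambda$ be the minimizer of $\hat\risk(G)+\lambda\tnorm{G}$, and set $\gamma(\lambda):=\tnorm{G_\lambda}$. If there were a $G'$ with $\tnorm{G'}\le\gamma(\lambda)$ and $\hat\risk(G')<\hat\risk(G_\lambda)$, then $\hat\risk(G')+\lambda\tnorm{G'}\le\hat\risk(G')+\lambda\gamma(\lambda)<\hat\risk(G_\lambda)+\lambda\tnorm{G_\lambda}$, contradicting the optimality of $G_\lambda$. Hence $G_\lambda$ solves \cref{eq:ivanov_tn} with constraint level $\gamma(\lambda)$.

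For the converse, fix $\gamma>0$. Using the representer theorem \cref{thm:representer-theorem-trace norm} I would reduce the Ivanov problem \cref{eq:ivanov_tn} to the finite-dimensional subspace $\mathcal S$ of operators whose adjoint has range in the span of $(\phi(x_i))_{i=1}^n$, on which $\hat\risk$ is strongly convex (this is exactly the computation in the proof of \cref{prop:trace norm-unique-minimizer}); in particular \cref{eq:ivanov_tn} has a unique solution $\hat G^I_\gamma$. If an unconstrained minimizer of $\hat\risk$ is already feasible, it coincides with $\hat G^I_\gamma$ and the claim holds with $\lambda(\gamma)=0$ (or with any sufficiently small $\lambda>0$). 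Otherwise the constraint is active, $\tnorm{\hat G^I_\gamma}=\gamma$, and since $G=0$ is strictly feasible ($\tnorm{0}=0<\gamma$) Slater's condition holds, so strong duality for this convex program yields a multiplier $\lambda(\gamma)\ge 0$ for which $\hat G^I_\gamma$ minimizes $\hat\risk(G)+\lambda(\gamma)(\tnorm{G}-\gamma)$, equivalently $\hat\risk(G)+\lambda(\gamma)\tnorm{G}$. By uniqueness of the Tikhonov minimizer (\cref{prop:trace norm-unique-minimizer}) this forces $G_{\lambda(\gamma)}=\hat G^I_\gamma$, which is the assertion.

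The main obstacle will be the careful invocation of strong duality for the nonsmooth constrained problem: one must check Slater's condition (immediate here thanks to $G=0$) and then extract the Lagrange multiplier through the KKT / subdifferential conditions for the trace-norm constraint, while simultaneously keeping track of the degenerate case in which the unconstrained least-squares solution is already feasible. Reducing to the subspace $\mathcal S$ before invoking these facts is what makes the argument rigorous — $\hat\risk$ fails to be strictly convex on all of $\hy\otimes\hx$ but is strongly convex on $\mathcal S$ — and this is precisely where \cref{prop:trace norm-unique-minimizer} is used to glue the Ivanov and Tikhonov solutions together.
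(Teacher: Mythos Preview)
Your proposal is correct and follows essentially the same route as the paper: reduce to the subspace $\mathcal S$ via the representer theorem, invoke Slater's condition and strong duality to extract the multiplier $\lambda(\gamma)$, and then use the uniqueness result of \cref{prop:trace norm-unique-minimizer} to identify the Tikhonov and Ivanov solutions; the Tikhonov-to-Ivanov direction is argued identically in both. One small slip: the parenthetical ``or with any sufficiently small $\lambda>0$'' is not right---when the unconstrained minimizer $G_0\in\mathcal S$ lies strictly inside the trace-norm ball, $G_\lambda$ for $\lambda>0$ is generically different from $G_0$, and strong convexity of $\hat\risk$ on $\mathcal S$ then forces $\hat\risk(G_\lambda)>\hat\risk(G_0)$, so $G_\lambda$ cannot solve \cref{eq:ivanov_tn}; only $\lambda(\gamma)=0$ works in that degenerate case.
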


\begin{proof}
We first consider the case where, given a $\gamma>0$ we want to relate a solution of the Ivanov regularization problem to that of Tikhonov regularization. We will show that there exists $G^I_\gamma$ and $\lambda(\gamma)$ such that $G_\lambda(\gamma) = G^I_\gamma$. In particular we will show that such equality holds for $G^I_\gamma$ the solution of minimal trace norm in the set of solutions of the Ivanov problem.

Consider again the linear subspace 
\eqals{
    \mathcal{S} = \left\{G\in\hy\otimes\hx ~\middle|~ G = G(\Phi^\dagger\Phi) \right\}.
}
We can restrict the original Ivanov problem to 
\eqals{
    \min_{\substack{\tnorm{G}\leq\gamma\\G\in\mathcal{S}}} ~ \hat\risk(G).
}
Note that the above is still a convex program and attains the same minimum value of the original Ivanov problem in $G^I_\gamma$. 

Moreover, we can assume $\gamma = \tnorm{G^{I}_\gamma}$ without loss of generality. Indeed, if $\gamma>\tnorm{G^{I}_\gamma}$ we still have that $G^{I}_\gamma$ is a minimizer of $\hat\risk(G)$ over the smaller set of operators $\tnorm{G}\leq \gamma' = \tnorm{G^{I}_\gamma}$. 

Now, consider the Lagrangian associated to this constrained problem problem, namely
\eqals{
    L(G,\lambda,\nu) = \hat\risk(G) + \lambda(\tnorm{G} - \gamma) + \nu(G - G(\Phi^\dagger\Phi)).
}
By Slater's constraint qualification \citep[see e.g. Sec. 5 in][]{Boyd:2004}, we have that 
\eqals{
    \max_{\lambda\geq0,\nu}\min_{G\in\hy\otimes\hx} L(G,\lambda,\nu) = \min_{\substack{\tnorm{G}\leq\gamma\\G\in\mathcal{S}}} \hat\risk(G).
}
Denote by $(\lambda(\gamma),G_{\lambda(\gamma)},\nu_\gamma)$ the pair form which the saddle point of $L(G,\lambda,\gamma)$ is attained. Note that since $\gamma$ is a constant
\eqals{
    G_{\lambda(\gamma)} = \argmin_{G\in\hy\otimes\hx} \hat\risk(G) + \lambda(\gamma) (\tnorm{G} - \gamma) + \nu_\gamma(G - G(\Phi^\dagger\Phi)) = \argmin_{G\in\hy\otimes\hx} \hat\risk(G) + \lambda(\gamma)\tnorm{G},
}
where we have made use of the representer theorem from \cref{thm:representer-theorem-trace norm}, which guarantees any minimizer of $\hat\risk(G) + \lambda(\gamma)\tnorm{G}$ to belong to the set $\mathcal{S}$ and this satisfy $G = G(\Phi^\dagger\Phi)$. Therefore, we have 
\eqals{
    \hat\risk(G_{\lambda(\gamma)}) + \lambda(\gamma)\tnorm{G_{\lambda,\gamma}} - \lambda(\gamma)\gamma = \hat\risk(G^{I}_\gamma),
}
recalling that $\gamma = \tnorm{G^{I}_\gamma}$, this implies that 
\eqals{
    \hat\risk(G_{\lambda,\gamma}) + \lambda(\gamma)\tnorm{G_{\lambda,\gamma}} = \hat\risk(G^{I}_{\gamma}) + \lambda(\gamma)\tnorm{G^{I}_{\gamma}}.
}
Since by \cref{prop:trace norm-unique-minimizer} the minimizer of $\hat\risk(G)+\lambda\tnorm{G}$ is unique, it follows that $G_{\lambda,\gamma} = G^{I}_\gamma$ as desired.

The vice-versa is straightforward: let $\lambda>0$ and $G_\lambda$ be the minimizer of the Tikhonov problem. Then, for any $G\in\hy\otimes\hx$
\eqals{
    \hat\risk(G_\lambda)+\lambda\tnorm{G_\lambda} \leq \hat\risk(G)+\lambda\tnorm{G}.
}
If $\tnorm{G}\leq\tnorm{G_\lambda}$, the inequality above implies 
\eqals{
    \hat\risk(G_\lambda)\leq\hat\risk(G),
}
which implies that $G_\lambda$ is a minimizer for the Ivanov problem with $\gamma(\lambda) = \tnorm{G_\lambda}$, namely $G_\lambda = G^I_{\gamma(\lambda)}$ as desired.
\end{proof}

\end{document}